\newcommand{\bcol}[1]{\emph{#1}}
\newcommand{\minimize}[1]{\underset{{#1}}{\text{minimize}}}
\newcommand{\ra}[1]{\renewcommand{\arraystretch}{#1}}
\DeclareMathOperator*{\argmin}{argmin}
\DeclareMathOperator*{\EE}{\mathbb{E}}
\newcommand{\nando}[1]{\textcolor{purple}{$^{\sl NF:}$ #1}}
\definecolor{customblue}{HTML}{00008B} 
\theoremstyle{plain}
\newtheorem{theorem}{Theorem}[section]
\newtheorem{lemma}[theorem]{Lemma}
\theoremstyle{definition}
\newtheorem{definition}[theorem]{Definition}
\newtheorem{assumption}[theorem]{Assumption}
\theoremstyle{remark}
\title{Training-Free Constrained Generation With Stable Diffusion Models}
\author{
  \hspace{-20pt}
  Stefano Zampini\thanks{Equal contribution.} \\
  \hspace{-20pt}
  Polytechnic of Turin \\
  \hspace{-20pt}
  stefano.zampini@polito.it \\
  \And
  \hspace{-50pt}
  Jacob~K.~Christopher\footnotemark[1] \\
  \hspace{-50pt}
  University of Virginia \\
  \hspace{-50pt}
  csk4sr@virginia.edu \\
  \AND
  Luca Oneto \\
  University of Genoa \\
  luca.oneto@unige.it \\
  \And
  Davide Anguita \\
  University of Genoa \\
  davide.anguita@unige.it \\
  \And
  Ferdinando Fioretto\thanks{Contact author} \\
  University of Virginia \\
  fioretto@virginia.edu
}
\begin{document}

\maketitle

\begin{abstract}
Stable diffusion models represent the state-of-the-art in data synthesis across diverse domains and hold transformative potential for applications in science and engineering, e.g., by facilitating the discovery of novel solutions and simulating systems that are computationally intractable to model explicitly.
While there is increasing effort to incorporate physics-based constraints into generative models, existing techniques are either limited in their applicability to latent diffusion frameworks or lack the capability to strictly enforce domain-specific constraints.
To address this limitation this paper proposes a novel integration of stable diffusion models with constrained optimization frameworks, enabling the generation of outputs satisfying stringent physical and functional requirements. 
The effectiveness of this approach is demonstrated through material design experiments requiring adherence to precise morphometric properties, challenging inverse design tasks involving the generation of materials inducing specific stress-strain responses, and copyright-constrained content generation tasks. {All code has been released at \href{https://github.com/RAISELab-atUVA/Constrained-Stable-Diffusion}{\textcolor{customblue}{https://github.com/RAISELab-atUVA/Constrained-Stable-Diffusion}}.}
\end{abstract}

\section{Introduction}

Diffusion models have emerged as powerful generative tools, synthesizing structured content from random noise through sequential denoising processes \cite{sohl2015deep, ho2020denoising}. These models have driven significant advancements across diverse domains, including engineering \cite{wang2023diffusebot, zhong2023guided}, automation \cite{carvalho2023motion, janner2022planning}, chemistry \cite{anand2022protein, hoogeboom2022equivariant}, and medical analysis \cite{cao2024high, chung2022score}. The advent of stable diffusion models has further extended these capabilities, enabling efficient handling of high-dimensional data and more complex distributions \cite{rombach2022highresolutionimagesynthesislatent}. This scalability makes stable diffusion models particularly promising for applications in science and engineering, where data is highly complex and fidelity is paramount.

However, despite their success in generating coherent content, diffusion models face a critical limitation when applied to domains that require outputs to adhere to strict criteria. In scientific and engineering contexts, generated data must go beyond merely resembling real-world examples; it must rigorously comply with predefined specifications, such as physical laws, safety standards, or design constraints grounded in first principles. When these criteria are not met, the outputs may become unreliable, unsuitable for practical use, or even hazardous, undermining trust in the model's applicability. 
{\em Bridging this gap is crucial for realizing the potential of diffusion models in scientific applications.} 

Recent research has reported varying success in \textit{augmenting model training} with (often specialized classes of) constraints and providing adherence to desired properties in selected domains \cite{frerix2020homogeneous,fishman2023diffusion, fishman2024metropolis}.
Many of these methods, however, are restricted to simple constraint sets or feasible regions that can be easily approximated, such as a simplex, L2-ball, or polytope. These assumptions break down in scientific and engineering tasks where constraints may be non-linear, non-convex, or even non-differentiable. 
Others are fundamentally limited, as training-time enforcement offers only distribution-level adherence to constraints rather than per-sample guarantees, even in convex settings, and cannot generalize to unseen or altered constraints without retraining.
While \textit{inference-time} enforcement methods address these shortcomings, these approaches modify the reverse processes in the original data space \cite{christopher2024constrained, liu2024mirror}. This makes them incompatible with diffusion models like Stable Diffusion, which operate on learned lower-dimensional representations of the data. 
Latent-space variants have begun to appear, but they rely on special measurement operators \cite{song2023solving,zirvi2024diffusion,rout2023solving} or on learned soft penalties \cite{engel2017latent,shi2025clad}, thus limiting general applicability to the problems of interest in this work.

We address this challenge by integrating a proximal mappings into the reverse steps of pretrained stable diffusion models. The paper makes the following contributions:
At each iteration, the generated latent is adjusted with a gradient descent step on the score field followed by a proximal update for constraint correction, all without retraining the network. When the constraint set is convex we prove that every iterate remains in the feasible region and that the Markov chain converges almost surely to a feasible point. The same algorithm, kept unchanged, extends to strongly non-convex constraints and even to constraints that can be checked only through a black-box simulator by estimating stochastic subgradients with finite differences. Experiments on {\bf (i)} porous-material synthesis with exact porosity and tortuosity, {\bf (ii)} meta-material inverse design that matches target stress-strain curves through a finite-element solver in the loop, and {\bf (iii)} content generation subject to copyright filters show near-zero violations and state-of-the-art performance in constrained generation.

\section{Preliminaries: Diffusion Models}

\textbf{Score-Based Diffusion Models}~\cite{song2019generative, song2020score} learn a data distribution by coupling a noising (forward) Markov chain with a learned denoising (reverse) chain. 
Let $\{\bm{x}_t\}_{t=0}^T$ be the sequence of diffusion states with $\bm{x}_0$ drawn from the original data distribution \(p_\mathrm{data}(\bm{x}_0)\). 
In the \textit{forward process}, Gaussian noise is added according to a fixed variance schedule $\bar\alpha_t$, where \(\bar\alpha_{t+1} \leq \bar\alpha_{t}\),
so that the learned marginal $q(\bm{x}^T)$ approaches $\mathcal{N}(\bm 0,\bm I)$ as $t\to T$.
A score network  $s_\theta(\bm{x}_t,t)$ is trained to learn the score function \(s_{\theta}(\bm{x}_t, t) = \nabla_{\bm{x}_t} \log q(\bm{x}_t|\bm{x}_0)\approx \nabla_{\bm{x}_t} \log p(\bm{x}_t|\bm{x}_0)\), where the approximation holds under the assumption that $q(\bm{x}_t|\bm{x}_0)$ is a close proxy for the true distribution $p(\bm{x}_t|\bm{x}_0)$.
The network is trained by applying noise $\epsilon\sim\mathcal N(\bm{0},\bm{I})$ perturbations to the \(\bm{x}_0\), such that
$
    \bm{x}_t = \sqrt{\bar\alpha_t}\bm{x}_0 + \sqrt{1-\bar\alpha_t}\epsilon,
$
with the training objective minimizing the predicted error in the score estimate of the noisy samples:
\begin{equation}
    \label{eq:diffusion_training}
    \min_\theta 
    \EE_{t \sim [1,T],\;\bm x_0\sim p_{\rm data},\;\epsilon\sim\mathcal N(0,I)}
      \left[ \left\|
        s_\theta(\bm x_t,t)
        -\nabla_{\bm x_t}\log q(\bm x_t\mid\bm x_0)
      \right\|_2^2 \right].
\end{equation}
In the \textit{reverse process}, the trained score network \(s_\theta(\bm{x}_t, t)\) is used to iteratively reconstruct data samples from the noise distribution $p(\bm{x}_T)$. At each step $t$, the model approximates the reverse transition, effectively reversing the diffusion process to \emph{sample} high-quality data samples. 
Notably, Denoising Diffusion Probabilistic Models have been shown to be mathematically equivalent 
~\cite{song2020score,ho2020denoising}.

\textbf{Stable Diffusion.}
In latent-diffusion variants such as \emph{Stable Diffusion} \cite{rombach2022high,podell2023sdxl}, the same scheme operates in a compressed latent space. 
The architecture uses an encoder-decoder pair $\mathcal{E}$ and $\mathcal{D}$, where the encoder $\mathcal{E}$ maps the high-dimensional image data to a latent space, denoted $\mathbf{z}_t$, and the decoder $\mathcal{D}$ reconstructs the final image from the latent space after the diffusion model has operated on it. The training objective remains consistent with Equation \eqref{eq:diffusion_training}, with the exception that noise is applied to \(\mathbf{z}_0 = \mathcal{E}(\bm{x}_0)\) as opposed to directly to \(\bm{x}_0\) as in ambient space diffusion models.
The latent diffusion model is thus trained to denoise over the latent space as opposed to the image space. Notice, however, that training the denoiser does not directly interact with the decoder, as the denoiser's loss is defined over the latent space and does not connect to the finalized samples. This consideration is relevant to the design choice taken by this paper in the proposed solution, discussed in Section \ref{sec:space_correction}. 
After iterative denoising, the final sample can be obtained by decoding $\mathbf{z}_0$ with $\mathcal{D}$.

\section{Projected Langevin Dynamics}
\label{sec:pld}

Score-based diffusion models sample by running annealed Langevin dynamics, a form of Langevin Monte Carlo adapted to multiple noise levels. At each level $t$, one takes $M$ iterations of
$$
    \bm{x}_t^{(i+1)} \;=\; \bm{x}_t^{(i)} +\gamma_t\, \bm{s}_\theta\bigl(\bm{x}_t^{(i)},t\bigr)+\sqrt{2\gamma_t} \bm{\epsilon},\quad\bm{\epsilon}\sim\mathcal N(\bm{0},\bm{I}),
$$
where $\bm{s}_\theta(\bm{x},t)\approx\nabla_x\log q(\bm{x})$ is the fixed, learned score and $\gamma_t$ is a step size. This update performs stochastic gradient ascent on the learned log-density $\log q(\bm{x})$ with added Gaussian noise \cite{welling2011bayesian}. 
This optimization is performed either directly \cite{song2019generative} or as the corrector step within a predictor-corrector framework \cite{song2020score}; in both cases, the result is an approximate gradient ascent on the density function, subject to noise. Provided this understanding, \citet{christopher2024constrained} showed that for a constraint set $C$, enforcing $\bm{x}_t^{(i+1)}\in \bm{C}$ turns sampling into a series of constrained problems.

Specifically, this framing shifts the traditional diffusion-based sampling procedure into a series of independent, per-timestep optimization subproblems, each responsible for denoising a single step while enforcing constraints: 
\begin{equation}
    \label{eq:projected-update}
    \bm{x}_{t}^{(i+1)} = \mathcal{P}_{\mathbf{C}} \left(\bm{x}_{t}^{(i)} + 
    \gamma_t \nabla_{\bm{x}_{t}^{(i)}} \log q(\bm{x}_{t}|\bm{x}_0) + \sqrt{2\gamma_t} \bm{\epsilon}\right),
\end{equation}
where the projection operator $\mathcal{P}_{\mathbf{C}}(\bm{x}) = \argmin_{\bm{y} \in \mathbf{C}} \|\bm{y} - \bm{x}\|_{2}^2$ returns the nearest feasible sample.
Note that Langevin dynamic annealing occurs in an external loop decreasing the noise level across $t = T, \ldots, 1$, so that noise level $\bm{\epsilon}$ and step size $\gamma_t$ remain fixed inside each subproblem. 
As $t \to 0$ the noise vanishes and the Langevin dynamics converge to a deterministic gradient ascent on the learned density function.  
%
%
%
Note also that annealed Langevin dynamics is known to reach an $\varepsilon$-stationary point of the objective with high probability~\cite{xu2018global}. In diffusion processes, the negative learned density function, $-\log q(\bm{x}_{t}|\bm{x}_0)$, serves as the objective, so Equation~\eqref{eq:projected-update} can be viewed as \emph{projected gradient descent} on this landscape. The results of~\cite{xu2018global} extend to non-convex settings, giving further theoretical support to the constrained sampler adopted here. 


While this approach is applicable when diffusion models operate across the image space, it cannot be directly adapted to the context of stable diffusion as $\mathbf{C}$ cannot be concretely represented in the latent space where the reverse process occurs. Other works have attempted to impose select criteria on latent representations, but these methods rely on learning-based approaches that struggle in out-of-distribution settings \cite{engel2017latent,shi2025clad}, making them unsuitable for scenarios requiring strict constraint adherence. This limitation likely explains their inapplicability in the engineering and scientific applications explored by \cite{christopher2024constrained,fishman2023diffusion,fishman2024metropolis}.

\section{Latent Space Correction}
\label{sec:space_correction}

Addressing the challenge of imposing constraints directly in the latent space hinges on a key insight: while constraints may not be representable in the latent domain, their satisfaction can be evaluated at any stage of the diffusion process. Indeed, the decoder, $\mathcal{D}$, facilitates  provides a differentiable transformation from the latent representation to the image space, where constraint violations can be directly quantified. Therefore, when the constraint function is differentiable, or even if its violations can be measured via a differentiable mechanism, gradient-based methods can be used to iteratively adjust the latent representation throughout the diffusion process to ensure constraint adherence.

\subsection{Imposing Ambient Space Constraints on the Latent}
This section begins by describing how constraints defined in the ambient space can be imposed on the latent during the denoising process.
First, following the constrained optimization framework discussed in Section~\ref{sec:pld}, at each noise level $t$ we recast the reverse process for \emph{Stable Diffusion} as the following constrained optimization problem:
\begin{align}
    \label{eq:constrained-diffusion-g}
    \minimize{\mathbf{z}_{T}, \ldots, \mathbf{z}_1} &\;
    \sum_{t = T, \ldots, 1}- \log q(\mathbf{z}_{t}|\mathbf{z}_0) \qquad
    \textrm{s.t.:}  \quad \mathbf{g}(\mathcal{D}(\mathbf{z}_t)) = 0,
\end{align}
where ${\cal D}$ maps the latent representation $\mathbf{z}_t$ into its original dimensions and $\mathbf{g}$ is a differentiable vector-valued function $\mathbf{g}:\mathbb{R}^{d}\!\to\![0,\infty]$ measuring the distance to the constraint set \(\mathbf{C}\).
At each iteration of the diffusion process, our goal is to restore feasibility with respect to $\mathbf{g}$. 

As the constraint function can only be meaningfully represented in the image space, its gradients with respect to the latent variables are computed by evaluating the function on the decoded representation $\mathcal{D}(\mathbf{z}_t)$. This process is facilitated by the computational graph:
\begin{equation}
    \mathbf{z}_t \leftarrow \mathcal{D}(\mathbf{z}_t) = \bm{x}_t \leftarrow \mathbf{g}(\bm{x}_t) = \inf_{\bm{y} \in \mathbf{C}}\|\bm{y} - \bm{x}_t  \|.
\end{equation}
This enables iterative updates that reduce constraint violations by backpropagating gradients from the constraint function directly to the latent representation as, 
$$
\nabla_{\mathbf{z}_t} \mathbf{g} = \bigl({\partial\mathcal D}/{\partial \mathbf{z}_t}\bigr)^{\top}
\nabla_{\bm{x}_t} \mathbf{g}.
$$
Thus constraint information, computed where it is meaningful (image space), can be back-propagated through the frozen decoder to steer the latent variables. The method leaves the score network and the decoder unchanged, adds no learnable parameters, and enables feasibility enforcement at every step even when $\bm{C}$ is non-convex or specified only by a black-box simulator.

\subsection{Proximal Langevin Dynamics}

The representation of latent diffusion as a constrained optimization task enables the application of established techniques from constrained optimization. To this end, this section discusses a generalization of the orthogonal projections used in projected Langevin dynamics, formulated through a \emph{Proximal Langevin Dynamics} scheme.
Let a constraint be encoded by a proper, lower-semicontinuous convex penalty $\mathbf{g}:\mathbb{R}^{d}\!\to\![0,\infty]$ whose zero set coincides with the feasible region. After each noisy ascent step we apply a proximal map:
\begin{equation}
    \label{eq:proximal-update}
    \bm{x}_{t}^{(i+1)} = \text{prox}_{\lambda \mathbf{g}} \underbrace{\left(\bm{x}_{t}^{(i)} + \gamma_t \nabla_{\mathbf{z}_{t}^{(i)}} \log q(\bm{x}_{t}|\bm{x}_0) + \sqrt{2\gamma_t}\bm{\epsilon}\right)}_{\text{Langevin Dynamics Step}},
\end{equation}
with the proximal operator defined as:
\begin{equation}
    \label{eq:proximal-operator}
    \text{prox}_{{\lambda \mathbf{g}}} (\bm{x}_t) = \arg\min_{\bm{y}} \,\bigl\{ \mathbf{g}(\bm{y}) + \tfrac{1}{2\lambda}\|\bm{y} - \bm{x}_t\|_2^2 \bigr\}.
\end{equation}
This operator balances maintaining similarity to the updated sample and adhering to the constraint function $\mathbf{g}$ as weighted by hyperparameter $\lambda$. 
Choosing $\mathbf{g}$ as the indicator of a set, reproduces the familiar projection step introduced earlier, but the proposed Proximal Langevin Dynamics also accommodates non-smooth regularizers, composite penalties, and constraints specified only through inner optimization subroutines~\cite{parikh2014proximal,combettes2011proximal,brosse2017sampling}.  
Thus, because proximal maps can be evaluated efficiently even for implicit or geometrically intricate constraints, Proximal Langevin Dynamics extends Langevin-based sampling to settings where explicit projections are impractical while preserving the convergence guarantees of the original scheme. The next results provides a characterization on convergence to \textbf{(i)} the constraint set and \textbf{(ii)} the original data distribution \(p_{\rm data}\).

\begin{theorem}[Convergence to the Constraint Set]
\label{thm:cadm-convergence}
Let $\bm{C}$ be non-empty and $\beta$-prox-regular in the sense of \cite{rockafellar2009variational}, Def.~13.27, the score network satisfy $\| \nabla_{\bm{x}_t} \log q(\bm{x}_t) \| \leq G$ (a standard consequence of the bounded-data domain after normalization), and \(\mathcal{D}\) is $\ell$-Lipschitz such that \(\|\nabla\mathcal{D}(\mathbf{z})\| \leq \ell\). Then, for positive step sizes $\gamma_t, \le \frac{1}{2G^{2}}\beta$, the following inequality holds for the distance to $\bm{C}$:
\begin{align*}
  \operatorname{dist}\bigl(\mathcal{D}( \mathbf{z}_t'), \bm{C} \bigr)^{2}
  \;\le\;(1-2\beta' \gamma_{t+1})\,&
  \operatorname{dist}\bigl(\mathcal{D}(\mathbf{z}_{t+1}'),\bm{C} \bigr)^{2} + 
  \gamma_{t+1}^{2} G^{2},
       \tag{non-asymptotic feasibility}
\end{align*}
where $\mathbf{z}_{t}'$ is the pre-proximal mapping iterate, $g$ is $L$-smooth, \(\beta'=\beta/(\ell L)\), 
and $\operatorname{dist}(\mathbf{z}_{t}',\bm{C})$ is the distance from $\mathbf{z}_{t}'$ to the feasible set $\bm{C}$. 
\end{theorem}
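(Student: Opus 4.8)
The plan is to control the squared feasibility gap $d_t^2 := \operatorname{dist}(\mathcal{D}(\mathbf{z}_t'),\bm{C})^2$ over a single reverse step and show it satisfies the stated one-step contraction. The first move is to split the transition that maps the pre-proximal iterate $\mathbf{z}_{t+1}'$ into the next pre-proximal iterate $\mathbf{z}_t'$ into its two constituent operations: the proximal correction $\operatorname{prox}_{\lambda\mathbf{g}}$, which reels the iterate back toward $\bm{C}$, and the Langevin ascent step, which perturbs the decoded point by the score drift plus injected noise. I would bound the effect of each operation on $\operatorname{dist}(\cdot,\bm{C})$ separately and then recombine.

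For the proximal step I would invoke $\beta$-prox-regularity of $\bm{C}$ (Def.~13.27 of \cite{rockafellar2009variational}) together with $L$-smoothness of $\mathbf{g}$. Prox-regularity yields, in a tube around $\bm{C}$, single-valuedness of the proximal map and a quadratic-growth / error-bound estimate of the shape $\mathbf{g}(\bm{x})\ge\tfrac{\beta}{2}\operatorname{dist}(\bm{x},\bm{C})^2$, which certifies that one proximal descent step on $\mathbf{g}$ is a local contraction toward the feasible set. Transporting this contraction from image space back through the frozen decoder introduces the factor $\ell$, since $\|\nabla\mathcal{D}\|\le\ell$ converts a latent displacement into an image displacement of at most $\ell$ times its size; reconciling the smoothness modulus $L$ with the prox-regularity modulus $\beta$ and the decoder bound $\ell$ produces the effective rate $\beta'=\beta/(\ell L)$ and the intermediate estimate $\operatorname{dist}(\mathcal{D}(\mathbf{z}_{t+1}),\bm{C})^2\le(1-2\beta'\gamma_{t+1})\,\operatorname{dist}(\mathcal{D}(\mathbf{z}_{t+1}'),\bm{C})^2$, where $\mathbf{z}_{t+1}$ denotes the post-proximal point.

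For the Langevin step I would use $\|\nabla_{\bm{x}_t}\log q(\bm{x}_t)\|\le G$ together with the decoder Lipschitz bound to cap the induced image-space displacement: the drift moves $\mathcal{D}(\mathbf{z})$ by at most $\gamma_{t+1}G$. Expanding $d_t^2$ via the triangle inequality for $\operatorname{dist}(\cdot,\bm{C})$ and squaring leaves a drift-squared residual $\gamma_{t+1}^2G^2$ together with a cross term; taking the conditional expectation over $\bm{\epsilon}\sim\mathcal{N}(\bm{0},\bm{I})$ annihilates the mean-zero part of that cross term, and the remaining deterministic cross term is absorbed with a Young inequality whose weight is tuned so that, under the step-size cap $\gamma_{t+1}\le\beta/(2G^2)$, the net multiplier on $d_{t+1}^2$ collapses to $(1-2\beta'\gamma_{t+1})\in[0,1)$. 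Combining this with the proximal contraction of the previous paragraph yields the claimed recursion $d_t^2\le(1-2\beta'\gamma_{t+1})\,d_{t+1}^2+\gamma_{t+1}^2G^2$.

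The step I expect to be the main obstacle is the prox-regularity bookkeeping: Def.~13.27 only guarantees the contraction and single-valuedness of the proximal map inside a neighborhood of $\bm{C}$ of radius on the order of $1/\beta$, so I must argue that no iterate is ever ejected from this tube. This is exactly what the step-size restriction $\gamma_{t+1}\le\beta/(2G^2)$ buys, by limiting how far a single Langevin drift can push the decoded iterate before the next proximal map pulls it back. The secondary delicacy is that the contraction lives in image space while the update acts in latent space, so the non-expansiveness of $\operatorname{prox}$ and the Lipschitz bound on $\mathcal{D}$ must be threaded through the composition $\mathbf{z}\mapsto\mathcal{D}(\mathbf{z})\mapsto\mathbf{g}$ without accumulating spurious factors of $\ell$ or $L$ beyond those already encoded in $\beta'$.
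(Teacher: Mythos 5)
Your high-level decomposition matches the paper's: split each reverse step into a proximal/projection correction plus a Langevin drift, bound the drift's contribution by $\gamma_{t+1}^{2}G^{2}$ using $\|s_t\|\le G$, and obtain $\beta'=\beta/(\ell L)$ by composing the $L$-smoothness of $g$ with the $\ell$-Lipschitz decoder via the chain rule --- that last piece is exactly the paper's Part~2 argument. The genuine gap is where the factor $(1-2\beta'\gamma_{t+1})$ comes from. You attribute it to the proximal map itself, asserting the intermediate estimate $\operatorname{dist}(\mathcal{D}(\mathbf{z}_{t+1}),\bm{C})^{2}\le(1-2\beta'\gamma_{t+1})\operatorname{dist}(\mathcal{D}(\mathbf{z}_{t+1}'),\bm{C})^{2}$. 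But nothing in your toolkit couples the prox step's contraction strength to the \emph{Langevin} step size $\gamma_{t+1}$: prox-regularity plus a quadratic-growth bound on $\mathbf{g}$ yields a contraction whose rate is governed by the proximal parameter $\lambda$ (and which is total, i.e.\ factor zero, when the prox degenerates to an exact projection), so the $\gamma_{t+1}$-dependence of your claimed factor is unsupported as written. The paper does not prove a per-step contraction of the prox at all. It instead applies the descent lemma to $f(\bm{x})=\operatorname{dist}(\bm{x},\bm{C})^{2}$, which is $\tfrac{2}{\beta}$-smooth near a $\beta$-prox-regular set, between consecutive \emph{pre-projection} iterates; this yields a factor-one recursion $\operatorname{dist}(\bm{x}_{t}',\bm{C})^{2}\le\operatorname{dist}(\bm{x}_{t-1}',\bm{C})^{2}+K\gamma_{t+1}^{2}G^{2}$ with $K=8\beta+2+\tfrac{16}{\beta}$, and the contraction factor is then \emph{manufactured} by the algebraic split $d^{2}=(1-2\beta\gamma_{t+1})d^{2}+2\beta\gamma_{t+1}d^{2}$, with the surplus absorbed via Young's inequality and the cap $\gamma_{t+1}\le\beta/(2G^{2})$. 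To complete your route you would need either to tie $\lambda$ to $\gamma_{t+1}$ and add an error-bound hypothesis linking $\mathbf{g}$ to $\operatorname{dist}(\cdot,\bm{C})$ (the paper invokes a full-row-rank condition on $\nabla g$ over $\bm{C}$ for exactly this purpose in Part~2), or to adopt the paper's algebraic manoeuvre.

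A secondary mismatch: you take conditional expectation over $\bm{\epsilon}$ to annihilate the noise cross term, but the theorem is stated as a pathwise inequality; the paper's proof simply omits the noise from the update ($\bm{x}_{t}'=\bm{x}_{t}+\gamma_{t}s_{t}$), so your version would at best establish the bound in expectation rather than the deterministic recursion claimed.
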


Theorem \ref{thm:cadm-convergence} illustrates that the distance to the feasible set $\bm{C}$ decreases at a rate of $1 - 2\beta'\gamma_{t+1}$ at each step (up to an additive $\gamma_{t+1}^2G^2$ noise). 
Thus, the iterates converge to an $\epsilon$-feasible set in $\mathcal O(\tfrac{1}{\gamma_{\min}} \log(\tfrac{1}{\varepsilon}))$ steps, with $\gamma_{\min} = \min_t \gamma_t$. This is experimentally reflected in Section \ref{subsec:microstruct}, where \(\mathbf{C}\) is convex and \emph{all samples} converge to the feasible set.

\begin{theorem}[Training Distribution Fidelity]
\label{thm:cadm-convergence-fid}
Suppose the assumptions stated in Theorem~\ref{thm:cadm-convergence-fid} are satisfied. Then, for positive step sizes $\gamma_t, \le \frac{1}{2G^{2}}\beta$, the following inequality holds for the Kullback-Leibler (KL) divergence from the data distribution:
\begin{align*}
  \mathrm{KL}\bigl(q(\mathbf{z}_{t-1})\,\|\,p_{\mathrm{data}}\bigr)
  &\le
  \mathrm{KL}\bigl(q(\mathbf{z}_{t})\|p_{\mathrm{data}}\bigr)
            +\gamma_t G^{2},
       \tag{fidelity}
\end{align*}
\end{theorem}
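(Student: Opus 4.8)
The plan is to read the reverse update of Equation~\eqref{eq:proximal-update} as a single step of a discretized Langevin diffusion and to control how far this step can push the current marginal away from $p_{\mathrm{data}}$. I would first split the update into its two ingredients, the noisy ascent step $\bm{x}_t\mapsto\bm{x}_t+\gamma_t\nabla\log q(\bm{x}_t\mid\bm{x}_0)+\sqrt{2\gamma_t}\,\bm{\epsilon}$ and the proximal correction $\text{prox}_{\lambda\mathbf{g}}$ from Equation~\eqref{eq:proximal-operator}, and bound the effect of each on $\mathrm{KL}(\,\cdot\,\|\,p_{\mathrm{data}})$ separately, so that the two estimates can be chained.

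For the ascent-plus-noise step I would pass to the continuous-time interpolation over the sub-interval of length $\gamma_t$, whose law $\rho_s$ solves the associated Fokker--Planck equation \cite{welling2011bayesian}, and differentiate $s\mapsto\mathrm{KL}(\rho_s\,\|\,p_{\mathrm{data}})$. The entropy-dissipation (de~Bruijn-type) identity expresses this derivative as a negative relative Fisher-information term plus a cross term pairing the learned drift $\nabla\log q$ against $\nabla\log(\rho_s/p_{\mathrm{data}})$. The Fisher term is dissipative and can only decrease the divergence, so I would simply discard it; the cross term I would bound with Cauchy--Schwarz and Young's inequality, invoking the uniform score bound $\|\nabla_{\bm{x}}\log q\|\le G$ from the hypotheses of Theorem~\ref{thm:cadm-convergence}. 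Integrating the resulting constant bound over the step of length $\gamma_t$ produces the additive increase $\gamma_t G^2$.

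The proximal step I would control by a contraction / data-processing argument. Since $\mathbf{g}$ is a proper, lower-semicontinuous convex penalty whose zero set is the feasible region $\bm{C}$, the operator $\text{prox}_{\lambda\mathbf{g}}$ is firmly non-expansive \cite{parikh2014proximal} and fixes every feasible point, so that $\text{prox}_{\lambda\mathbf{g}}$ acts as the identity on $\mathrm{supp}(p_{\mathrm{data}})\subseteq\bm{C}$ and leaves $p_{\mathrm{data}}$ invariant. Writing $\Phi=\text{prox}_{\lambda\mathbf{g}}$, the data-processing inequality then gives $\mathrm{KL}(\Phi_{\#}\mu\,\|\,p_{\mathrm{data}})=\mathrm{KL}(\Phi_{\#}\mu\,\|\,\Phi_{\#}p_{\mathrm{data}})\le\mathrm{KL}(\mu\,\|\,p_{\mathrm{data}})$, where $\mu$ is the post-ascent law. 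Hence the constraint correction cannot increase the divergence, and combining it with the previous step yields the stated one-step fidelity bound.

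The main obstacle I anticipate is the cross term: the clean entropy-dissipation estimate naturally surfaces the \emph{score-approximation error} $\|\nabla\log q-\nabla\log p_{\mathrm{data}}\|$ rather than the raw norm $\|\nabla\log q\|$, so landing at exactly $\gamma_t G^2$ requires absorbing the drift through the uniform bound $G$ and accounting for the frozen-drift discretization rather than the sharper $\mathcal{O}(\gamma_t^2)$ remainder. A secondary technical point is transferring the ambient-space update of Equation~\eqref{eq:proximal-update} to the latent marginals through the $\ell$-Lipschitz decoder $\mathcal{D}$, which I would treat exactly as in Theorem~\ref{thm:cadm-convergence} so that the decoder constant is absorbed and does not appear in the final bound.
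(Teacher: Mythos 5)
Your proposal is correct in outline and reaches the stated bound, but it takes a genuinely different route from the paper for the main estimate. The treatment of the proximal/projection step is the same in both: you and the paper both assume $p_{\mathrm{data}}$ is supported in $\bm{C}$, observe that the correction map fixes feasible points and leaves $p_{\mathrm{data}}$ invariant, and invoke the data-processing inequality to conclude the correction cannot increase the KL divergence. Where you diverge is the Langevin step. The paper argues discretely and elementarily: it writes the one-step KL increment as an integral of $\log p(\mathbf{z}_t)-\log p(\mathbf{z}_t+\gamma_t s_t)$ against $q(\mathbf{z}_t)$, applies a first-order Taylor/mean-value bound $\log p(\mathbf{z}_t)-\log p(\mathbf{z}_t+\gamma_t s_t)\le\|\gamma_t s_t\|\cdot\|\nabla\log p(\xi)\|\le\gamma_t G^2$ (quietly assuming the \emph{true} score is also bounded by $G$), and integrates; the injected noise $\sqrt{2\gamma_t}\bm{\epsilon}$ is simply dropped from the analysis. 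Your continuous-time Fokker--Planck / entropy-dissipation argument is the more standard and more careful route: it accounts for the diffusion term (which only helps, via the discarded Fisher-information dissipation) and makes explicit that the cross term naturally involves the score mismatch $\nabla\log q-\nabla\log p_{\mathrm{data}}$ rather than the raw score norm. The obstacle you flag is real but no worse than in the paper's own proof, which needs the identical hidden assumption $\|\nabla\log p\|\le G$ to close the Taylor bound; with that assumption your cross term is bounded and absorbs into $\gamma_t G^2$ up to a constant, matching the paper's level of rigor. In short, your proof buys correctness with respect to the noise term and transparency about the score-error structure at the cost of heavier machinery, while the paper's buys brevity at the cost of ignoring the stochastic term and the Jacobian of the update map.
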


The inequality in Theorem~\ref{thm:cadm-convergence-fid} shows that the divergence from the training data distribution increases by at most $\frac{\beta}{2}$ (since $\gamma \leq \frac{1}{2G^2}\beta$) at each step and, consequently the cumulative divergence from the training distribution is $\mathcal O(\sum_t\gamma_t)$ and thus vanishes as $t\!\rightarrow\!0$.
Consequently, after at most $T^\star=\lceil \tfrac{1}{2\beta\gamma_{\min}}\, \log\!\bigl(\tfrac{\mathrm{dist}(x_T,\bm{C})^2}{\varepsilon}\bigr)\rceil$ steps the expected constraint violation drops below~$\varepsilon$ 
while their KL divergence from the original data distribution grows at most linearly in $\textstyle\sum_t\gamma_t$ (negligible because $\gamma_t\!\to\!0$ along the chain).
This implies that our approach inherits the \emph{same} sample quality guarantees as unconstrained diffusion, up to a tunable drift bounded by $\gamma_{\max}G^2$, (with $\gamma_{\max} = \max_t \gamma_t$), and, since $\gamma_t\!\rightarrow\!0$ along the chain, this drift is negligible in practice. 
This result provides theoretical rationale for our method's comparable FID scores to unconstrained baselines for the image generation tasks in Sections \ref{subsec:microstruct} and \ref{subsec:copyright}. 
Proofs for both theorems are provided in Appendix \ref{appendix:proof}.



\subsection{Training-Free Correction Algorithm}
With the theoretical framework of our approach established, we are now ready to formalize the proposed training-free algorithm to impose constraints on $\mathbf{z}_t$.
The algorithm can be decomposed into an outer minimizer, which corrects $\mathbf{z}_t$ throughout the sampling process, and an inner minimizer, which solves the proximal mapping subproblem. The former describes a high-level view of the entire constrained sampling process, whereas the latter presents the single-step sub-optimization procedure.


\textbf{Outer minimizer.}
First, to impose corrections throughout the latent denoising process Equation~\eqref{eq:proximal-operator} is modified to accommodate the $\mathcal{D}$ mapping:
\begin{equation}
    \label{eq:proximal-operator-dist}
    \text{prox}_{{\lambda \mathbf{g}}}(\mathbf{z_t}) = \arg\min_{\bm{y}} \,\bigl\{ \mathbf{g}(\mathcal{D}(\bm{y})) + \tfrac{1}{2\lambda}\|\mathcal{D}(\bm{y}) - \mathcal{D}(\mathbf{z_t})\|_2^2 \bigr\}
\end{equation}
At each sampling iteration, we first compute the pre-projection update \(\mathbf{z}_t' = \mathbf{z}_{t} + \gamma_t \nabla_{\mathbf{z}_{t}} \log q(\mathbf{z}_{t}|\mathbf{z}_0) + \sqrt{2\gamma_t}\bm{\epsilon}\) which incorporates both gradient and stochastic noise terms. We then apply a proximal operator to obtain the corrected latent $\hat{\mathbf{z}}_t = \text{prox}_{{\lambda \mathbf{g}}}(\mathbf{z_t}')$. 
This follows the Proximal Langevin Dynamics scheme outlined in the previous section.



\textbf{Inner minimizer.}
Each proximal mapping is composed of a series of gradient updates to solve the outer minimizer. In practice, gradient descent is applied on the proximal operator's objective:
\[
    \mathbf{{z}}_t^{i+1} = \mathbf{z}_t^{i} - \nabla_{\mathbf{z}_t^{i}}  \bigl[ \mathbf{g}(\mathcal{D}(\mathbf{z}_t^{i})) + \tfrac{1}{2\lambda}\|\mathcal{D}(\mathbf{z}_t^{i}) - \mathcal{D}(\mathbf{z_t^0})\|_2^2 \bigr].
\]

When a convergence criterion is met (e.g., $\mathbf{g}(\mathcal{D})(\mathbf{z}_t^{(i)}) < \delta$), the algorithm proceeds to the next denoising step.
Algorithm \ref{algorithm:pseudocode} provides a pseudo-code for this gradient-based approach to applying the proximal operator within the stable diffusion sampling process.

\begin{wrapfigure}[21]{r}{0.53\linewidth}
\begin{minipage}{\linewidth}
\vspace{-12pt}
\begin{algorithm}[H]
 \begin{spacing}{1.59}
    \caption{Sampler with Constraint Correction}
    \label{alg:sampler}
    
    \KwIn{$\delta$ (violation tolerance), $\text{lr}$ (learning rate)}

    \textbf{Define} $\mathbf{prox\_objective}(\bm{x}_t^i)$:
    
    \Indp
        $\text{violation} \gets \mathbf{g}(\bm{x}_t^i)$\;
        $\text{distance} \gets \frac{1}{2\lambda}\|\bm{x}_t^i - \bm{x}_t^0\|_2^2$\;
        \Return $\text{violation} + \text{distance}$\;
    \Indm
    \For{$t \leftarrow T$ \KwTo $0$}
    {
        \tcp{Sampling steps (omitted).}
        $i \leftarrow 0$\;
        \While{$\mathbf{g}(\mathcal{D}(\mathbf{z}_t^i)) \ge \delta$}
        {
            $g \leftarrow \nabla_{\mathbf{z}_t^i} \mathbf{prox\_objective}(\mathcal{D}(\mathbf{z}_t^i))$\;
            $\mathbf{z}_t^{i+1} \leftarrow \mathbf{z}_t^i - (g \times \text{lr}); \;\; i \leftarrow i + 1$\;
        }
    }
    \KwOut{$\mathcal{D}\bigl(\mathbf{z}_0\bigr)$}
    \end{spacing}
    \label{algorithm:pseudocode}
\end{algorithm}
\end{minipage}
\end{wrapfigure}

In the case that $\mathbf{g}$ is an indicator function,
\begin{align*}
    \mathbf{g}(\bm{y}) =
    \begin{cases}
        0, \qquad \bm{y} \in \mathbf{C}, \\
        \infty, \quad\;\; \bm{y} \notin \mathbf{C},
    \end{cases}
\end{align*}
the proximal mapping reduces to a projection onto the constraint set.
If \(\mathcal{P}_\mathbf{C}\) can be formulated in the ambient space, the minimization can be fully imposed on the projection objective, $\|\mathcal{P}_\mathbf{C}(\mathcal{D}(\mathbf{z}_t)) - \mathcal{D}(\mathbf{z}_t)\|_2^2$.
Notably, the gradients of this objective capture both the constraint violation term, $\mathbf{g}(\mathcal{D}(\mathbf{z}_t))$, and the distance term, $\tfrac{1}{2\lambda}\|\mathcal{D}(\bm{y}) - \mathcal{D}(\mathbf{z_t})\|_2^2$, within the prescribed tolerance, resulting in the solution to Eq. \eqref{eq:proximal-operator-dist} at the end of the minimization.
When \(\mathbf{C}\) is convex, the projection can be constructed in closed-form (Section \ref{subsec:microstruct}), but otherwise a Lagrangian relaxation can be employed \cite{hestenes1969multiplier}. 
For our experiments with non-convex constraint sets (Sections \ref{subsec:metamaterials} and \ref{subsec:copyright}), we leverage an Augmented Lagrangian relaxation as described in Appendix \ref{app:aug_lagrangian} \cite{FvHMTBL:ecml20}.




\textit{Importantly, in these cases the corrective step can be considered a projection of the latent under appropriate smoothness assumptions.}
We justify this claim with the following rationale:
under the assumption of latent space smoothness \cite{guo2024smooth}, the nearest feasible point in the image space closely corresponds to the nearest feasible point in the latent space. Therefore, while the distance component of the proximal operator, \(\tfrac{1}{2\lambda}\|\mathcal{D}(\mathbf{z}_t^{i}) - \mathcal{D}(\mathbf{z_t^0})\|_2^2 \), is evaluated in the image space, minimizing this distance in the image space also minimizes the corresponding distance in the latent space, validating the use of this corrective step as a latent projector.

Finally, when \(\mathbf{g}\) cannot be represented by a differentiable function, such as when the constraints are evaluated by an external simulator (as in Section \ref{subsec:metamaterials}) or when the constraints are too general to represent in closed-form (as in Section \ref{subsec:copyright}),
it is necessary to approximate this objective to Equation \eqref{eq:proximal-operator-dist} using other approaches. 
We discuss this further in the next section and empirically validate such approaches in Section \ref{sec:exp}.

\section{Complex Constraint Evaluation}
\label{sec:surrogate_constraints}




While in the previous section we discuss how to endow mathematical properties within stable diffusion, many desirable properties cannot be directly expressed as explicit mathematical expressions. Particularly when dealing with physical simulators, heuristic-based analytics, and partial differential equations, it becomes often necessary to either 
\textbf{(i)} estimate these constraints with surrogate models or \textbf{(ii)} approximate the gradients of black-box models. 
To this end, we propose two proxy constraint correction methods that leverage differentiable optimization to enforce constraints.

\textbf{Differentiable surrogates.}
Surrogate models introduce the ability to impose soft constraints that would otherwise be intractable. Specifically, we replace $\mathbf{g}(\bm{x}_t)$, the constraint evaluation function used in the optimization process, with 
a constraint violation function dependent on the surrogate model (e.g., a distance function between the target properties and the surrogate model's predictions for these properties in $\bm{x}_t$ as shown in Section \ref{subsec:copyright}).
This allows the surrogate to directly evaluate and guide the samples to adhere to the desired constraints at each step. Apart from this substitution, the overall algorithm remains identical to Algorithm 1. Through iterative corrections, the model {\sl converges} to a corrected sample $\hat{\mathbf{z}_t}$ that satisfies the target constraints to the extent permitted by the surrogate's predictive accuracy.
Appendix \ref{app:classifier-guidance} provides an in-depth view of the differences of this approach with respect to classifier guidance. 

\textbf{Differentiating through black-box simulators.}
In some cases (including the setting of our metamaterial design experiments in Section \ref{subsec:metamaterials}), a strong surrogate model cannot be leveraged to derive accurate violation functions. In such settings, the only viable option is to use a simulator to evaluate the constraints. However, these simulators are often non-differentiable, making it impossible to directly compute gradients for the proximal operator.
To incorporate the non-differentiable simulator into the inner minimization process, this paper exploits a sensitivity analysis method inspired by the {\it differentiable perturbed optimizer (DPO)} adopted in the context of differentiable optimization \cite{berthet2020learning, mandi2024decision}. 
DPO introduces controlled perturbations to the optimization variables and smooths the resulting objective, yielding differentiable surrogate gradients.
Following this idea, random local perturbations are injected into the simulator inputs and define a smoothed function
\(
    \bar{\phi}_{\nu}(\bm{x}_t) = \mathbb{E}_{\epsilon}\!\left[\phi(\bm{x}_t + \nu\epsilon)\right],
\)
where $\phi$ is the external simulator, $\epsilon \sim \mathcal{N}(0, I)$ is a random perturbation, and $\nu$ is a temperature parameter controlling the smoothing scale.
An unbiased Monte Carlo estimate of $\bar{\phi}_\epsilon$ is obtained as
\begin{equation*}
    \bar{\phi}_{\epsilon}(\bm{x}_t) = \frac{1}{M} \sum_{m=1}^{M} \phi\!\left(\bm{x}_t + \nu\epsilon^{(m)}\right),
\end{equation*}
where $M$ is the number of perturbed samples.
The gradient of this smoothed objective can be written as
\[
    \nabla_{\bm{x}_t}\bar{\phi}_{\nu}(\bm{x}_t) = \frac{1}{\nu}\,\mathbb{E}[\phi(\bm{x}_t + \nu\epsilon)\,\epsilon],
\]
which corresponds to a finite-difference estimator whose scaling term $1/\nu$ is absorbed into the proximal step size during optimization.
Finally, this smoothed estimator enables a differentiable loss formulation:
\[
    \nabla_{\bm{x}_t} \mathcal{L}\!\left(\phi(\bm{x}_t)\right)
    = -\left(\bar{\phi}_{\epsilon}(\bm{x}_t) - \textit{target}\right),
\]
which is used to update the latent variable $\bm{z}_t$ (\emph{via} $\bm{x}_t=\mathcal{D}(\bm{z}_t)$) through proximal Langevin dynamics.

\section{Experiments}
\label{sec:exp}


The performance of our method is evaluated in three domains, highlighting its applicability to diverse settings. 
Supplementary results and baselines are discussed in Appendices~\ref{appendix:additional_results} and~\ref{appendix:runtime}.

\textbf{Baselines.} 
In each setting, performance is benchmarked against a series of baselines. 
\begin{enumerate}[leftmargin=*, parsep=0pt, itemsep=2pt, topsep=-2pt]
\item {\bf Conditional Diffusion Model (Cond):} To assess the contribution of latent diffusion itself, we include a reference baseline consisting of an identical Stable Diffusion with text-conditioned constraints guidance \cite{dontas2024blind}.
\item {\bf Projected Diffusion Models (PDM):} Following \cite{christopher2024constrained}, this approach enforces feasibility by projecting onto the constraint set at each denoising step in the image space. PDM represents the current state-of-the-art for constrained generation in Sections \ref{subsec:microstruct} and \ref{subsec:copyright}.
\item \textbf{\citeauthor{bastek2023inverse} (\citeyear{bastek2023inverse}):} For the task in Section \ref{subsec:metamaterials}, where PDM is inapplicable, we compare with this specialized method, which constitutes the state-of-the-art in that domain \cite{bastek2023inverse}.
\end{enumerate}

Collectively, these baselines capture the strongest existing constrained-generation methods, enabling a clear assessment of the improvements introduced by our latent constrained diffusion framework.

\begin{figure}[t!]
\begin{minipage}{0.47\textwidth}
\ra{0.25}
\setlength{\tabcolsep}{1pt}
\centering
\begin{tabular}{c cccc}
  \toprule
  \multirow{2}{*}{\footnotesize{Ground}} & \multirow{2}{*}{\footnotesize{P(\%)~~}} 
  & \multicolumn{3}{c}{\footnotesize{\bf Generative Methods}}\\[2pt]
   & &  \footnotesize{\sl Cond} &  \footnotesize{\sl PDM} & \footnotesize{\sl (Ours)} \\
    \midrule

    \includegraphics[width=0.22\columnwidth]{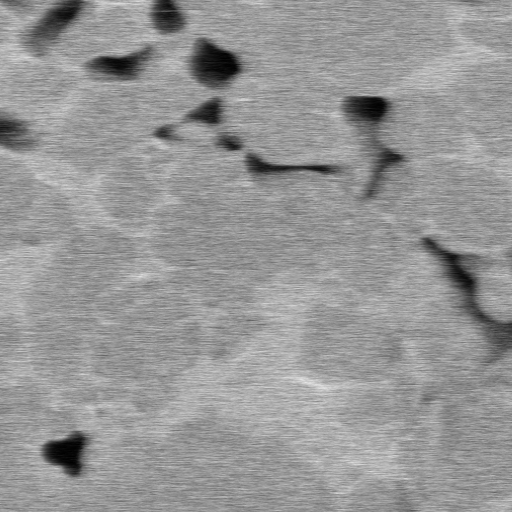} &
    \raisebox{2\height}{\footnotesize{\sl {30~~}}} &
    \includegraphics[width=.22\columnwidth]{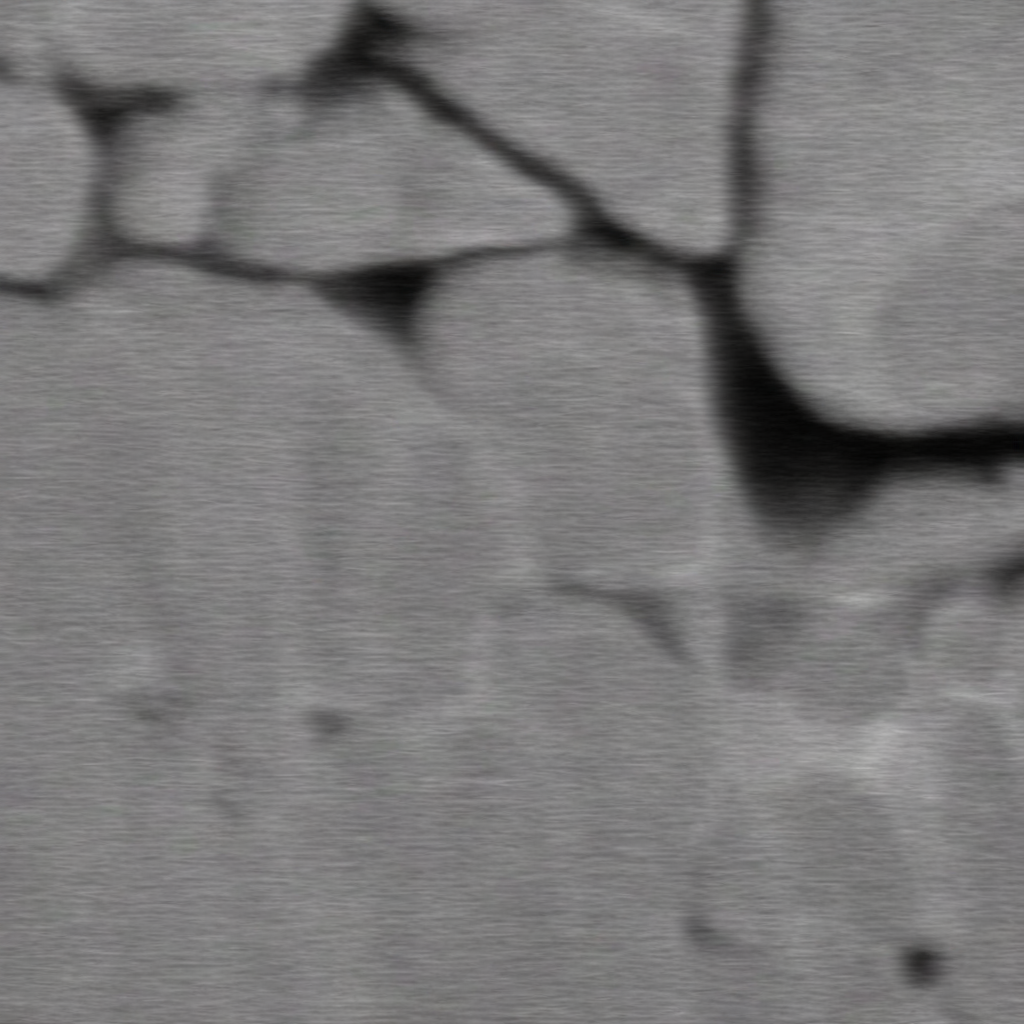} &
    \includegraphics[width=.22\columnwidth]{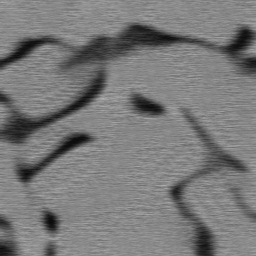} &
    \includegraphics[width=.22\columnwidth]{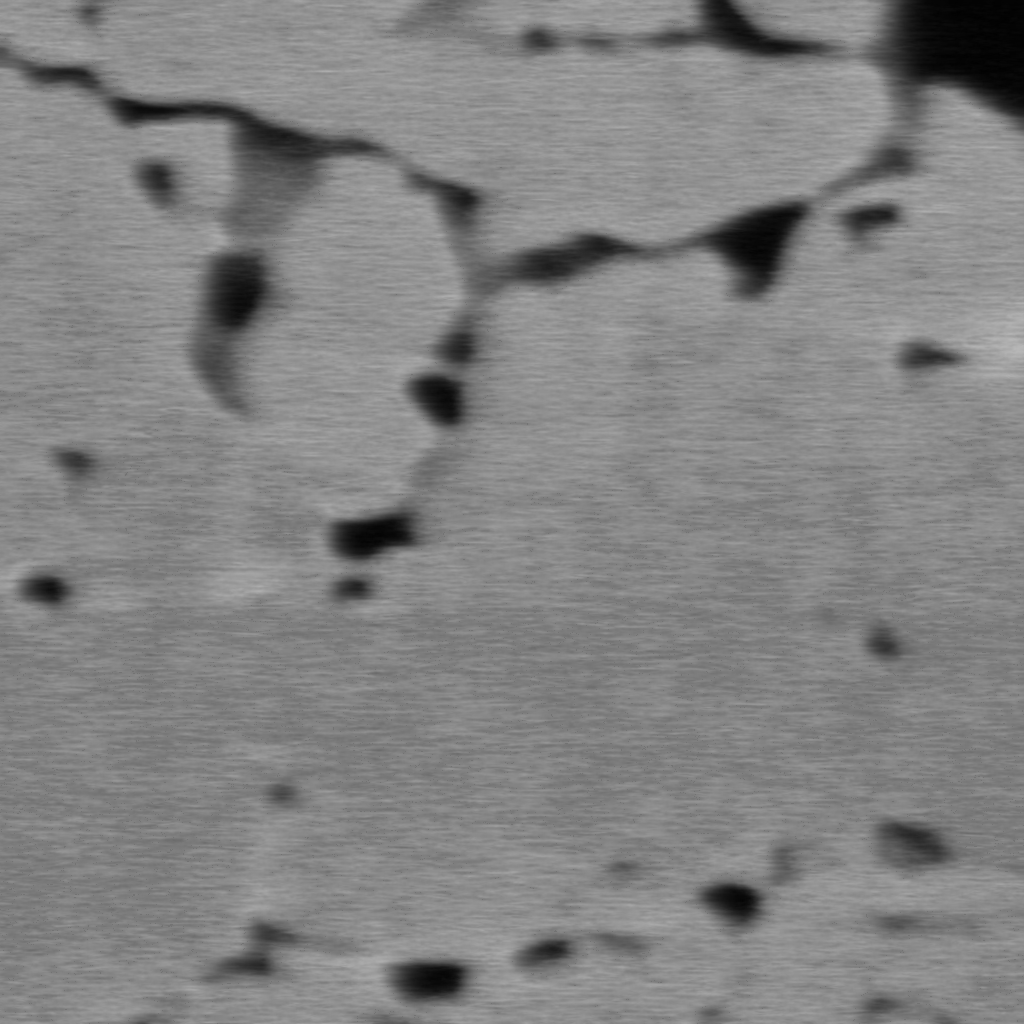} \\

    \includegraphics[width=0.22\columnwidth]{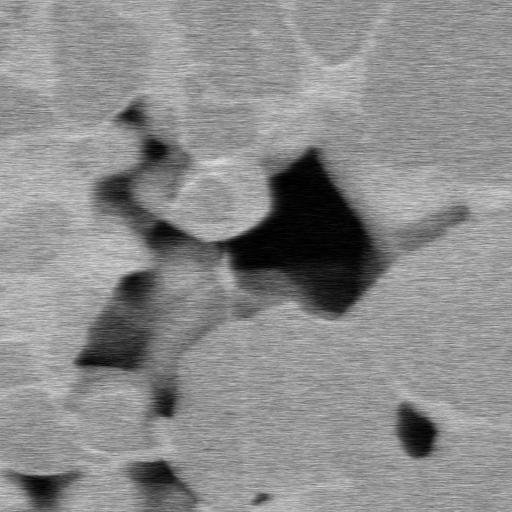} &
    \raisebox{2\height}{\footnotesize{\sl {50~~}}} &
    \includegraphics[width=.22\columnwidth]{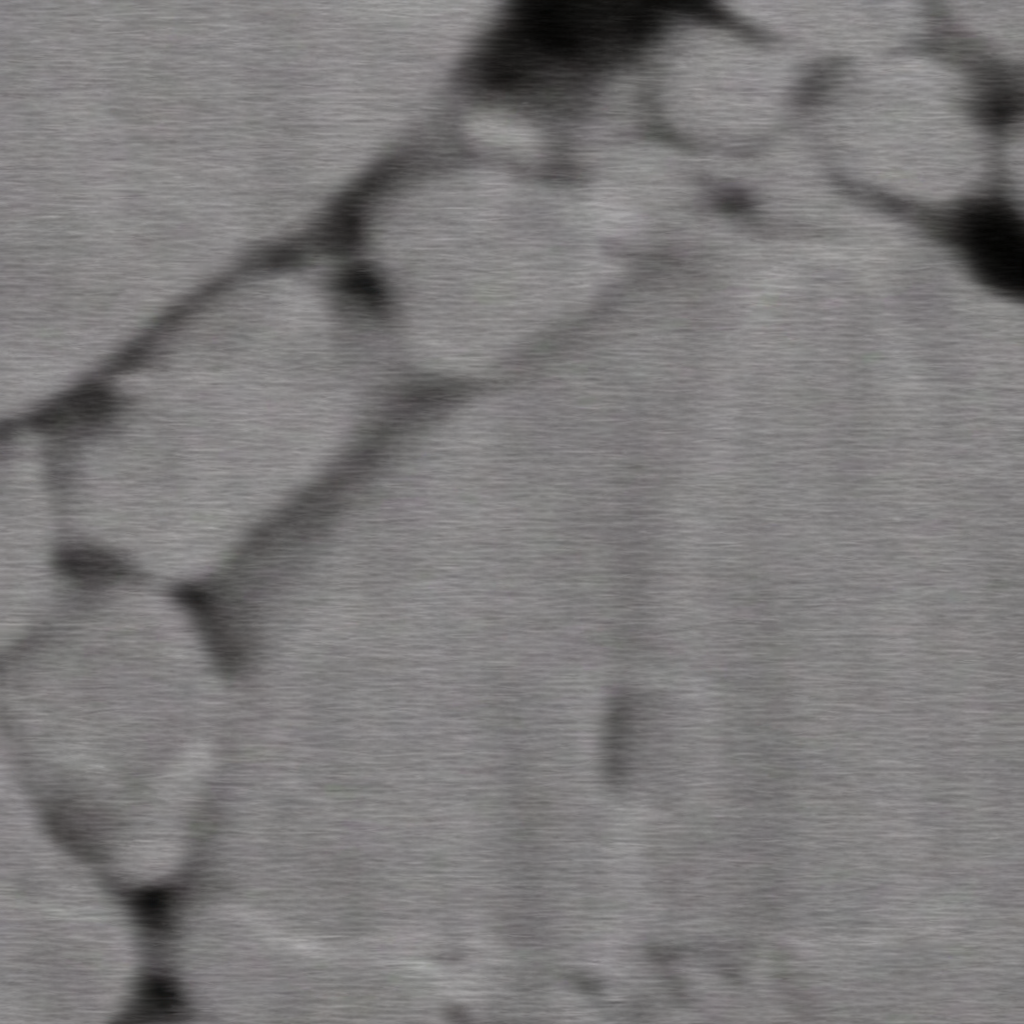} &
    \includegraphics[width=.22\columnwidth]{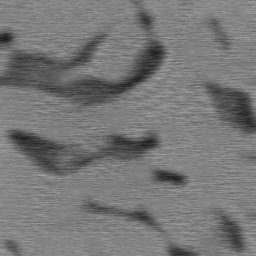} &
    \includegraphics[width=.22\columnwidth]{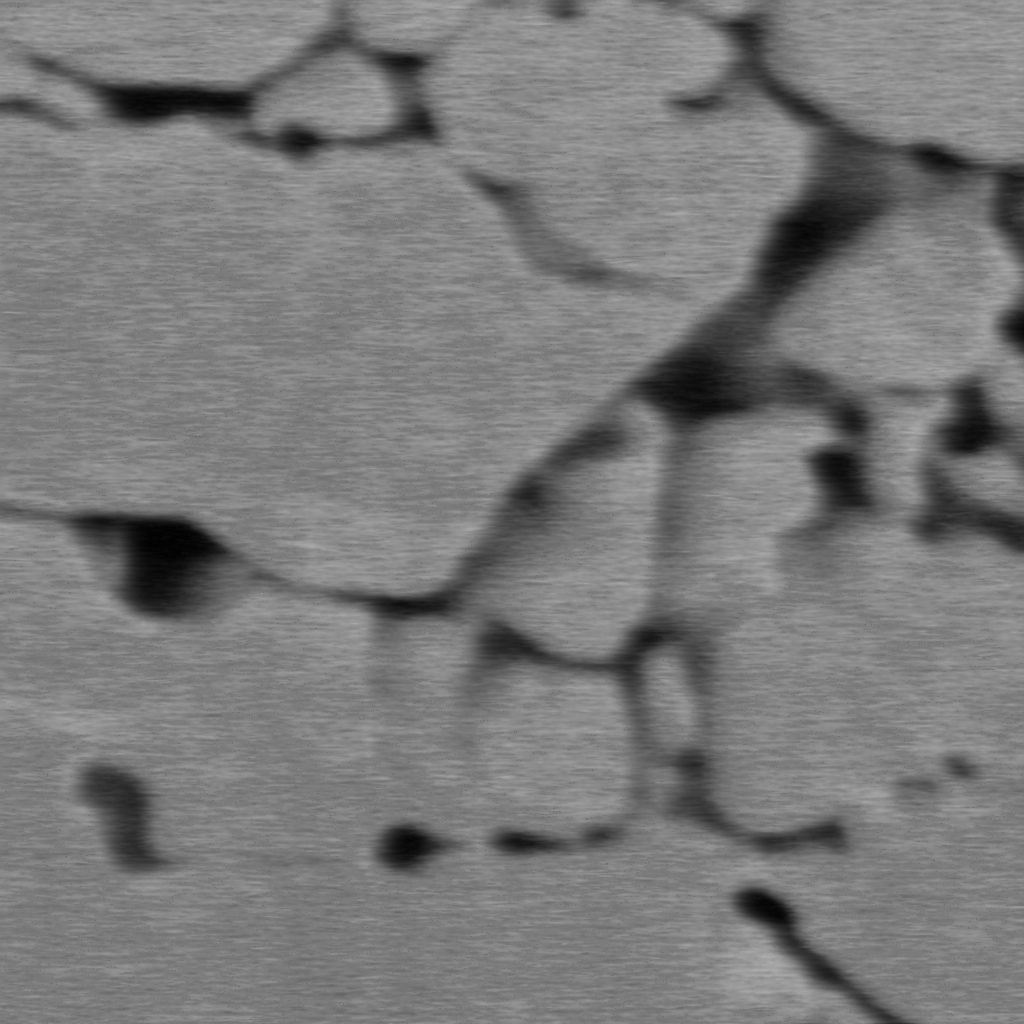} \\[2pt]
    \midrule
    \multicolumn{2}{l}{{\bf \footnotesize{FID scores:}}} & {\bf \scriptsize{10.8 $\!\pm\!$ 0.9}} & \scriptsize{30.7 $\!\pm\!$ 6.8} & \scriptsize{13.5 $\!\pm\!$ 3.1}\\ \\
    \multicolumn{2}{l}{{\bf \footnotesize{P error $>$ 10\%:}}} & \scriptsize{68.4$\%$ $\!\pm\!$ 12.4} & {\bf \scriptsize{0$\%$ $\!\pm\!$ 0}} & {\bf \scriptsize{0$\%$ $\!\pm\!$ 0}}\\
    \bottomrule
\end{tabular}
\caption{Comparison of model performance in terms of FID score and constraint satisfaction (percentage of samples that does not satisfy the target porosity with a margin of 10$\%$).}
\label{fig:microstructure-images}
\end{minipage}
\hfill
\begin{minipage}{0.47\textwidth}
\ra{0.25}
\setlength{\tabcolsep}{1pt}
\centering
\vspace{-2pt}
\renewcommand{\arraystretch}{0.68}
\begin{tabular}{c|c}
  \toprule
   \multicolumn{2}{c}{\footnotesize{\bf Voids diameter distribution}}\\[2pt]
   \midrule
    \footnotesize{\sl {P = 30\%~~}}  & 
    \footnotesize{\sl {P = 50\%~~}}\\
    \includegraphics[width=.425\columnwidth]{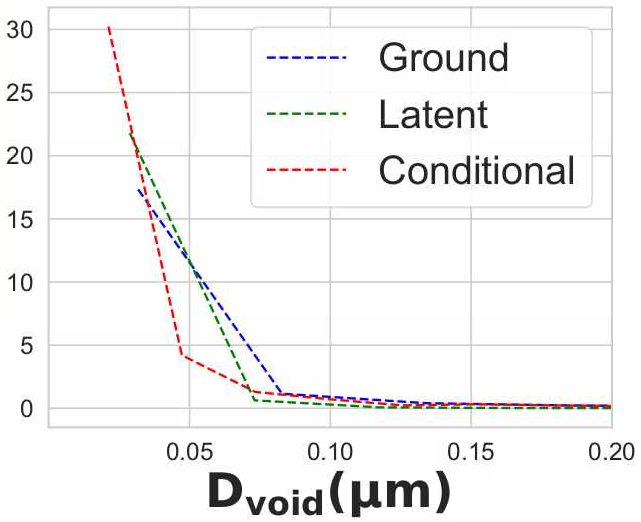} &
    \includegraphics[width=.425\columnwidth]{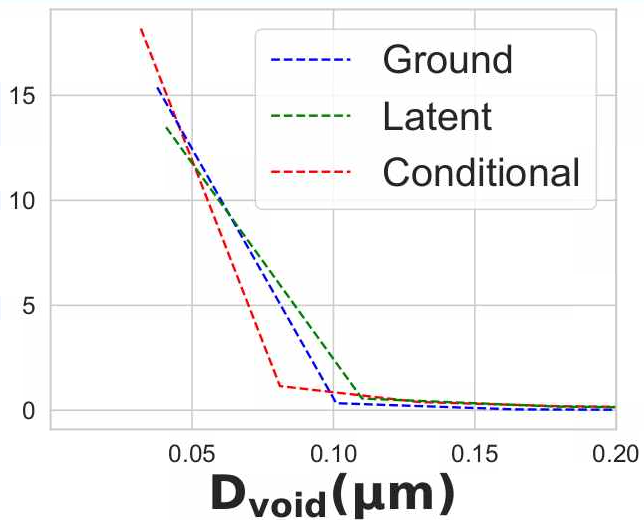}\\
    \midrule
    \multicolumn{2}{c}{\bf \footnotesize{MSE w.r.t. Ground}} \\ 
    \midrule
    [2pt] \footnotesize{Cond: 1.58} & \footnotesize{Cond: 0.31} \\
    [2pt] \footnotesize{\bf Ours: 0.47} & \footnotesize{\bf Ours: 0.12} \\ 
    \bottomrule
\end{tabular}
\caption{Distribution of void diameters in the training set (Ground) and in data generated by Conditional diffusion model and Latent Constrained Diffusion models.}
\label{fig:microstructure-micrometry}
\end{minipage}
\end{figure}

\subsection{Microstructure Generation}
\label{subsec:microstruct}

Microstructure imaging data is critical in material science domains for discovering structure-property linkages.
However, the availability of this data is limited on account of prohibitive costs to obtain high-resolution images of these microstructures.
In this experiment, we task the model with generating samples subject to a constraint on the porosity levels of the output microstructures.
Specifically, the goal is to generate new microstructures with specified, and often previously unobserved, porosity levels from a limited dataset of microstructure materials. 

For this experiment we obtain the dataset used by \cite{christopher2024constrained, chun2020deep}. Notably, there are two significant obstacles to using this dataset: {\it data sparsity} and {\it absence of feasible samples}. To address the former limitation, we subsample the original microstructure images to generate the dataset using $64 \times 64$ images patches that have been upscaled to $1024 \times 1024$. To the latter point, while the dataset contains many samples that fall within lower porosity ranges, it is much more sparse at higher porosities. Hence, when constraining the porosity in these cases, often no feasible samples exist at a given porosity level.

\textbf{Inner minimizer.}
To model the proximal operator for our proposed method, we use a projection operator in the image space and optimize with respect to this objective.
Let $\bm{x^{i,j}}$ be the pixel value for row $i$ and column $j$, where $\bm{x}^{i,j} \in [-1, 1]$ for all values of $i$ and $j$. The porosity is then,
\[
    \textit{porosity} = \sum^n_{i=1} \sum^m_{j=1} \mathbbm{1}{\left( \bm{x}^{i,j} < 0 \right)},
\]
where $\mathbbm{1}(\cdot)$ is the indicator function, which evaluates to 1 if the condition inside holds and 0 otherwise.
We can then construct a projection using a top-k algorithm to return,
\begin{subequations}
\begin{align*}
    \mathcal{P}_\mathbf{C}(\bm{x}) = \argmin_{\bm{y}^{i,j}} \sum_{i,j} \|\bm{y}^{i,j} - \bm{x}^{i,j} \| 
    \quad \quad\text{s.t. } \quad \forall \;\bm{y}^{i,j} \in [-1, 1], \quad \sum^n_{i=1} \sum^m_{j=1} \mathbbm{1}\left( \bm{y}^{i,j} < 0\right) = K
\end{align*}
\end{subequations}
where $K$ is the number of pixels that should be ``porous''.
Importantly, since the above program is convex, our model provides a certificate on the satisfaction of such constraints in the generated materials. We refer the interested reader to Appendix \ref{appendix:theory} for additional discussion.




\textbf{Results.}
A sample of the results of our experiments is presented in Figure~\ref{fig:microstructure-images}. Compared to the {\it Projected Diffusion Model (PDM)}, latent diffusion approaches show a significant improvement. Latent diffusion models enable higher-quality and higher-resolution images. The previous state-of-the-art PDM, which operates without latent diffusion, had an \bcol{FID more than twice as high as 
the models incorporating latent diffusion}. 
Alternatively, the {\it Conditional Diffusion Model}, utilizing text-to-image conditioning, 
reports the best average FID of 10.8 but performs poorly with regard to other evaluation metrics. 
Conditioning via text prompts proved unsuitable for enforcing the porosity
\begin{wrapfigure}[40]{r}{0.52\linewidth}
\vspace{-3pt}
\begin{minipage}{\linewidth}
\RaggedRight 
\footnotesize
\resizebox{!}{0.17\linewidth}{
\begin{tabular}{lcc}
    \toprule
    \footnotesize{Model} &
    \footnotesize{MSE $[\downarrow]$} & 
    \footnotesize{\sl \shortstack{Fraction of physically \\ invalid shapes $[\downarrow]$}} \\
    \midrule
    \footnotesize{\sl \shortstack{\small Cond}} & 
    \footnotesize{7.1 $\!\pm\!$ 4.5} & \footnotesize{55$\%$} \\
    \midrule
    \footnotesize{\sl \shortstack{\small \citeauthor{bastek2023inverse}}} &
    \footnotesize{6.4 $\!\pm\!$ 4.6} & \footnotesize{20$\%$} \\
    \midrule
    \footnotesize{\sl \shortstack{\small Latent (Ours)}} & {\bf \footnotesize{1.4 $\!\pm\!$ 0.6}} & {\bf \footnotesize{5$\%$}} \\
    \bottomrule
\end{tabular}
}
\caption{Compare MSE w.r.t. target stress-strain response and rejection rate of physically inconsistent shapes.}
\label{tab:metamaterial-results}
\end{minipage}

\vspace{15pt}

\begin{minipage}{\linewidth}
\ra{0.25}
\setlength{\tabcolsep}{1pt}
\noindent\fboxsep=0pt
\noindent\fboxrule=0.26pt
\centering
\begin{tabular}{lcccc}
    \toprule
    &\footnotesize{Original} 
    &\footnotesize{Step 0}
    &\footnotesize{Step 2} 
    &\footnotesize{Step 4} \\
    
    \midrule
    & \fbox{\includegraphics[width=0.16\columnwidth]{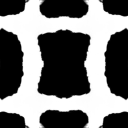}} &
    \fbox{\includegraphics[width=.16\columnwidth]{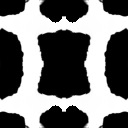}} &
    \fbox{\includegraphics[width=.16\columnwidth]{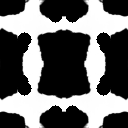}} &
    \fbox{\includegraphics[width=.16\columnwidth]{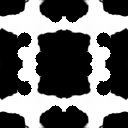}} \\
    
    \midrule
    \multicolumn{5}{c}
    {\bf \footnotesize{Structural analysis}} \\
    
    \multirow{4}{*}[1.5em]{\rotatebox{90}{\textbf{\footnotesize $\leftarrow$ increasing stress $\leftarrow$}}} & 
    \includegraphics[width=0.18\columnwidth]{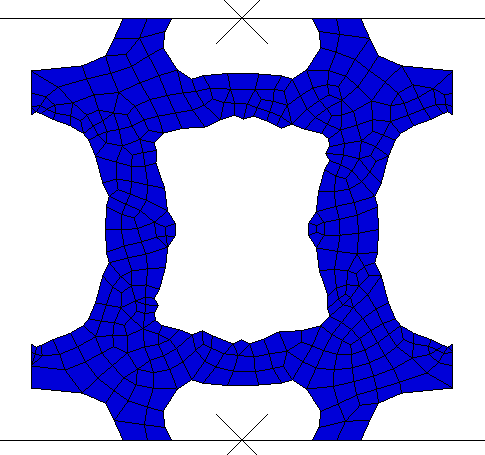} &
    \includegraphics[width=.18\columnwidth]{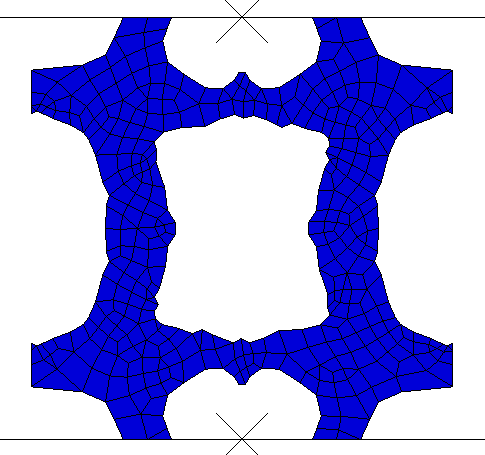} &
    \includegraphics[width=.18\columnwidth]{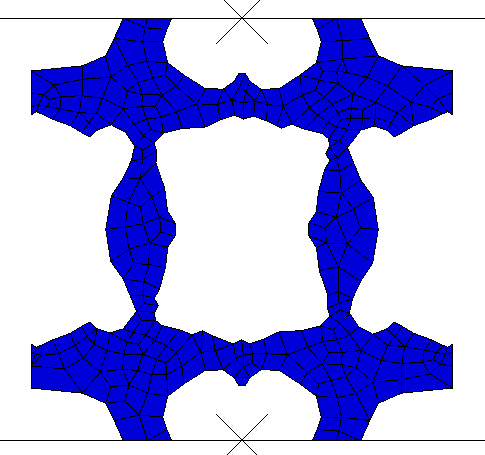} &
    \includegraphics[width=.18\columnwidth]{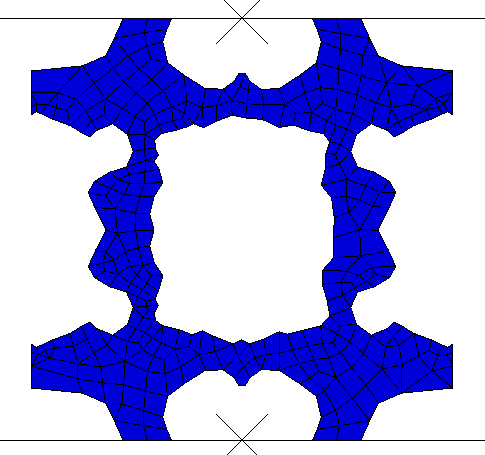} \\

    & \includegraphics[width=0.18\columnwidth]{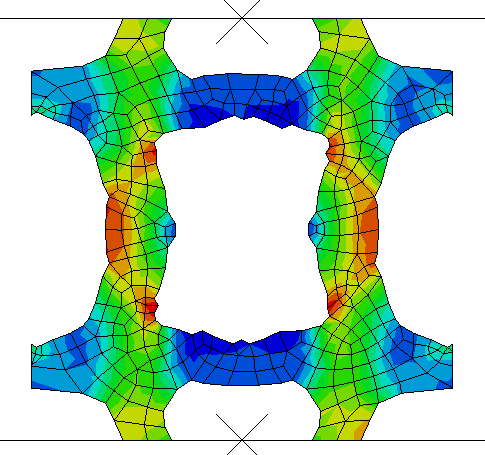} &
    \includegraphics[width=.18\columnwidth]{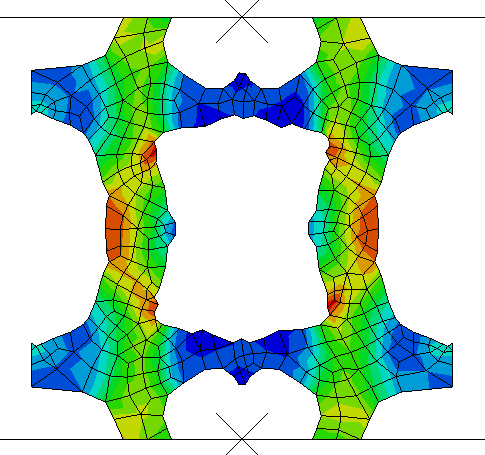} &
    \includegraphics[width=.18\columnwidth]{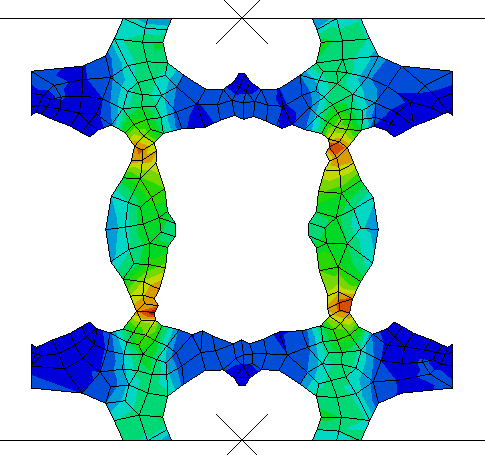} &
    \includegraphics[width=.18\columnwidth]{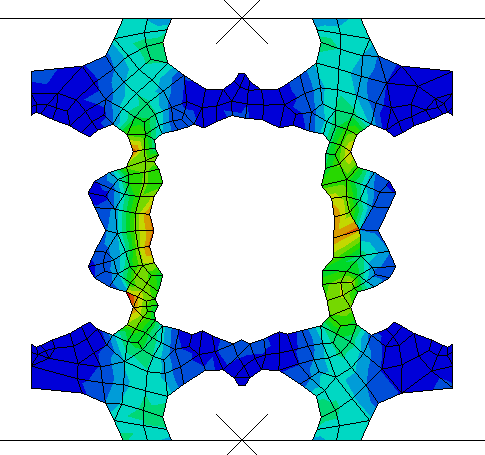} \\
    
    & \includegraphics[width=0.18\columnwidth]{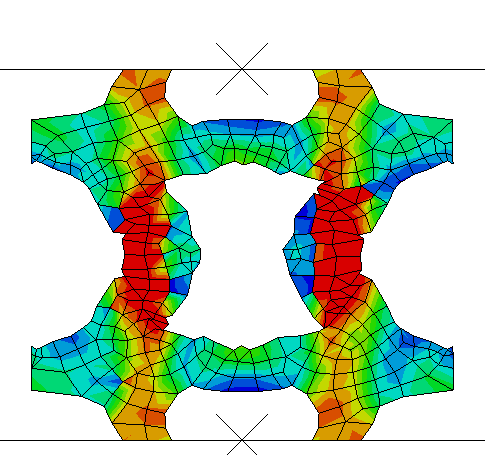} &
    \includegraphics[width=.18\columnwidth]{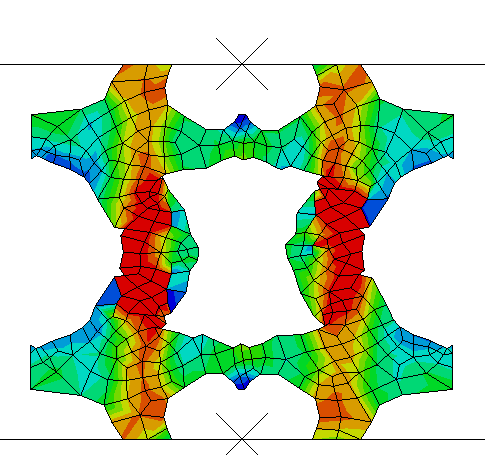} &
    \includegraphics[width=.18\columnwidth]{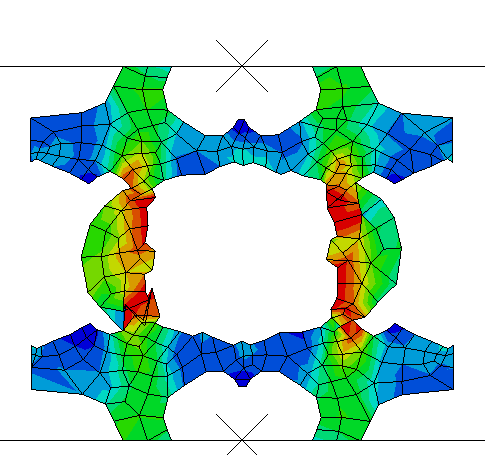} &
    \includegraphics[width=.18\columnwidth]{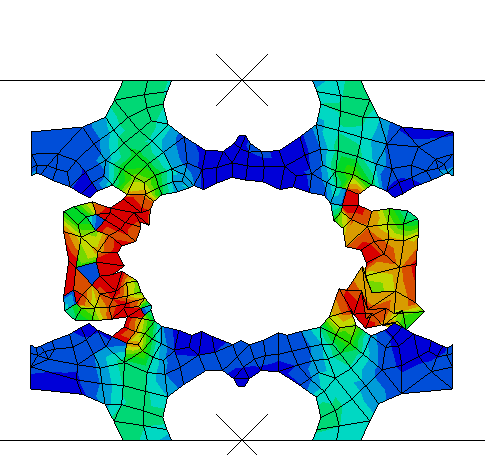} \\
    
    & \includegraphics[width=0.18\columnwidth]{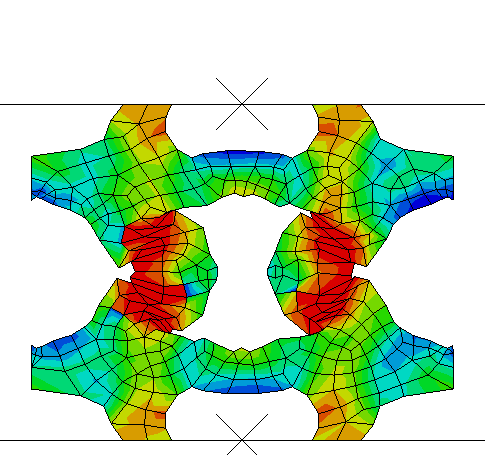} &
    \includegraphics[width=.18\columnwidth]{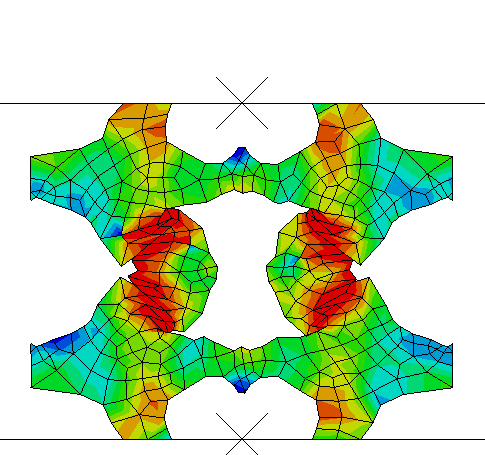} &
    \includegraphics[width=.18\columnwidth]{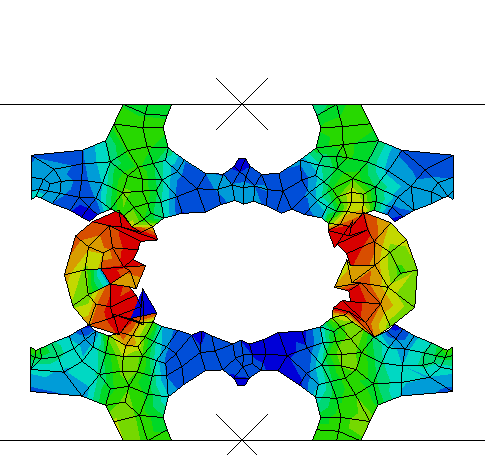} &
    \includegraphics[width=.18\columnwidth]{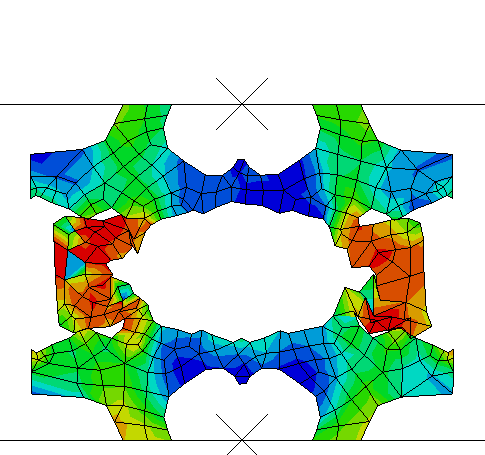} \\
    
    \midrule      
    \multicolumn{5}{c}{\bf \footnotesize{Stress-strain curves}} \\[2pt]
    & \includegraphics[width=0.22\columnwidth]{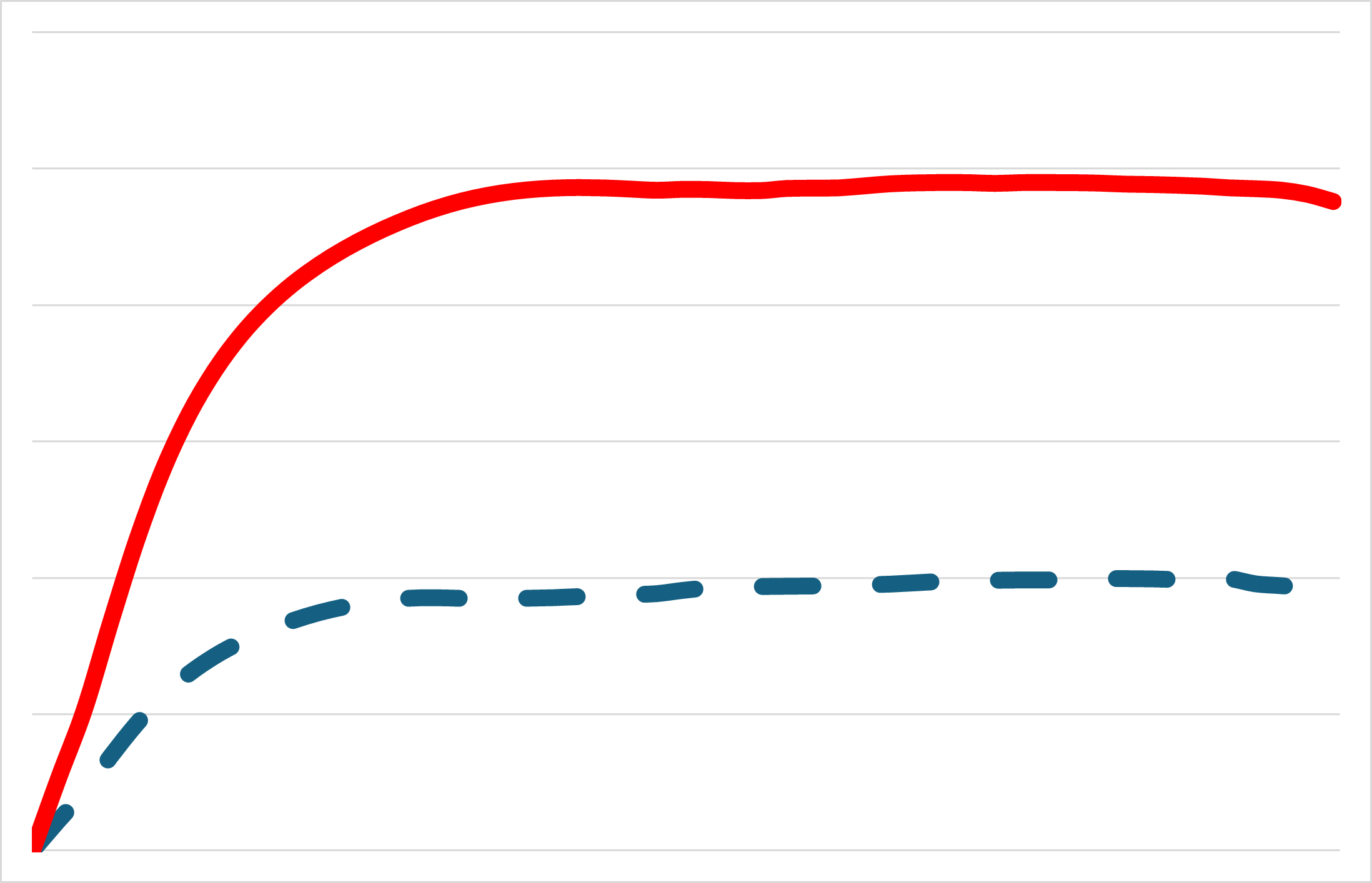} &
    \includegraphics[width=.22\columnwidth]{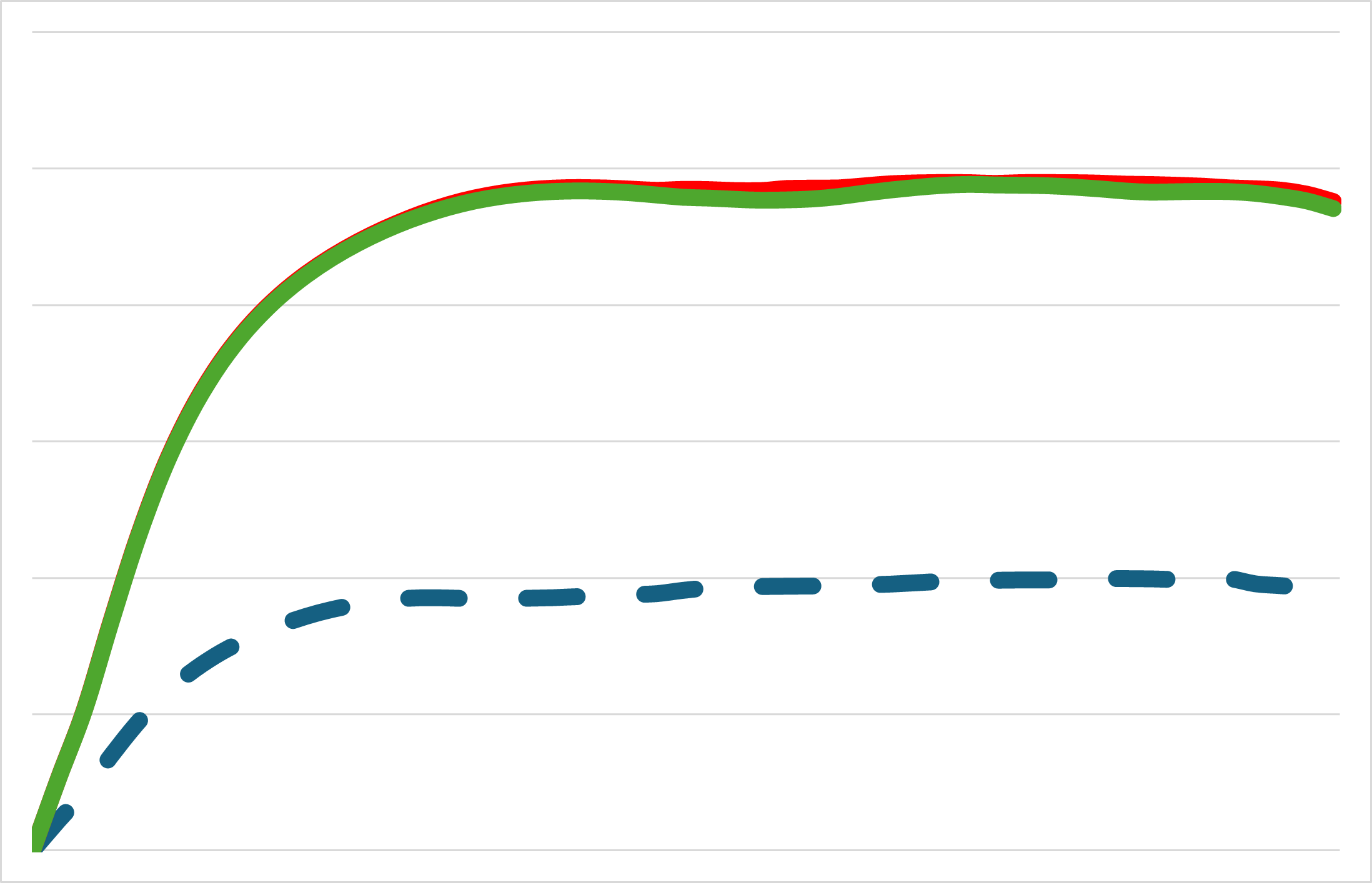} &
    \includegraphics[width=.22\columnwidth]{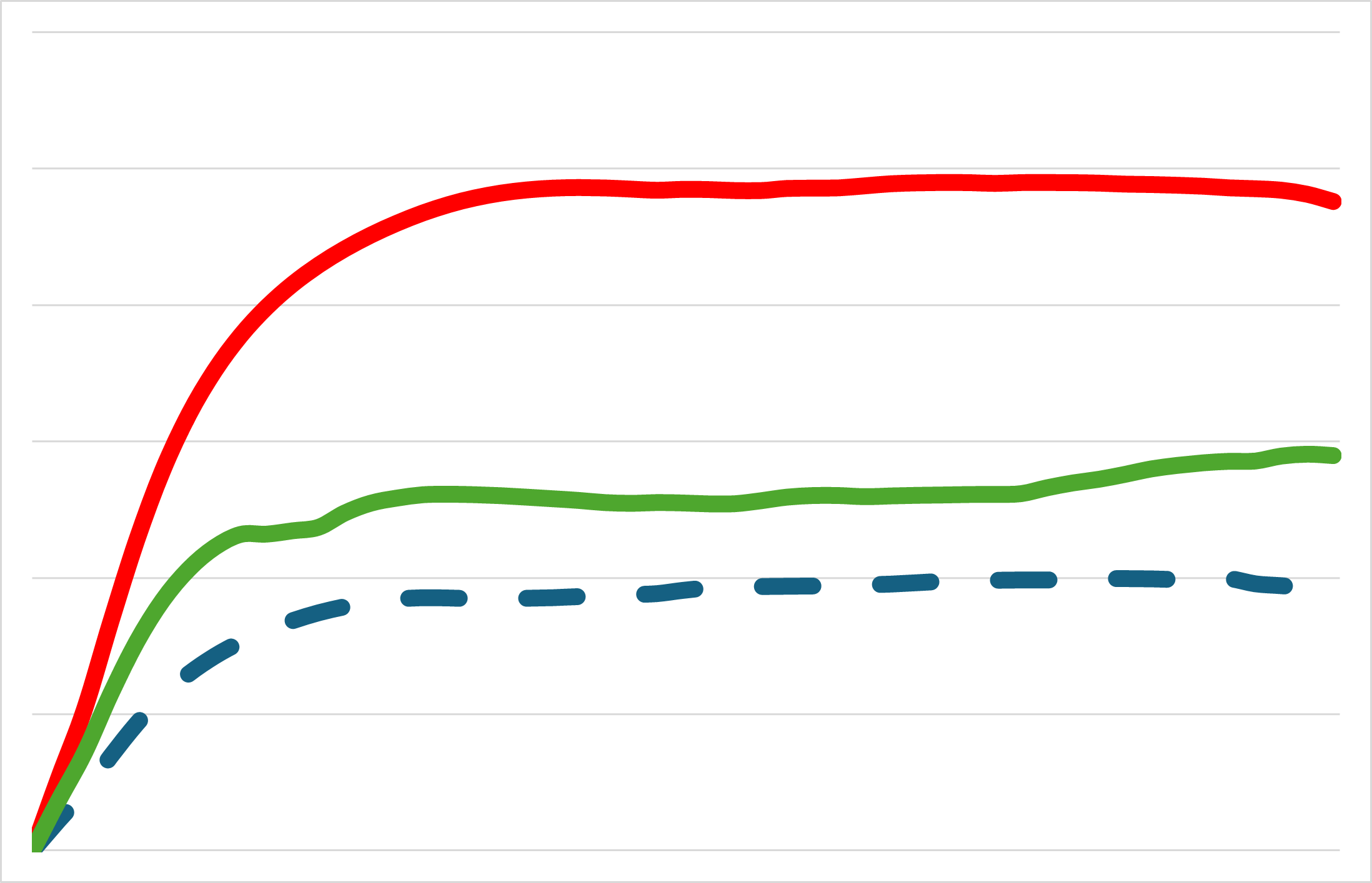} &
    \includegraphics[width=.22\columnwidth]{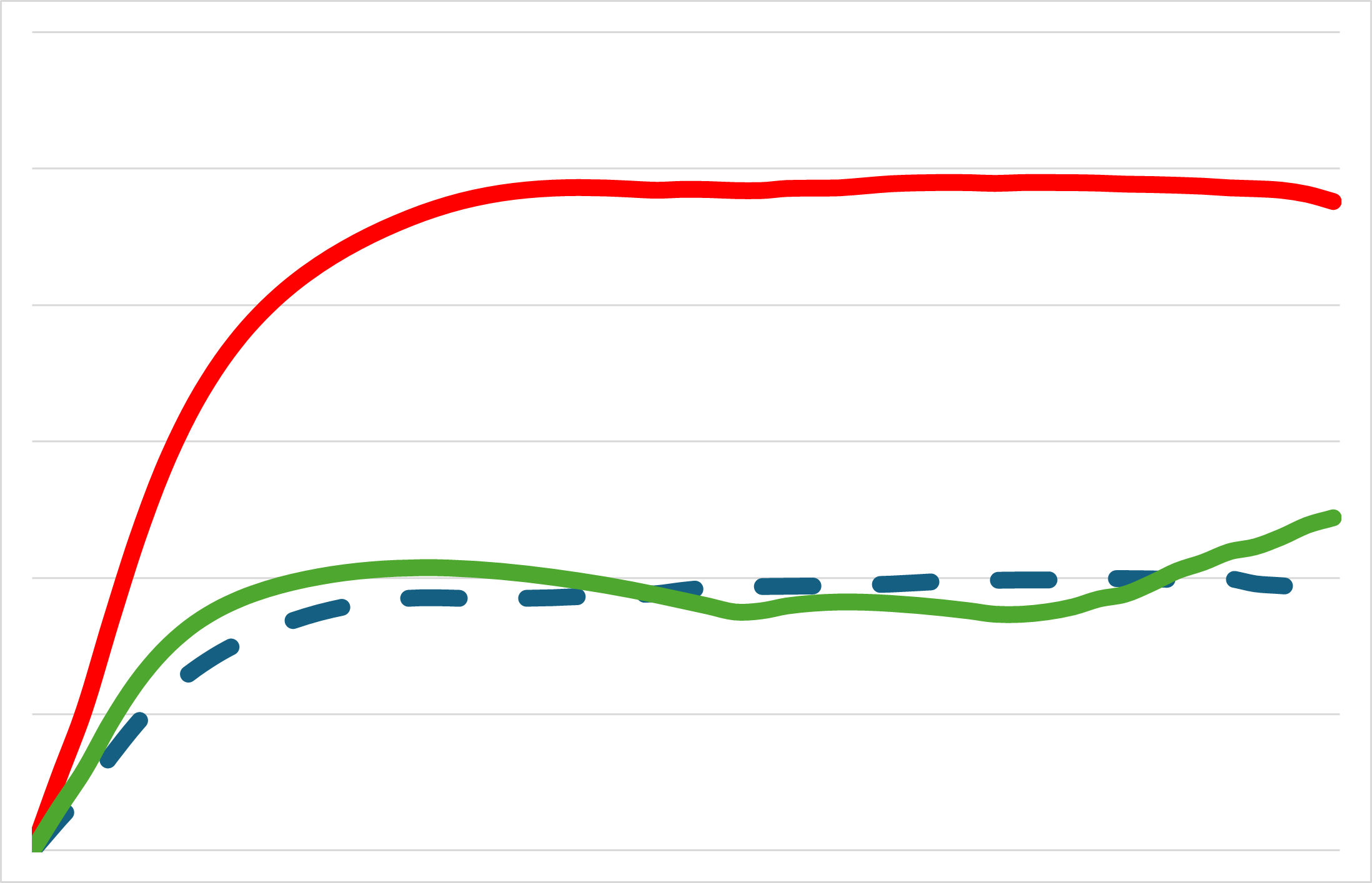} \\ 
    \multicolumn{4}{c}{\includegraphics[width=.6\columnwidth]{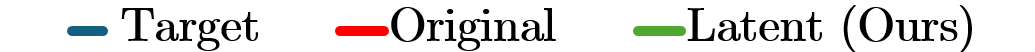}} \\[2pt]
    \midrule
    \multicolumn{5}{c}
    {\bf \footnotesize{MSE} $[\downarrow]$} \\[2pt] & \footnotesize{179.5} & \footnotesize{175.6} & \footnotesize{12.5} & \footnotesize{1.2}\\ 
    \bottomrule
\end{tabular}
\caption{Successive steps of DPO. The sample is iteratively improved and the stress-strain curve aligns with the target. Structural analysis shows progressive deformation under controlled compression.
}
\label{fig:metamaterial-images}
\end{minipage}
\end{wrapfigure}
constraints, and on average, \bcol{only 31.6$\%$ of the samples had a porosity error less than 10\%,} indicating that this method lacks reliability in constraint satisfaction despite its ability to match the training distribution. Furthermore, as shown in Figure \ref{fig:microstructure-micrometry}, the conditional model performs significantly worse than our method on producing realistic microstructures.


In contrast, our {\it Latent Constrained Model} exhibits the most optimal characteristics. The proposed method satisfies the porosity constraints \emph{exactly}, achieves an excellent FID scores, and provides the \emph{highest level of microstructure realism} as assessed by the heuristic-based analysis. This indicates that our approach effectively balances constraint satisfaction with high-quality image generation. {\it This is a significant advantage over existing baselines, as the method ensures both high-quality image generation and precise adherence to the physical constraints.}




 

\subsection{Metamaterial Inverse Design}
\label{subsec:metamaterials}

Now, we demonstrate the efficacy of our method for inverse-design of mechanical metamaterials with specific nonlinear stress-strain behaviors. 
Achieving desired mechanical responses necessitates precise control over factors such as buckling, contact interactions, and large-strain deformations, which are inherently nonlinear and sensitive to small parametric variations. Traditional design approaches often rely on trial-and-error methods, which are time-consuming and may not guarantee optimal solutions.

Specifically, our task is to generate mechanical metamaterials that closely match a target stress-strain response.
From \cite{bastek2023inverse}, we obtain a dataset of periodic stochastic cellular structures subjected to large-strain compression. This dataset includes full-field data capturing complex phenomena such as buckling and contact interactions. 
Because the problem is invariant with respect to length scale, the geometric variables can be treated as dimensionless. The stress is expressed in megapascals (MPa).


\textbf{Inner minimizer.}
Exact constraint evaluation requires the use of {\it an external, non-differentiable simulator} $\phi$.
Hence, we employ DPO as described in Section \ref{sec:surrogate_constraints} to facilitate the proximal mapping with respect to the signals provided from this module. 
To compute the ground truth results for the stress-strain response, we employ Abaqus \cite{borgesson1996abaqus}, using this simulator both for our correction steps and for validation of the accuracy of the generations.
For this implementation, we set the number of perturbed samples $M = 10$, finding this provides strong enough approximations of the gradients to converge to feasible solutions.

\textbf{Results.}
We illustrate the DPO process for our \textit{Latent Constrained Model} in Figure~\ref{fig:metamaterial-images}.
Firstly, note that our method facilitates the reduction of error tolerance in our projection to arbitrarily low levels.
By reducing the tolerance and performing additional DPO iterations, we can tighten the bound on the projection operator, thereby enhancing its accuracy. 
Moreover, the integration of the simulator 
enables the model to generalize beyond the confines of the existing dataset (see Figure~\ref{fig:metamaterial-extrap-ex}).

Due to the complexity of the stress-strain response constraints in this problem, other constraint-aware methods (i.e. Projected Diffusion Models) are inapplicable, and, hence, our analysis focuses on the performance of \textit{Conditional Diffusion Model} baselines. 
We compare to
{\bf (1)} an unconstrained stable diffusion model identical to the one used for our method and
{\bf (2)} state-of-the-art method proposed by \citet{bastek2023inverse}, which operates in the ambient space. 
While our approach optimizes samples to arbitrary levels of precision, we observe that these baselines exhibit high error bounds relative to the target stress-strain curves that are unable to be further optimized. 
As shown in Figure~\ref{tab:metamaterial-results}, with five DPO steps our method provides a {\it 4.6x improvement} over the state-of-the-art model by \cite{bastek2023inverse} and a {\it 5.1x improvement} over the conditional stable diffusion model MSE between the predicted structure stress-strain response and target response.
{\it These results demonstrate the efficacy of our approach for inverse-design problems and generating samples that adhere to the target properties.}

\subsection{Copyright-Safe Generation}
\label{subsec:copyright}
Next, we explore the applicability of the proposed method for satisfying surrogate constraints. 
An important challenge for safe deployment of generative models is mitigating the risk of generating outputs which closely resemble copyrighted material. 
For this setting, a pretrained proxy model is fine-tuned to determine whether the generation infringes upon existing copyrighted material. This model has been calibrated so that the output logits can be directly used to evaluate the likelihood that the samples resemble existing protected material. Hence, by minimizing this surrogate constraint function, we directly minimize the likelihood that the output image includes copyrighted material.

To implement this, we define a permissible threshold for the likelihood function captured by the classifier. A balanced dataset of 8,000 images is constructed to fine-tune the classifier and diffusion models. Here, we use cartoon mouse characters `Jerry,' from {\it Tom and Jerry}, and copyright-protected character `Mickey Mouse'. When fine-tuning the diffusion model, we do not discriminate between these two characters, but the classifier is tuned to identify `Mickey Mouse' as a copyrighted example.


\textbf{Inner minimizer.}
Our correction step begins by performing Principal Component Analysis (PCA) on the 512 features input to the last layer and selecting the two principal components. 
This analysis yields two well-defined clusters corresponding to the class labels.
Provided this, the inner minimizer differentiates with respect to a projection of the noisy sample onto the centroid of the target cluster, as illustrated in Figure~\ref{fig:copyright-projection} (top-right).
Given the complexity of the constraints, we define a Lagrangian relaxation of the projection, terminating the iterative optimization based on proximity to the `Jerry' cluster. 
Empirically, we found it was only necessary to trigger this proximal update if the classifier assigns a high probability to the sample being `Mickey Mouse' at a given step.



\textbf{Results.}
Figure~\ref{fig:copyright-projection} (bottom-right) reports the FID and constraint satisfaction, and Figure~\ref{fig:copyright-projection} (top-left) compares the evolution of the original sample and corrected sample.
We implement a \textit{Conditional Diffusion Model} baselines using and unconstrained stable diffusion model identical to the one used for our method. The conditional baseline generates the protected cartoon character (Mickey Mouse) {\it 33\% of the time}, despite conditioning it against these generations.
The \emph{Projected Diffusion Model} also struggles in this domain, particularly due to the image-space architecture's inability to handle high dimensional data as effectively as the latent models; it failed to generate reasonable samples at a resolution of 1024$\times$1024, unlike the latent models which maintained image quality at this scale. As a result, we were constrained to operate at a much lower resolution of 64$\times$64 during sampling, subsequently relying on post-hoc upscaling techniques to reach the target resolution.
This led to higher FID scores and only marginal improvement over the conditional model's constraint satisfaction.
Conversely, our \textit{Latent Constrained Model} only generates the protected cartoon character 10\% of the time, aligning with the expected bounds of the classifier's predictive accuracy.
Our method has proven to be highly effective because it {preserves the generative capabilities of the model while imposing the defined constraints.} 
The FID scores of the generated images
remain largely unaltered by the gradient-based correction. {\it This demonstrates that our approach can selectively modify generated content to avoid copyrighted material without compromising overall image quality.}

\section{Conclusion}
This paper provides the first work integrating constrained optimization into the sampling process of stable diffusion models.
This intersection enables the generation of outputs that both resemble the training data and adhere to task-specific constraints. 
By leveraging differentiable constraint evaluation functions within a constrained optimization framework, the proposed method ensures the feasibility of generated samples while maintaining high-quality synthesis. Experimental results in material science and safety-critical domains highlight the model's ability to meet strict property requirements and mitigate risks, such as copyright infringement. 
This approach paves the way for broader and more responsible applications of diffusion in domains where strict adherence to constraints is paramount.

\begin{figure*}[t!]
\centering
\renewcommand{\arraystretch}{2.5}
\begin{minipage}{0.75\textwidth}
  \ra{0.25}
  \setlength{\tabcolsep}{1pt}
  \centering
  \hspace{-50pt}
  \begin{tabular}{c cccc}
    \toprule
    \multicolumn{5}{c}{\footnotesize{\bf Denoising process}}\\[2pt]
     & \footnotesize{\sl 25\%} & \footnotesize{\sl 50\%} & \footnotesize{\sl 75\%} & \footnotesize{\sl 100\%} \\
    \midrule
    \raisebox{2.5\height}{\footnotesize{\sl Cond}} 
      & \includegraphics[width=.16\columnwidth]{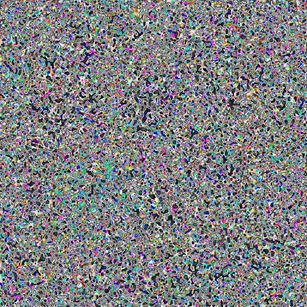}
      & \includegraphics[width=.16\columnwidth]{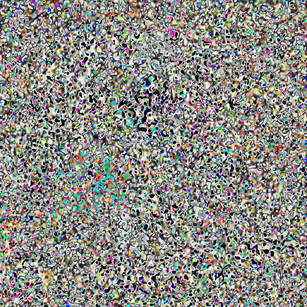}
      & \includegraphics[width=.16\columnwidth]{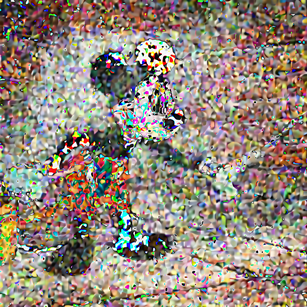}
      & \includegraphics[width=.16\columnwidth]{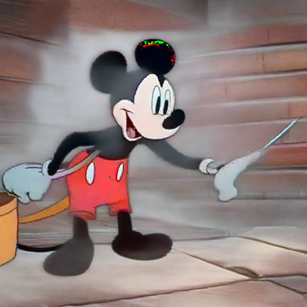} \\
    \addlinespace
    \raisebox{2.5\height}{\footnotesize{\sl Latent (Ours)}} 
      & \includegraphics[width=.16\columnwidth]{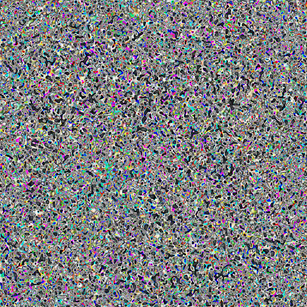}
      & \includegraphics[width=.16\columnwidth]{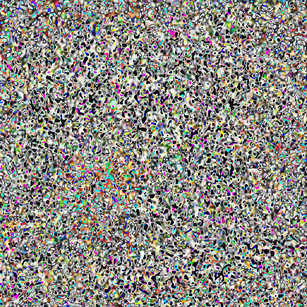}
      & \includegraphics[width=.16\columnwidth]{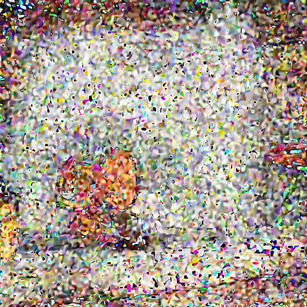}
      & \includegraphics[width=.16\columnwidth]{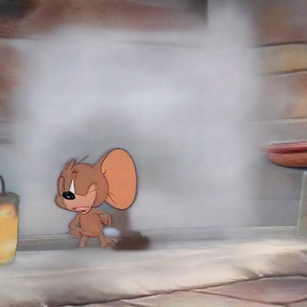} \\
    \bottomrule
  \end{tabular}
\end{minipage}
\hfill
\begin{minipage}{0.24\textwidth}
  \centering
  \hspace{-50pt}
  \includegraphics[width=\linewidth]{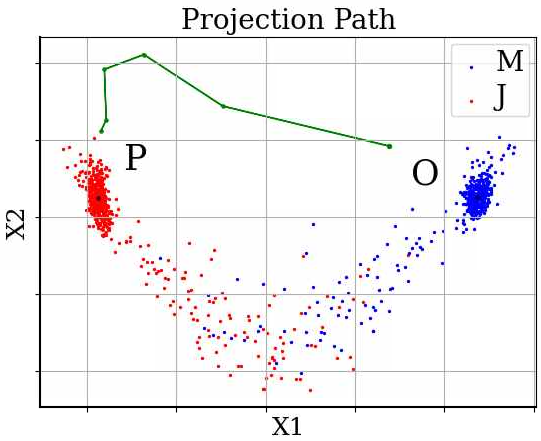}
  
  \vspace{-0.5em}
  \begin{center}

  \renewcommand{\arraystretch}{1.0}
  \footnotesize

  \resizebox{!}{0.24\textwidth}{
  \hspace{-50pt}
  \begin{tabular}{lcc}
    \toprule
    \footnotesize Model & \footnotesize Constraint $[\uparrow]$ & \footnotesize FID $[\downarrow]$ \\
    \midrule
    \sl Cond & 67\,\% & \bfseries 61.2 \\
    \sl PDM & 71\,\% & 75.3 \\
    \sl Latent (Ours) & \bfseries 90\,\% & 65.1 \\
    \bottomrule
  \end{tabular}
  }

  \end{center}
\end{minipage}

\vspace{1ex}

\caption{
  \textbf{Left:} Denoising process of Cond vs. Latent (Ours). Out method drives the denoising toward a copyright-safe image. 
  \textbf{Top-right:} Showing projection from original (O) to projected (P) in the PCA-2 space. 
  \textbf{Bottom-right:} Constraint satisfaction and FID scores.
}
\label{fig:copyright-projection}
\end{figure*}

\section*{Contributions}
FF and JC conceived the idea and developed the initial methods. SZ and JC implemented the algorithms, contributed to discussions, and refined the research direction. SZ conducted the experiments and analysis, while JC contributed to theoretical development and additional experiments. FF provided overall guidance and coordination. JC, FF, and SZ co-wrote the paper. LO and DA funded SZ’s visit to FF’s lab.

\section*{Acknowledgments}
The material is based upon work supported by National Science Foundations (NSF) awards 2533631,
2401285, 2334448, and 2334936, and Defense Advanced Research Projects Agency (DARPA) under Contract
No.~\#HR0011252E005.
The authors acknowledge Research Computing at the University of Virginia for providing computational resources that have contributed to the results reported within this paper. 
The views and conclusions of this work are those of the authors only.

\newpage
\bibliography{aaai}
\bibliographystyle{unsrtnat}

\clearpage
\section*{NeurIPS Paper Checklist}

\begin{enumerate}

\item {\bf Claims}
    \item[] Question: Do the main claims made in the abstract and introduction accurately reflect the paper's contributions and scope?
    \item[] Answer: \answerYes{} 
    \item[] Justification: The claims made in the abstract are theoretically and empirically supported by the content of the paper. We include proofs for all theoretical claims and three distinct, real-world settings to evaluate the proposed methodology.
    \item[] Guidelines:
    \begin{itemize}
        \item The answer NA means that the abstract and introduction do not include the claims made in the paper.
        \item The abstract and/or introduction should clearly state the claims made, including the contributions made in the paper and important assumptions and limitations. A No or NA answer to this question will not be perceived well by the reviewers. 
        \item The claims made should match theoretical and experimental results, and reflect how much the results can be expected to generalize to other settings. 
        \item It is fine to include aspirational goals as motivation as long as it is clear that these goals are not attained by the paper. 
    \end{itemize}

\item {\bf Limitations}
    \item[] Question: Does the paper discuss the limitations of the work performed by the authors?
    \item[] Answer: \answerYes{} 
    \item[] Justification: Limitations are discussed in a standalone section (Appendix \ref{app:limitations}).
    \item[] Guidelines:
    \begin{itemize}
        \item The answer NA means that the paper has no limitation while the answer No means that the paper has limitations, but those are not discussed in the paper. 
        \item The authors are encouraged to create a separate "Limitations" section in their paper.
        \item The paper should point out any strong assumptions and how robust the results are to violations of these assumptions (e.g., independence assumptions, noiseless settings, model well-specification, asymptotic approximations only holding locally). The authors should reflect on how these assumptions might be violated in practice and what the implications would be.
        \item The authors should reflect on the scope of the claims made, e.g., if the approach was only tested on a few datasets or with a few runs. In general, empirical results often depend on implicit assumptions, which should be articulated.
        \item The authors should reflect on the factors that influence the performance of the approach. For example, a facial recognition algorithm may perform poorly when image resolution is low or images are taken in low lighting. Or a speech-to-text system might not be used reliably to provide closed captions for online lectures because it fails to handle technical jargon.
        \item The authors should discuss the computational efficiency of the proposed algorithms and how they scale with dataset size.
        \item If applicable, the authors should discuss possible limitations of their approach to address problems of privacy and fairness.
        \item While the authors might fear that complete honesty about limitations might be used by reviewers as grounds for rejection, a worse outcome might be that reviewers discover limitations that aren't acknowledged in the paper. The authors should use their best judgment and recognize that individual actions in favor of transparency play an important role in developing norms that preserve the integrity of the community. Reviewers will be specifically instructed to not penalize honesty concerning limitations.
    \end{itemize}

\item {\bf Theory assumptions and proofs}
    \item[] Question: For each theoretical result, does the paper provide the full set of assumptions and a complete (and correct) proof?
    \item[] Answer: \answerYes{} 
    \item[] Justification: Proofs for all theoretical results are included in Appendix \ref{appendix:proof}. These proofs are complete and correct to the best of our knowledge.
    \item[] Guidelines:
    \begin{itemize}
        \item The answer NA means that the paper does not include theoretical results. 
        \item All the theorems, formulas, and proofs in the paper should be numbered and cross-referenced.
        \item All assumptions should be clearly stated or referenced in the statement of any theorems.
        \item The proofs can either appear in the main paper or the supplemental material, but if they appear in the supplemental material, the authors are encouraged to provide a short proof sketch to provide intuition. 
        \item Inversely, any informal proof provided in the core of the paper should be complemented by formal proofs provided in appendix or supplemental material.
        \item Theorems and Lemmas that the proof relies upon should be properly referenced. 
    \end{itemize}

    \item {\bf Experimental result reproducibility}
    \item[] Question: Does the paper fully disclose all the information needed to reproduce the main experimental results of the paper to the extent that it affects the main claims and/or conclusions of the paper (regardless of whether the code and data are provided or not)?
    \item[] Answer: \answerYes{} 
    \item[] Justification: Our experimental section and Appendix \ref{appendix:additional_results} provide all necessary details to reporduce our work (e.g., datasets, implementation details, etc.).
    \item[] Guidelines:
    \begin{itemize}
        \item The answer NA means that the paper does not include experiments.
        \item If the paper includes experiments, a No answer to this question will not be perceived well by the reviewers: Making the paper reproducible is important, regardless of whether the code and data are provided or not.
        \item If the contribution is a dataset and/or model, the authors should describe the steps taken to make their results reproducible or verifiable. 
        \item Depending on the contribution, reproducibility can be accomplished in various ways. For example, if the contribution is a novel architecture, describing the architecture fully might suffice, or if the contribution is a specific model and empirical evaluation, it may be necessary to either make it possible for others to replicate the model with the same dataset, or provide access to the model. In general. releasing code and data is often one good way to accomplish this, but reproducibility can also be provided via detailed instructions for how to replicate the results, access to a hosted model (e.g., in the case of a large language model), releasing of a model checkpoint, or other means that are appropriate to the research performed.
        \item While NeurIPS does not require releasing code, the conference does require all submissions to provide some reasonable avenue for reproducibility, which may depend on the nature of the contribution. For example
        \begin{enumerate}
            \item If the contribution is primarily a new algorithm, the paper should make it clear how to reproduce that algorithm.
            \item If the contribution is primarily a new model architecture, the paper should describe the architecture clearly and fully.
            \item If the contribution is a new model (e.g., a large language model), then there should either be a way to access this model for reproducing the results or a way to reproduce the model (e.g., with an open-source dataset or instructions for how to construct the dataset).
            \item We recognize that reproducibility may be tricky in some cases, in which case authors are welcome to describe the particular way they provide for reproducibility. In the case of closed-source models, it may be that access to the model is limited in some way (e.g., to registered users), but it should be possible for other researchers to have some path to reproducing or verifying the results.
        \end{enumerate}
    \end{itemize}

\item {\bf Open access to data and code}
    \item[] Question: Does the paper provide open access to the data and code, with sufficient instructions to faithfully reproduce the main experimental results, as described in supplemental material?
    \item[] Answer: \answerYes{} 
    \item[] Justification: We provide code with our submission and intend to release a public repository following the reviewing process. 
    \item[] Guidelines:
    \begin{itemize}
        \item The answer NA means that paper does not include experiments requiring code.
        \item Please see the NeurIPS code and data submission guidelines (\url{https://nips.cc/public/guides/CodeSubmissionPolicy}) for more details.
        \item While we encourage the release of code and data, we understand that this might not be possible, so “No” is an acceptable answer. Papers cannot be rejected simply for not including code, unless this is central to the contribution (e.g., for a new open-source benchmark).
        \item The instructions should contain the exact command and environment needed to run to reproduce the results. See the NeurIPS code and data submission guidelines (\url{https://nips.cc/public/guides/CodeSubmissionPolicy}) for more details.
        \item The authors should provide instructions on data access and preparation, including how to access the raw data, preprocessed data, intermediate data, and generated data, etc.
        \item The authors should provide scripts to reproduce all experimental results for the new proposed method and baselines. If only a subset of experiments are reproducible, they should state which ones are omitted from the script and why.
        \item At submission time, to preserve anonymity, the authors should release anonymized versions (if applicable).
        \item Providing as much information as possible in supplemental material (appended to the paper) is recommended, but including URLs to data and code is permitted.
    \end{itemize}

\item {\bf Experimental setting/details}
    \item[] Question: Does the paper specify all the training and test details (e.g., data splits, hyperparameters, how they were chosen, type of optimizer, etc.) necessary to understand the results?
    \item[] Answer: \answerYes{} 
    \item[] Justification: We include these details when applicable.
    \item[] Guidelines:
    \begin{itemize}
        \item The answer NA means that the paper does not include experiments.
        \item The experimental setting should be presented in the core of the paper to a level of detail that is necessary to appreciate the results and make sense of them.
        \item The full details can be provided either with the code, in appendix, or as supplemental material.
    \end{itemize}

\item {\bf Experiment statistical significance}
    \item[] Question: Does the paper report error bars suitably and correctly defined or other appropriate information about the statistical significance of the experiments?
    \item[] Answer: \answerYes{} 
    \item[] Justification: We have included error bars in our experimental results.
    \item[] Guidelines:
    \begin{itemize}
        \item The answer NA means that the paper does not include experiments.
        \item The authors should answer "Yes" if the results are accompanied by error bars, confidence intervals, or statistical significance tests, at least for the experiments that support the main claims of the paper.
        \item The factors of variability that the error bars are capturing should be clearly stated (for example, train/test split, initialization, random drawing of some parameter, or overall run with given experimental conditions).
        \item The method for calculating the error bars should be explained (closed form formula, call to a library function, bootstrap, etc.)
        \item The assumptions made should be given (e.g., Normally distributed errors).
        \item It should be clear whether the error bar is the standard deviation or the standard error of the mean.
        \item It is OK to report 1-sigma error bars, but one should state it. The authors should preferably report a 2-sigma error bar than state that they have a 96\% CI, if the hypothesis of Normality of errors is not verified.
        \item For asymmetric distributions, the authors should be careful not to show in tables or figures symmetric error bars that would yield results that are out of range (e.g. negative error rates).
        \item If error bars are reported in tables or plots, The authors should explain in the text how they were calculated and reference the corresponding figures or tables in the text.
    \end{itemize}

\item {\bf Experiments compute resources}
    \item[] Question: For each experiment, does the paper provide sufficient information on the computer resources (type of compute workers, memory, time of execution) needed to reproduce the experiments?
    \item[] Answer: \answerYes{} 
    \item[] Justification: We describe the used resources in Appendix \ref{appendix:runtime}.
    \item[] Guidelines:
    \begin{itemize}
        \item The answer NA means that the paper does not include experiments.
        \item The paper should indicate the type of compute workers CPU or GPU, internal cluster, or cloud provider, including relevant memory and storage.
        \item The paper should provide the amount of compute required for each of the individual experimental runs as well as estimate the total compute. 
        \item The paper should disclose whether the full research project required more compute than the experiments reported in the paper (e.g., preliminary or failed experiments that didn't make it into the paper). 
    \end{itemize}
    
\item {\bf Code of ethics}
    \item[] Question: Does the research conducted in the paper conform, in every respect, with the NeurIPS Code of Ethics \url{https://neurips.cc/public/EthicsGuidelines}?
    \item[] Answer: \answerYes{} 
    \item[] Justification: All ethical standards have been observed in conducting this research.
    \item[] Guidelines:
    \begin{itemize}
        \item The answer NA means that the authors have not reviewed the NeurIPS Code of Ethics.
        \item If the authors answer No, they should explain the special circumstances that require a deviation from the Code of Ethics.
        \item The authors should make sure to preserve anonymity (e.g., if there is a special consideration due to laws or regulations in their jurisdiction).
    \end{itemize}

\item {\bf Broader impacts}
    \item[] Question: Does the paper discuss both potential positive societal impacts and negative societal impacts of the work performed?
    \item[] Answer: \answerNA{} 
    \item[] Justification: The societal impacts of this work are common to other advancements in machine learning. As these are general and well-known by those in the field, we do not expressly highlighted them.
    \item[] Guidelines:
    \begin{itemize}
        \item The answer NA means that there is no societal impact of the work performed.
        \item If the authors answer NA or No, they should explain why their work has no societal impact or why the paper does not address societal impact.
        \item Examples of negative societal impacts include potential malicious or unintended uses (e.g., disinformation, generating fake profiles, surveillance), fairness considerations (e.g., deployment of technologies that could make decisions that unfairly impact specific groups), privacy considerations, and security considerations.
        \item The conference expects that many papers will be foundational research and not tied to particular applications, let alone deployments. However, if there is a direct path to any negative applications, the authors should point it out. For example, it is legitimate to point out that an improvement in the quality of generative models could be used to generate deepfakes for disinformation. On the other hand, it is not needed to point out that a generic algorithm for optimizing neural networks could enable people to train models that generate Deepfakes faster.
        \item The authors should consider possible harms that could arise when the technology is being used as intended and functioning correctly, harms that could arise when the technology is being used as intended but gives incorrect results, and harms following from (intentional or unintentional) misuse of the technology.
        \item If there are negative societal impacts, the authors could also discuss possible mitigation strategies (e.g., gated release of models, providing defenses in addition to attacks, mechanisms for monitoring misuse, mechanisms to monitor how a system learns from feedback over time, improving the efficiency and accessibility of ML).
    \end{itemize}
    
\item {\bf Safeguards}
    \item[] Question: Does the paper describe safeguards that have been put in place for responsible release of data or models that have a high risk for misuse (e.g., pretrained language models, image generators, or scraped datasets)?
    \item[] Answer: \answerNA{} 
    \item[] Justification: This paper does not deal with data or models that pose high risks.
    \item[] Guidelines:
    \begin{itemize}
        \item The answer NA means that the paper poses no such risks.
        \item Released models that have a high risk for misuse or dual-use should be released with necessary safeguards to allow for controlled use of the model, for example by requiring that users adhere to usage guidelines or restrictions to access the model or implementing safety filters. 
        \item Datasets that have been scraped from the Internet could pose safety risks. The authors should describe how they avoided releasing unsafe images.
        \item We recognize that providing effective safeguards is challenging, and many papers do not require this, but we encourage authors to take this into account and make a best faith effort.
    \end{itemize}

\item {\bf Licenses for existing assets}
    \item[] Question: Are the creators or original owners of assets (e.g., code, data, models), used in the paper, properly credited and are the license and terms of use explicitly mentioned and properly respected?
    \item[] Answer: \answerYes{} 
    \item[] Justification: All assets include are used in the capacity allowed by their respective licenses, and the original owners have been properly cited.
    \item[] Guidelines:
    \begin{itemize}
        \item The answer NA means that the paper does not use existing assets.
        \item The authors should cite the original paper that produced the code package or dataset.
        \item The authors should state which version of the asset is used and, if possible, include a URL.
        \item The name of the license (e.g., CC-BY 4.0) should be included for each asset.
        \item For scraped data from a particular source (e.g., website), the copyright and terms of service of that source should be provided.
        \item If assets are released, the license, copyright information, and terms of use in the package should be provided. For popular datasets, \url{paperswithcode.com/datasets} has curated licenses for some datasets. Their licensing guide can help determine the license of a dataset.
        \item For existing datasets that are re-packaged, both the original license and the license of the derived asset (if it has changed) should be provided.
        \item If this information is not available online, the authors are encouraged to reach out to the asset's creators.
    \end{itemize}

\item {\bf New assets}
    \item[] Question: Are new assets introduced in the paper well documented and is the documentation provided alongside the assets?
    \item[] Answer: \answerNA{} 
    \item[] Justification: No new assets are provided by this paper.
    \item[] Guidelines:
    \begin{itemize}
        \item The answer NA means that the paper does not release new assets.
        \item Researchers should communicate the details of the dataset/code/model as part of their submissions via structured templates. This includes details about training, license, limitations, etc. 
        \item The paper should discuss whether and how consent was obtained from people whose asset is used.
        \item At submission time, remember to anonymize your assets (if applicable). You can either create an anonymized URL or include an anonymized zip file.
    \end{itemize}

\item {\bf Crowdsourcing and research with human subjects}
    \item[] Question: For crowdsourcing experiments and research with human subjects, does the paper include the full text of instructions given to participants and screenshots, if applicable, as well as details about compensation (if any)? 
    \item[] Answer: \answerNA{} 
    \item[] Justification: No human subjects were involved in this research.
    \item[] Guidelines:
    \begin{itemize}
        \item The answer NA means that the paper does not involve crowdsourcing nor research with human subjects.
        \item Including this information in the supplemental material is fine, but if the main contribution of the paper involves human subjects, then as much detail as possible should be included in the main paper. 
        \item According to the NeurIPS Code of Ethics, workers involved in data collection, curation, or other labor should be paid at least the minimum wage in the country of the data collector. 
    \end{itemize}

\item {\bf Institutional review board (IRB) approvals or equivalent for research with human subjects}
    \item[] Question: Does the paper describe potential risks incurred by study participants, whether such risks were disclosed to the subjects, and whether Institutional Review Board (IRB) approvals (or an equivalent approval/review based on the requirements of your country or institution) were obtained?
    \item[] Answer: \answerNA{} 
    \item[] Justification: The paper does not involve crowdsourcing nor research with human subjects.
    \item[] Guidelines:
    \begin{itemize}
        \item The answer NA means that the paper does not involve crowdsourcing nor research with human subjects.
        \item Depending on the country in which research is conducted, IRB approval (or equivalent) may be required for any human subjects research. If you obtained IRB approval, you should clearly state this in the paper. 
        \item We recognize that the procedures for this may vary significantly between institutions and locations, and we expect authors to adhere to the NeurIPS Code of Ethics and the guidelines for their institution. 
        \item For initial submissions, do not include any information that would break anonymity (if applicable), such as the institution conducting the review.
    \end{itemize}

\item {\bf Declaration of LLM usage}
    \item[] Question: Does the paper describe the usage of LLMs if it is an important, original, or non-standard component of the core methods in this research? Note that if the LLM is used only for writing, editing, or formatting purposes and does not impact the core methodology, scientific rigorousness, or originality of the research, declaration is not required.
    \item[] Answer: \answerNA{} 
    \item[] Justification: LLMs were not involved in the core method development.
    \item[] Guidelines:
    \begin{itemize}
        \item The answer NA means that the core method development in this research does not involve LLMs as any important, original, or non-standard components.
        \item Please refer to our LLM policy (\url{https://neurips.cc/Conferences/2025/LLM}) for what should or should not be described.
    \end{itemize}

\end{enumerate}

\newpage
\appendix
\onecolumn

\section{Limitation}
\label{app:limitations}

\textbf{Classifier-based constraints.}
Section \ref{subsec:copyright} motivates the use of classifier-based constraints for stable diffusion models. While we illustrate one potential use case of this approach, we hold that this can generalize to arbitrary properties that can be captured using a classifier. 
We defer more rigorous comparison to classifier guidance \cite{ho2020denoising} and classifier-free guidance \cite{ho2022classifier} for future work.

\textbf{Stable video generation.}
While there many exciting applications for using this approach for scientific and safety-critical domains when generating data in the image space, many more applications will be enabled by extending this work to video diffusion models. While we compare to video diffusion baselines in Section \ref{subsec:metamaterials}, our training-free correction algorithm is only applied to stable image diffusion models. The introduction of temporal constraints over video frames holds significant potential that we plan to investigate in subsequent studies.

\textbf{Integration of external simulators.}
This paper motivates future study of embedding non-differentiable simulators within generative process. \citet{yuan2023physdiff} previously proposed the inclusion of physics-based simulators to augment diffusion generations, but in their case, differentiability was not considered as their simulation used a reinforcement-learning environment to directly return a modified version of the noisy sample $\bm{x}_t$. More often, external simulators are used to provide a measure of constraint satisfaction rather than to transform a sample. The techniques explored in this paper provide a vastly more general framework for incorporating these black-box simulators as a differentiable components of the sampling process and, hence, opens the door for the integration of increasingly complex constraints. Further use cases will be explored in consecutive works.

\section{Related Work}

\textbf{Conditional diffusion guidance.}
Conditional diffusion models have emerged as a powerful tool to guide generative models toward specific tasks. Classifier-based \cite{dhariwal2021diffusion} and classifier-free \cite{ho2022classifier} conditioning methods have been employed to frame higher-level constraints for inverse design problems \cite{chung2022diffusion, chung2022score, wang2023diffusebot, bastek2023inverse} and physically grounding generations \cite{carvalho2023motion, carvalho2024motion, yuan2023physdiff}.
\citeauthor{rombach2022high} extended conditional guidance to stable diffusion models via class-conditioning, allowing similar guidance schemes to be applied for latent generation. 
However, while conditioning based approaches can effectively capture class-level specifications, they are largely ineffective when lower-level properties need to be satisfied (as demonstrated in Section \ref{subsec:microstruct}).

\textbf{Training-free diffusion guidance.}
Similar to classifier-based conditioning, training-free guidance approaches leverage an external classifier to guide generations to satisfy specific constraints.
Juxtaposed to classifier-based conditioning, and the method proposed in this paper, training-free guidance leverages {\it off-the-shelf} classifiers which have been trained exclusively on clean data. Several approaches have been proposed which incorporate slight variations of training-free guidance to improve constraint adherence \cite{yu2023freedom, mo2024freecontrol, he2023manifold, bansal2023universal}.
\citeauthor{ye2024tfg} compose a unified view of these methods, detailing search strategies to optimize the implementation of this paradigm.
\citeauthor{huang2024constrained} improve constraint adherence by introducing a ``trust schedule'' that increases the strength of the guidance as the reverse process progresses but remain unable to exactly satisfy the constraint set, even within the statistical bounds of the employed classifier. Importantly, training-free guidance approaches suffer from two significant shortcomings. First, this paradigm exhibits worse performance than classifier-based guidance as the off-the-shelf classifiers provide inaccurate gradients at higher noise levels. Second, like classifier-based guidance, these guidance schemes are ineffective in satisfying lower-level constraints

\textbf{Post-processing optimization.}
When strict constraints are required, diffusion outputs are frequently used as initial guesses for a subsequent constrained optimization procedure. This approach has been shown to be particularly advantageous in non-convex scenarios where the initial starting point strongly influences convergence to a feasible solution \cite{power2023sampling}. Other methods incorporate optimization objectives directly into the diffusion training process, essentially framing the post-processing optimization steps as an extension of the generative model \cite{giannone2023aligning, maze2023diffusion}. However, these methods rely on a succinctly formulated objective and therefore often remain effective only for niche problems—such as constrained trajectory optimization—limiting their applicability to a wider set of generative tasks. Furthermore, post-processing steps are agnostic to the original data distribution, and, hence, the constraint correction steps often results in divergence from this distribution altogether. This has been empirically demonstrated in previous studies on constrained diffusion model generation \cite{christopher2024constrained}.

\textbf{Hard constraints for generative models.}  
\citet{frerix2020homogeneous} proposed an approach to impose hard constraints on autoencoder outputs by scaling the generated data so that feasibility is enforced, but this solution is limited to simple linear constraints. \citet{liu2024mirror} introduced “mirror mappings” to handle constraints, though their method applies solely to familiar convex constraint sets. Given the complex constraints examined in this paper, neither of these strategies was suitable for our experiments. Alternatively, \citet{fishman2023diffusion, fishman2024metropolis} extended the classes of constraints that can be handled, but their approach is demonstrated only for trivial predictive tasks with MLPs where constraints can be represented as convex polytopes. This confines their method to constraints approximated by simple geometric shapes, such as L2-balls, simplices, or polytopes.
\citet{coletta2023constrained} provide a generalization of constrained diffusion approaches for time-series problems, and, while these settings are not explored in our work, we take inspiration from their framing of diffusion sampling as a constrained optimization framework.
Most similar to our work are  \citet{sharma2024diffuse} and \citet{christopher2024constrained}, concurrent studies which propose projected Langevin dynamics sampling processes for constrained generation tasks.
\citet{sharma2024diffuse} develop this approach for constrained graph generation, making this work not directly applicable to the tasks explored in this work.
\citeauthor{christopher2024constrained} generalize this sampling process for arbitrary constraint sets, but, like the other methods for hard constraint imposition discussed, their work is not extended to stable diffusion models.

\textbf{Stable diffusion for constrained settings.}
Although inverse design has been extensively studied in diffusion models operating directly in image space, fewer studies have investigated the potential of stable diffusion models for inverse design, as these models have primarily been applied in commercial rather than scientific contexts.
\citet{song2023solving} propose \emph{ReSample}, which repurposed latent diffusion models as inverse-problem solvers by alternating a hard data-consistency projection and stochastic resampling steps returning the sample to the learned manifold. While applicable to image reconstruction with a fixed measurement operator, this approach cannot be adapted to handle open-ended constraint sets as explored in this work as it assumes knowledge of this operator for the data-consistency projection. More recently, \citet{zirvi2024diffusion} follow-up on this line of research, demonstrating improvement over \cite{song2023solving} and \cite{rout2023solving}, a method proposed concurrently to \emph{ReSample}, but all three methods are limited in the requirement of a forward measurement operator which does not appear in our explored settings. This dependency narrows the scope of these inverse design methods, and this rigid definition of inverse design problems similarly restricts the applicability similarly presented methodology for diffusion operating in the image space.

Notably, a handful of other works explore constrained settings more generally without requiring dependency on the measurement operator.
\citet{engel2017latent} proposes learning the constraints in the latent space via a neural operator, applying this methodology to auto-encoder architectures. Although effective for soft constraint conditioning, our attempts to adapt such approaches to latent diffusion were unsuccessful and such approaches can be viewed as a weaker alternative to classifier-based conditional guidance. Rather, conditioning approaches such as proposed by \cite{rombach2022high} have proven much stronger alternatives, leading to our selection of constraint-conditioned stable diffusion as a baseline throughout the explored settings. 
Finally, concurrent to our study, \citet{shi2025clad} proposes \emph{CLAD} for imposing \emph{learned} constraints on latent diffusion model generations, but, importantly, their study has very different objectives than our work. As opposed to our work, which enables imposition of hard constraints at inference time, \emph{CLAD} learns latent vectors to statistically optimize feasibility for visual planning tasks. Importantly, this approach outputs symbolic action tokens as opposed to high-resolution pixel images, a distinctly disconnected modality from those explored in this paper.

\section{Comparison to Classifier Guidance}
\label{app:classifier-guidance}
At first glance, this approach might appear similar to classifier-guided diffusion \cite{dhariwal2021diffusion}, as both rely on an external predictive model to direct the generation process. However, the two methods fundamentally differ in how the methods apply the model’s gradient. Classifier-guided diffusion encourages similarity to feasible training samples, offering implicit guidance. In contrast, our approach provides {\it statistical guarantees as to constraint satisfaction within the confidence levels of the classifier,} providing a more direct and targeted mechanism for integrating constraints into the generative process.

\textbf{Classifier-based guidance.} Applies Bayesian principles to direct generation toward a target class 
$y$, based on the decomposition:
{\small
\begin{equation}
\nabla_{\bm{x}_t} \log p(\bm{x}_t \mid y) = \nabla_{\bm{x}_t} \log p(\bm{x}_t) + \nabla_{\bm{x}_t} \log p(y \mid \bm{x}_t)
\end{equation}
}
This conditional generation incorporates a classifier $p(y \mid \bm{x}_t)$ into the sampling process. During generation, the model updates the noisy sample $\bm{x}_t$ by combining the standard denoising step with the classifier’s gradient:
{\small
\begin{equation}
    x_{t+1} = x_t + \epsilon \nabla_{x_t} \log p(x_t) + \sqrt{2 \epsilon}  + w \nabla_{x_t} \log p(y \mid x_t)
\end{equation}
Here, the classifier’s gradient $w \nabla_{x_t} \log p(y \mid \bm{x}_t)$ guides the denoising toward samples likely belonging to class $y$, with $w$ controlling the guidance strength.

\textbf{Training-free guidance.} 
Extends the principles of classifier-based guidance by leveraging pretrained, ``off-the-shelf'' classifiers to steer the generation process without requiring additional training. 
As with classifier-based guidance, the conditional generation incorporates a classifier $p(y \mid \bm{x}_t)$ into the sampling process. However, rather than training a custom classifier tailored to the diffusion model, this approach directly uses existing models to compute the guidance term.
By decoupling the classifier from the diffusion model training, training-free guidance achieves flexibility and reusability, making it a practical choice for tasks where suitable pretrained classifiers are available.

\textbf{Surrogate constraint corrections.} Introduce a structured method to enforce class-specific constraints by adjusting samples at specific diffusion steps. In this approach, a surrogate model modifies the sample $\mathbf{z}_t$ to $\hat{\mathbf{z}}_t$ to meet the target constraints. 
These corrections can be introduced either at the beginning of the diffusion process, setting a strong initial alignment to the target class and then allowing the model to evolve naturally, or at designated points within the denoising sequence to enforce the constraints more explicitly at each selected step. 
In contrast, while classifier-based guidance and training-free guidance continuously integrate classifier gradients to steer generation toward the target class, surrogate constraint corrections offer discrete, targeted adjustments throughout the reverse diffusion process. This makes surrogate constraints particularly effective when strict adherence to certain class-specific conditions is necessary at particular stages of the generation process.

\section{Augmented Lagrangian Method}
\label{app:aug_lagrangian}

In experimental settings \ref{subsec:metamaterials} and \ref{subsec:copyright}, a closed-form projection operator cannot be derived in the image space, and it becomes necessary to solve this subproblem through gradient-based relaxations. In these settings, we adopt the \emph{Augmented Lagrangian Method} to facilitate this relaxation \cite{boyd2004convex}. This method operated by restructuring a constrained optimization problem, in our case
\[
    \arg \min_{y \in \mathbf{C}} \|\bm{y} - \mathcal{D}(\mathbf{z}_t) \|
\]
as a minimization objective which captures feasibility constraints through penalty terms. Expressly, the use of Lagrangian multipliers \(\lambda = (\lambda_1, \dots, \lambda_n)\) and quadratic penalty terms \(\mu = (\mu_1, \dots, \mu_n)\) form the \emph{dual variables} used to solve the relaxation. Hence, the objective becomes:
\[
    \arg \min_{\bm{y}}
    \|\bm{y} - \mathcal{D}(\mathbf{z}_t) \|
+ \lambda \,\mathbf{g}(\bm{y})
+ \tfrac{\mu}{2}\,\mathbf{g}(\bm{y})^2.
\]
This problem provides a lower-bound approximation to the original projection. Its Lagrangian dual solves:
\[
\arg\max_{\lambda,\mu}
\Bigl(\arg\min_{\bm{y}}
    \|\bm{y} - \mathcal{D}(\mathbf{z}_t) \|
+ \lambda \,\mathbf{g}(\bm{y})
+ \tfrac{\mu}{2}\,\mathbf{g}(\bm{y})^2\Bigr).
\]
\begin{wrapfigure}[11]{r}{0.44\linewidth}
        \vspace{-5pt}
        \begin{minipage}[t]{\linewidth}
         \begin{algorithm}[H]
            \DontPrintSemicolon
            \caption{\footnotesize Augmented Lagrangian}
            \label{alg:alm}
            {\footnotesize
            \KwIn{$\bm{x}_t$, $\lambda$, $\mu$, $\gamma$, $\alpha$, $\delta$}
            
            $\bm{y} \gets \mathcal{D}(\mathbf{z}_t)$
            
            \While{$\tilde{\phi}(\bm{y}) < \delta$}{
                \For{$j \gets 1$ \KwTo max\_inner\_iter}{
                    
                    $\mathcal{L}_{\text{ALM}} \gets \|\bm{y} - \mathcal{D}(\mathbf{z}_t) \|
+ \lambda \,\mathbf{g}(\bm{y})
+ \tfrac{\mu}{2}\,\mathbf{g}(\bm{y})^2$
                    
                    $\bm{y} \gets \bm{y} - \gamma\, \nabla_{\bm{y}} \mathcal{L}_{\text{ALM}}$
                    
                }
                $\lambda \gets \lambda + \mu\, \tilde{\phi}(\bm{y}); \;$\; 
                $\mu \gets \min\bigl(\alpha\mu,\, \mu_{\text{max}}\bigr)$\;
                
            }
            
            \Return $\bm{y}$\;
            }
        \end{algorithm}
    \end{minipage}%
    \label{fig:side_by_side_algorithms}
\end{wrapfigure}

Following the dual updates described by \cite{FvHMTBL:ecml20}, we perform the following updates:
\begin{subequations}
\label{eq:ld_update_rewrite}
\begin{align}
    \bm{y} &\leftarrow \bm{y} - \gamma \nabla_{\bm{y}} \mathcal{L}_{\text{ALM}}\bigl(\bm{y}, \lambda, \mu\bigr), \\
    \lambda &\leftarrow \lambda + \mu\,\tilde{\phi}(\bm{y}), \\
    \mu &\leftarrow \min(\alpha \mu, \mu_{\max}),
\end{align}
\end{subequations}
where \(\gamma\) is the gradient step size, \(\alpha>1\) is a scalar which increases \(\mu\) over iterations, and \(\mu_{\max}\) is an upper bound on \(\mu\). This drives \(\bm{y}\) to satisfy \(g(\bm{y}) \approx 0\) while staying close to \(\mathcal{D}(\mathbf{z}_t)\). 


\begin{figure}[ht]
\begin{minipage}{\textwidth}
\ra{0.25}
\setlength{\tabcolsep}{1pt}
\centering
\begin{tabular}{c ccccc}
  \toprule
  \multirow{2}{*}{\normalsize{Ground}} & \multirow{2}{*}{\normalsize{P(\%)~~}} 
  & \multicolumn{3}{c}{\normalsize{\bf Generative Methods}}\\[2pt]
   & &  \normalsize{\sl Cond} & \normalsize{\sl PDM} & \normalsize{\sl Latent (Ours)} \\
    \midrule
    \includegraphics[width=0.21\linewidth]{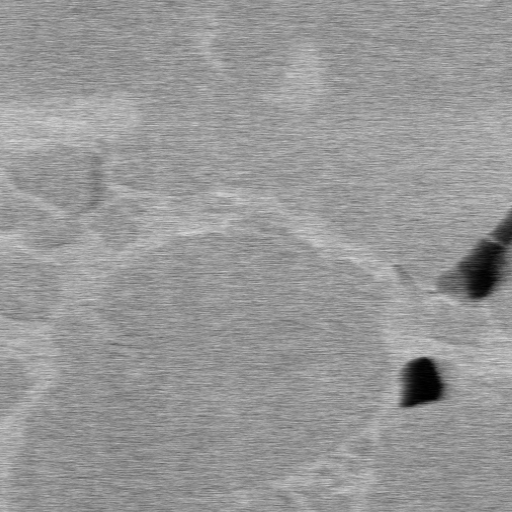} &
    \raisebox{2\height}{\footnotesize{\sl {10~~}}} &
    \includegraphics[width=.21\linewidth]{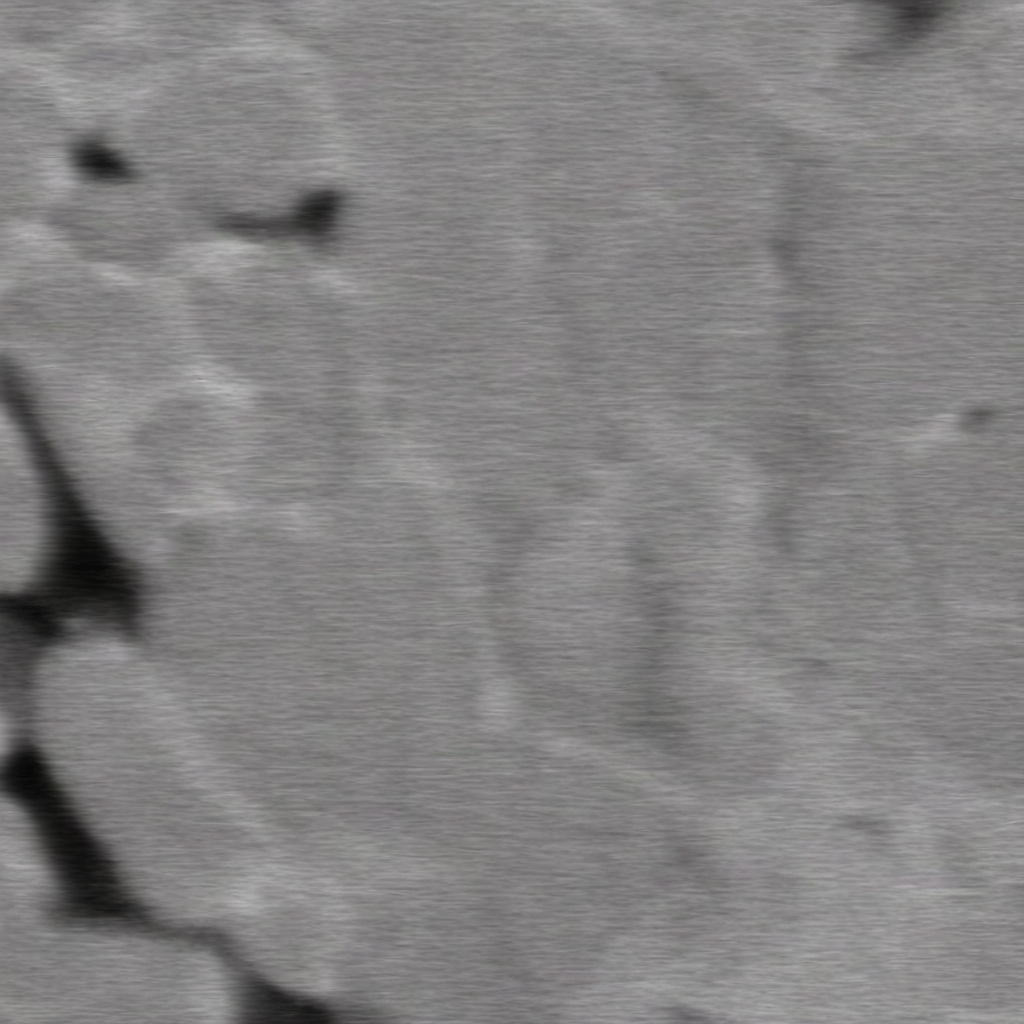} &
    \includegraphics[width=.21\linewidth]{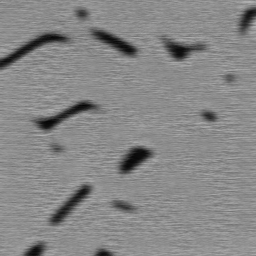} &
    \includegraphics[width=.21\linewidth]{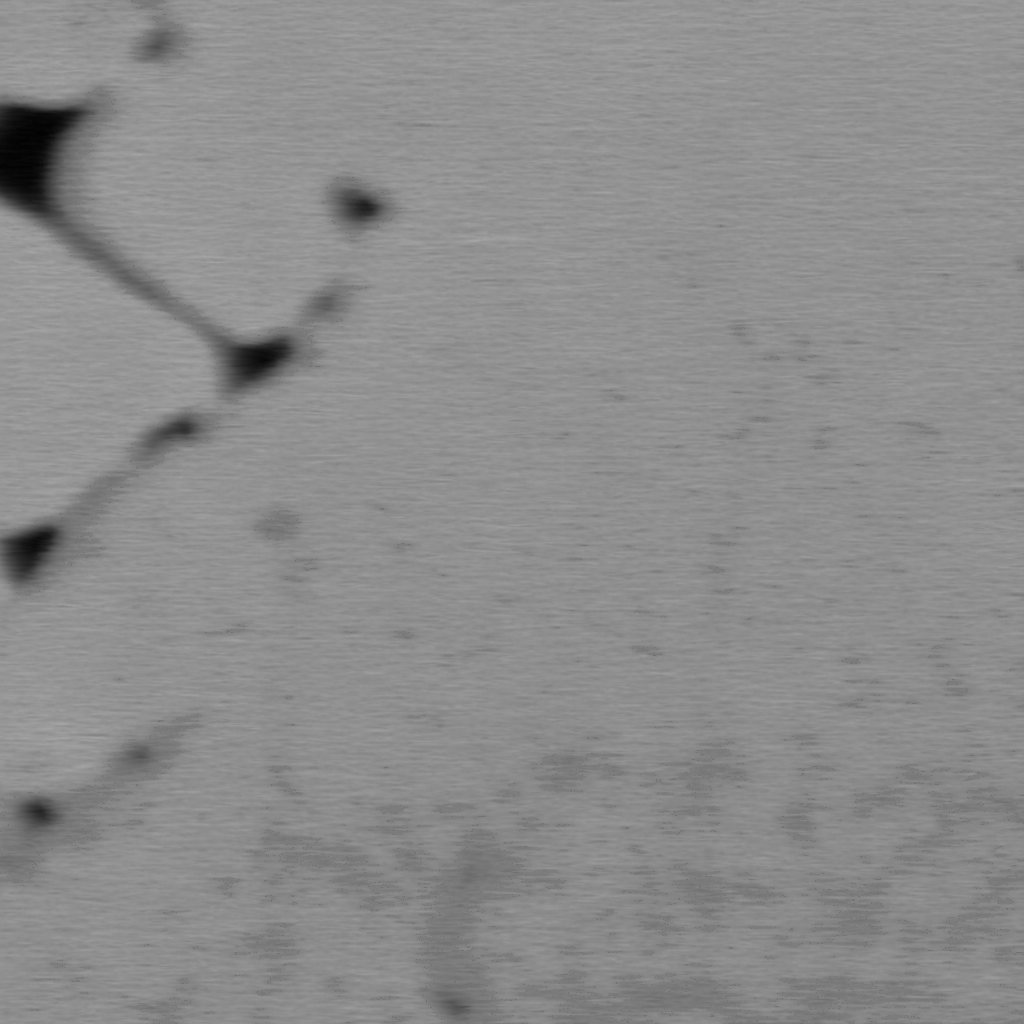} \\

    \includegraphics[width=0.21\linewidth]{images/Microstructure/ground_p0.3.png} &
    \raisebox{2\height}{\footnotesize{\sl {30~~}}} &
    \includegraphics[width=.21\linewidth]{images/Microstructure/cond_p0.3.png} &
    \includegraphics[width=.21\linewidth]{images/Microstructure/pdm_p0.3.png} &
    \includegraphics[width=.21\linewidth]{images/Microstructure/latent_cons_last3_p0.3.png} \\

    \includegraphics[width=0.21\linewidth]{images/Microstructure/ground_p0.5.png} &
    \raisebox{2\height}{\footnotesize{\sl {50~~}}} &
    \includegraphics[width=.21\linewidth]{images/Microstructure/cond_p0.5.png} &
    \includegraphics[width=.21\linewidth]{images/Microstructure/pdm_p0.5.png} &
    \includegraphics[width=.21\linewidth]{images/Microstructure/latent_cons_last3_p0.5.png} \\[2pt]
    \midrule
    \multicolumn{2}{l}{{\bf \large{FID scores:}}} & \normalsize{10.8 $\!\pm\!$ 0.9} & \normalsize{30.7 $\!\pm\!$ 6.8} & \normalsize{13.5 $\!\pm\!$ 3.1}\\ \\
    \multicolumn{2}{l}{{\bf \large{P error $>$ 10\%:}}} & \normalsize{68.4$\%$ $\!\pm\!$ 12.4} & \normalsize{0$\%$ $\!\pm\!$ 0} & \normalsize{0$\%$ $\!\pm\!$ 0}\\
    \bottomrule
\end{tabular}
\caption{Extended version of Figure~\ref{fig:microstructure-images}}
\label{fig:microstructure-images-ex}
\end{minipage}
\end{figure}

\section{Extended Results}
\label{appendix:additional_results}

In this section, we include additional results and figures from our experimental evaluation.

\subsection{Microstructure Generation}

\textbf{Additional baselines.} To supplement the evaluation presented in paper, we also implemented the following baselines:
\begin{enumerate}[leftmargin=*, parsep=2pt, itemsep=2pt, topsep=0pt]
\item {\bf Image Space Correction:} We implement a naive approach which converts the latent representation to the image space, projects the image, and then passes the feasible image through the encoder layer to return to the latent space.
\item {\bf Learned Latent Corrector:} Adapting the implementation by \cite{engel2017latent} for diffusion models, we train a network to restore feasibility prior to the decoding step. 
\end{enumerate}

The {\it Image Space Correction} method, which involves re-encoding the image into the latent space after correcting it during various denoising steps, and the {\it Learned Latent Corrector} method, where a network is trained to project a latent vector toward a new state ensuring constraint satisfaction, both failed to produce viable samples. {\bf Both baselines deviated significantly from the training set distribution,} resulting in high FID scores and generated images that lacked quality, failing to capture essential features of the dataset. Due to the inability of these methods to produce viable samples, we do not include them in Figure~\ref{fig:microstructure-images}.

\subsection{Metamaterial Inverse Design}

\begin{figure}[ht]
\begin{minipage}{\textwidth}
\ra{0.25}
\setlength{\tabcolsep}{1pt}
\noindent\fboxsep=0pt
\noindent\fboxrule=0.26pt
\centering
\begin{tabular}{c @{\hspace{.2cm}} ccc@{\hspace{.7cm}}ccc}
    \toprule
     &
    \multicolumn{3}{c}
    {\bf \footnotesize{Interpolation}} &
    \multicolumn{3}{c}
    {\bf \footnotesize{Extrapolation}} \\[2pt]
     & \multicolumn{3}{c}{\includegraphics[width=0.25\linewidth]{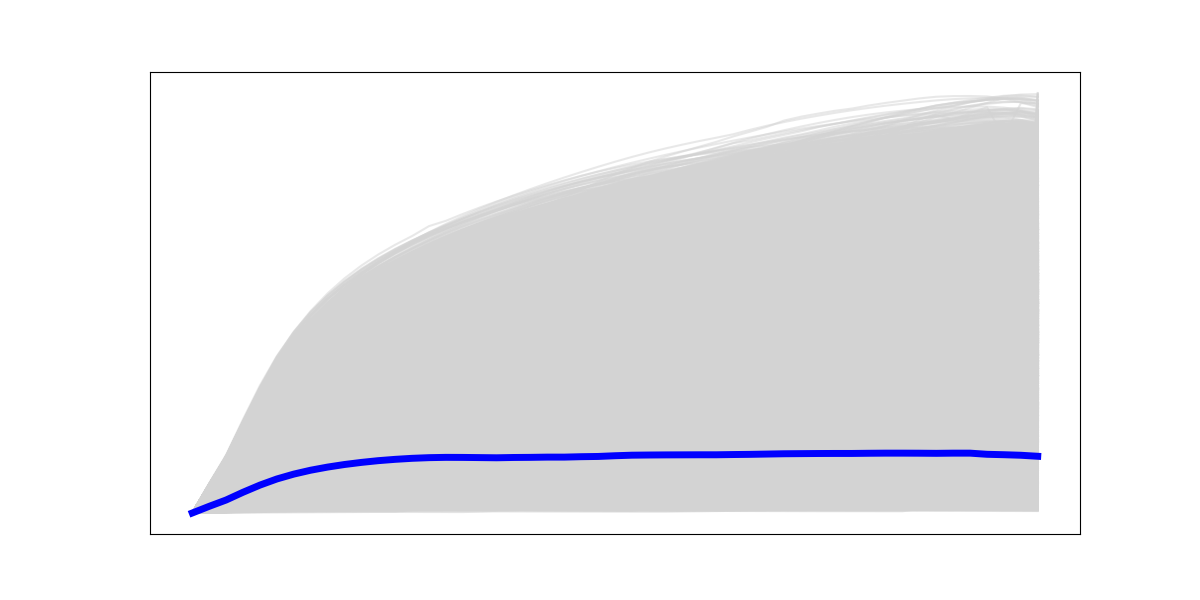}} &
    \multicolumn{3}{c}{\includegraphics[width=0.25\linewidth]{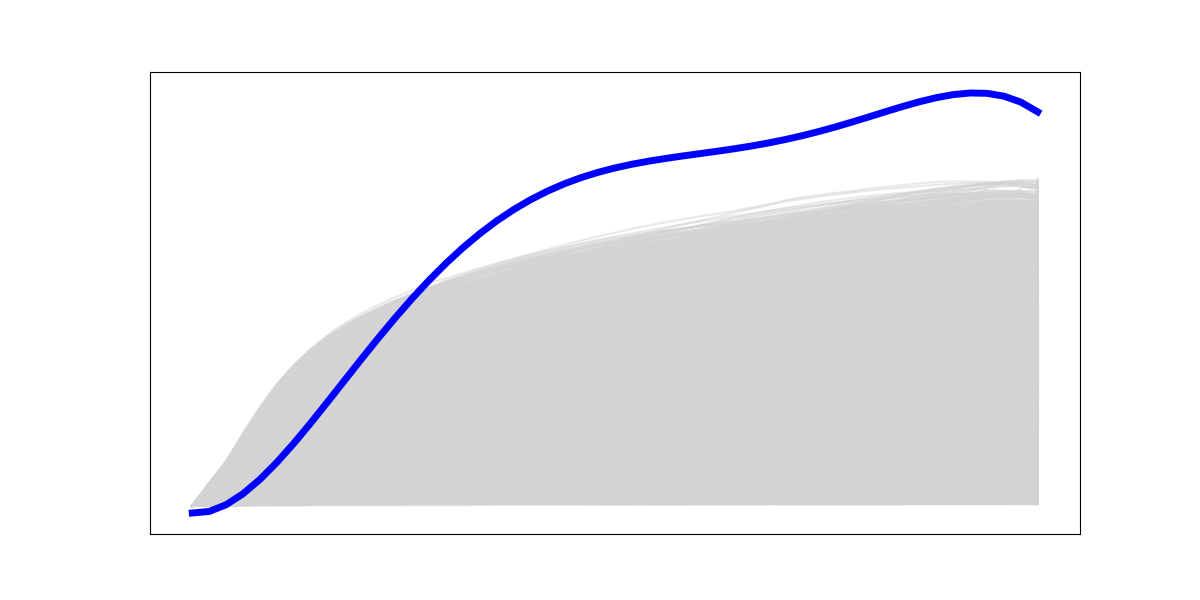}} \\
    \midrule
    \footnotesize{Model} &
    \footnotesize{Shape} & 
    \footnotesize{Stress curve} &
    \footnotesize{MSE} &
    \footnotesize{Shape} & 
    \footnotesize{Stress curve} &
    \footnotesize{MSE} \\
    
    \midrule
    
    \raisebox{3ex}[0pt][0pt]{\footnotesize{\sl \shortstack{Cond \\ (Stable Image)}}} & \fbox{\includegraphics[width=0.112\linewidth]{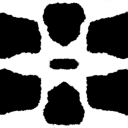}} &
    \fbox{\includegraphics[width=0.18\linewidth]{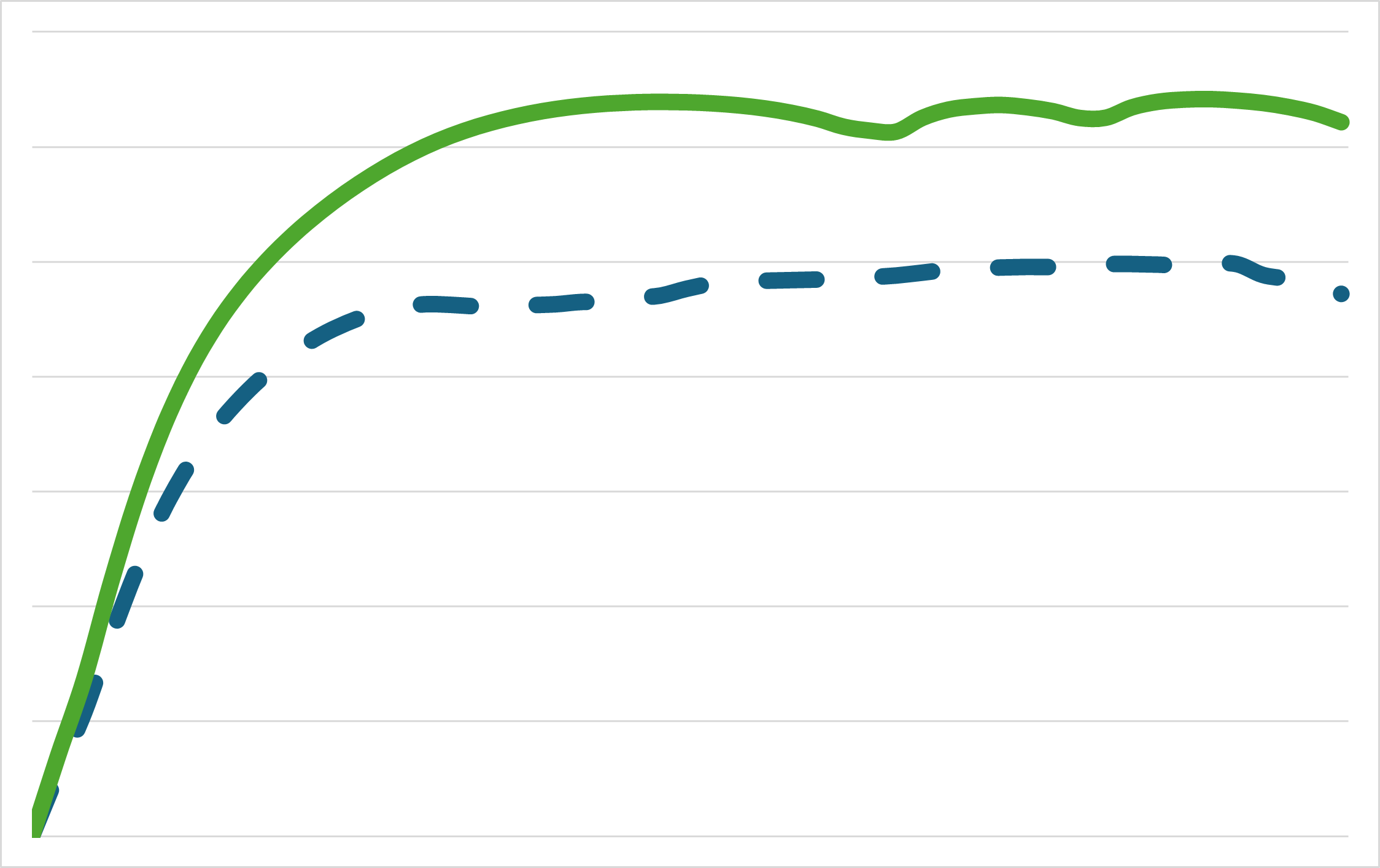}} &
    \raisebox{3.5\height}{\footnotesize{\sl {7.0~~}}} &    \fbox{\includegraphics[width=.112\linewidth]{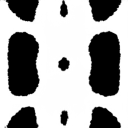}} &
    \fbox{\includegraphics[width=.18\linewidth]{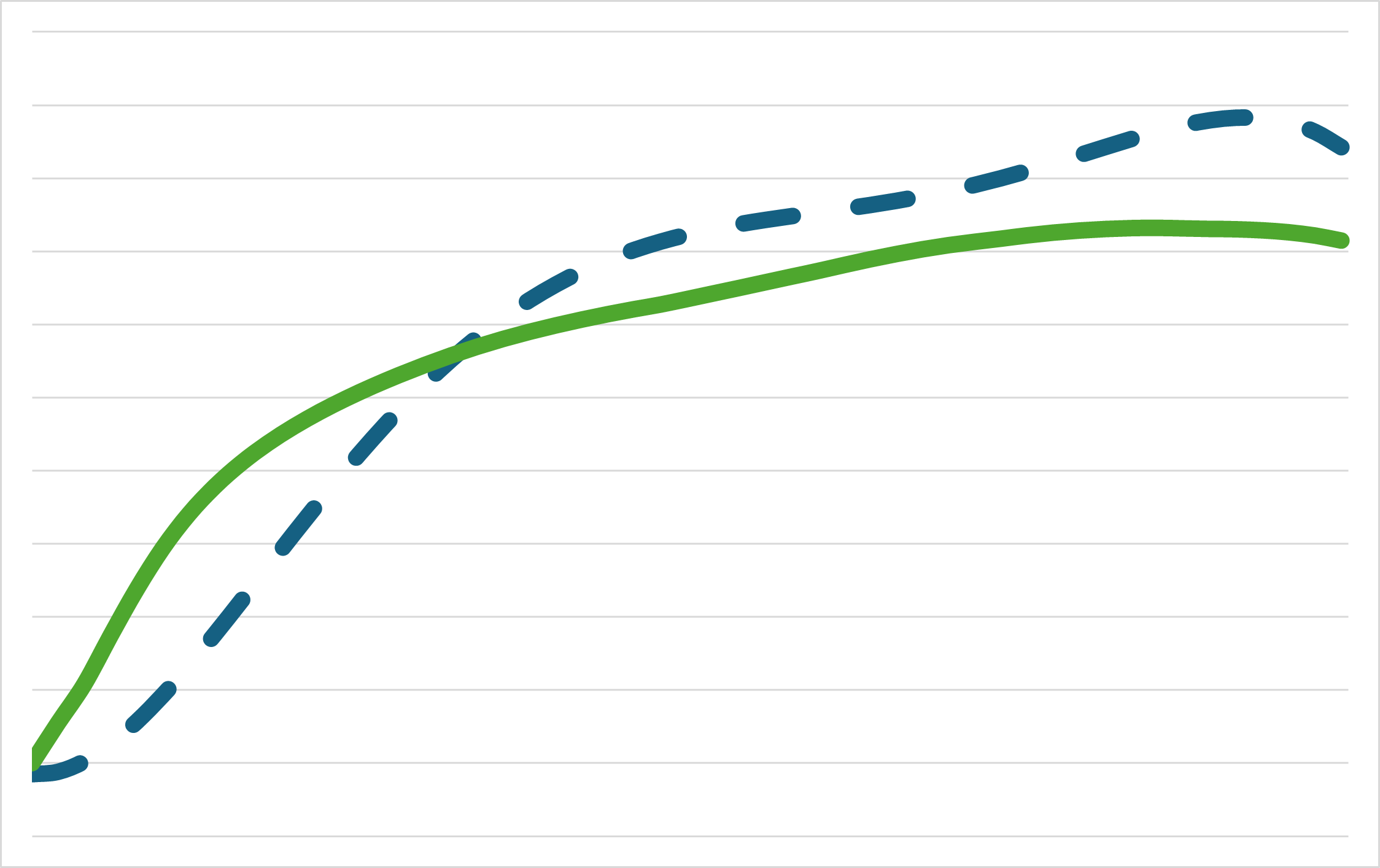}} &
    \raisebox{3.5\height}{\footnotesize{\sl {127.3~~}}} \\
    
    \raisebox{3ex}[0pt][0pt]{\footnotesize{\sl \shortstack{Cond \\ (Video Diffusion) \\ \citeauthor{bastek2023inverse}}}} & \fbox{\includegraphics[width=0.112\linewidth]{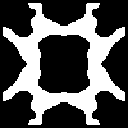}} &
    \fbox{\includegraphics[width=0.18\linewidth]{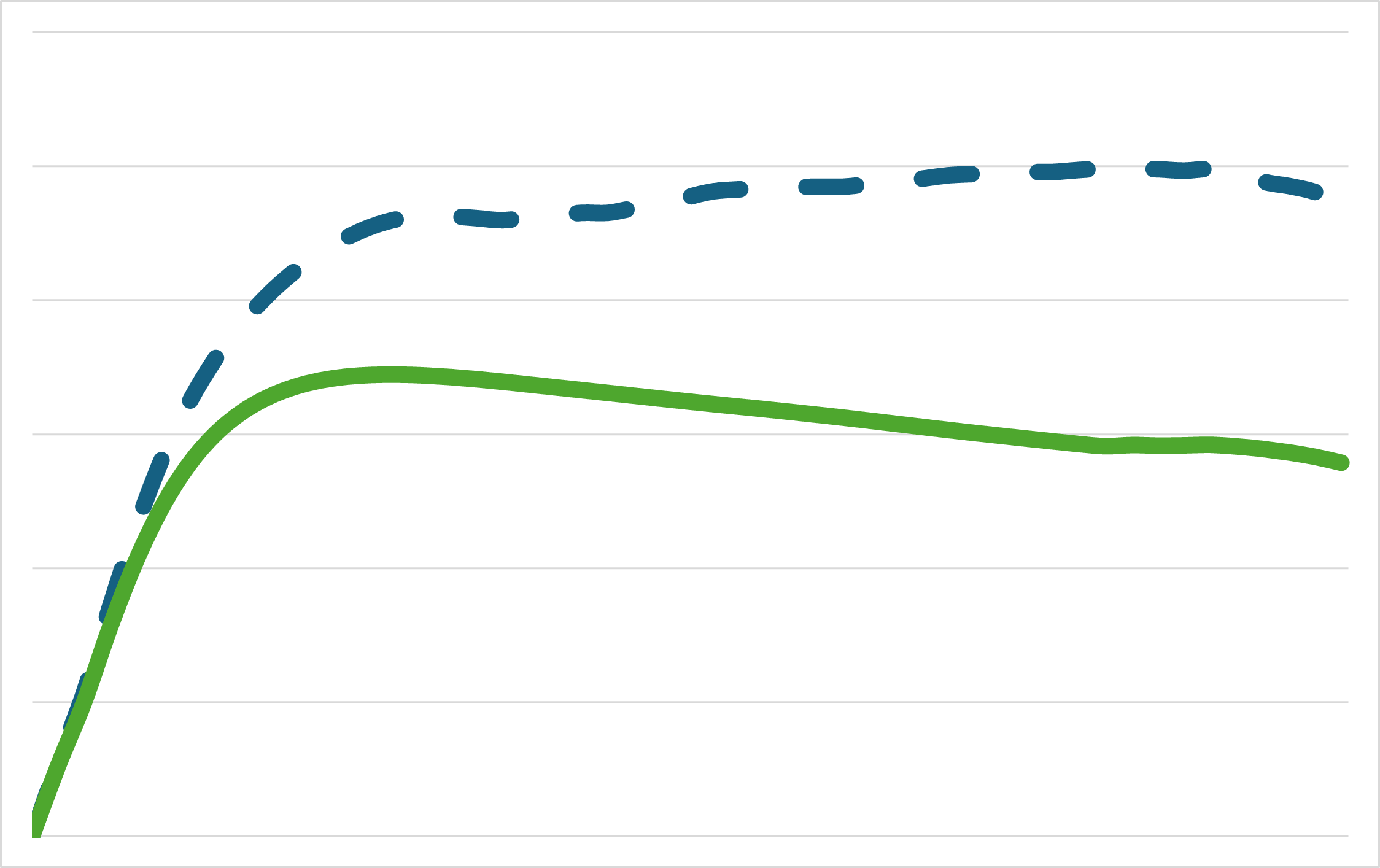}} &
    \raisebox{3.5\height}{\footnotesize{\sl {9.2~~}}} &    \fbox{\includegraphics[width=.112\linewidth]{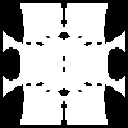}}
    &    \fbox{\includegraphics[width=.18\linewidth]{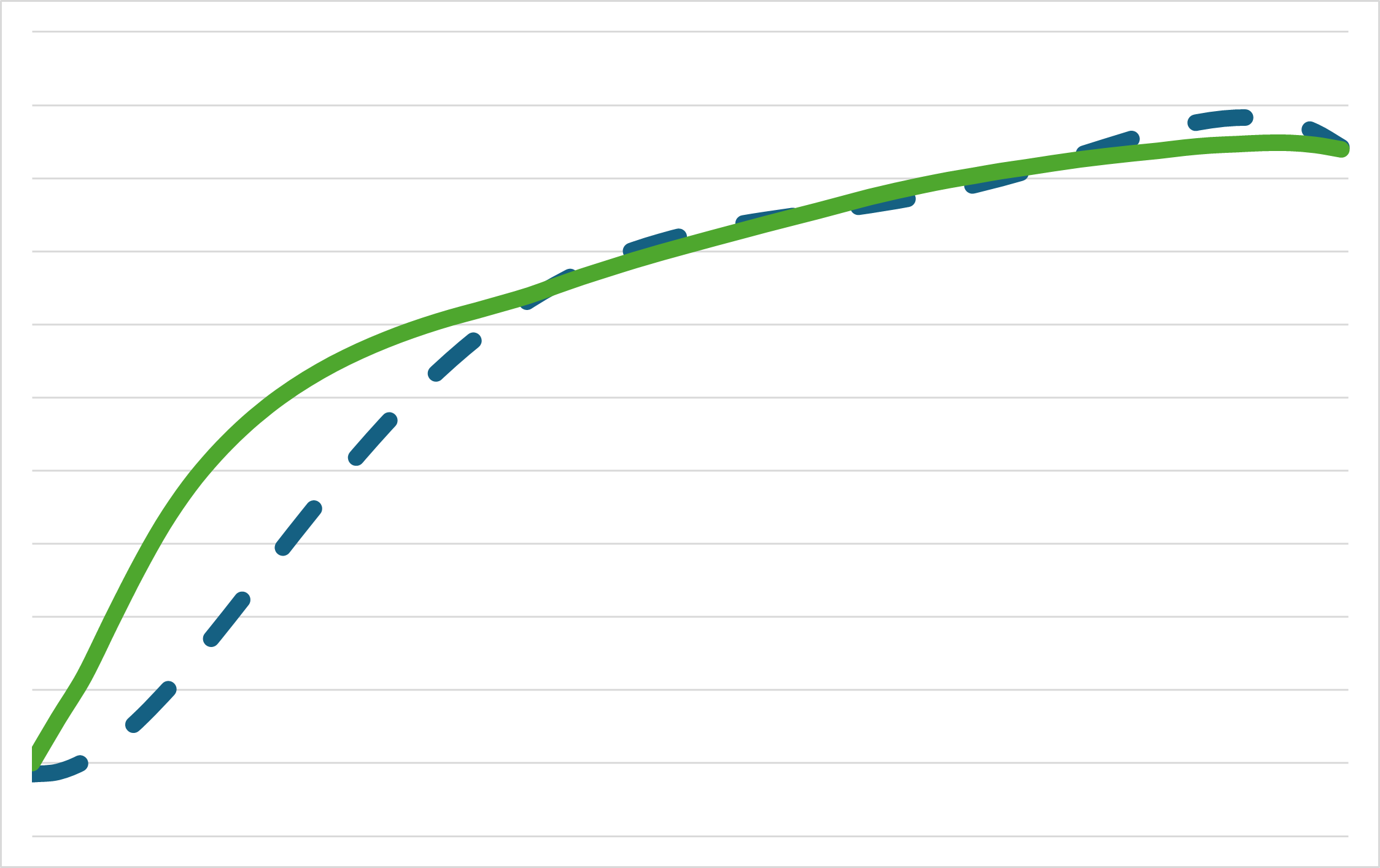}} &
    \raisebox{3.5\height}{\footnotesize{\sl {99.6~~}}} \\

    \raisebox{3.5ex}[0pt][0pt]{\footnotesize{\sl \shortstack{Latent \\ (Ours)}}} & \fbox{\includegraphics[width=0.112\linewidth]{images/Metamaterials/4.png}} &
    \fbox{\includegraphics[width=0.18\linewidth]{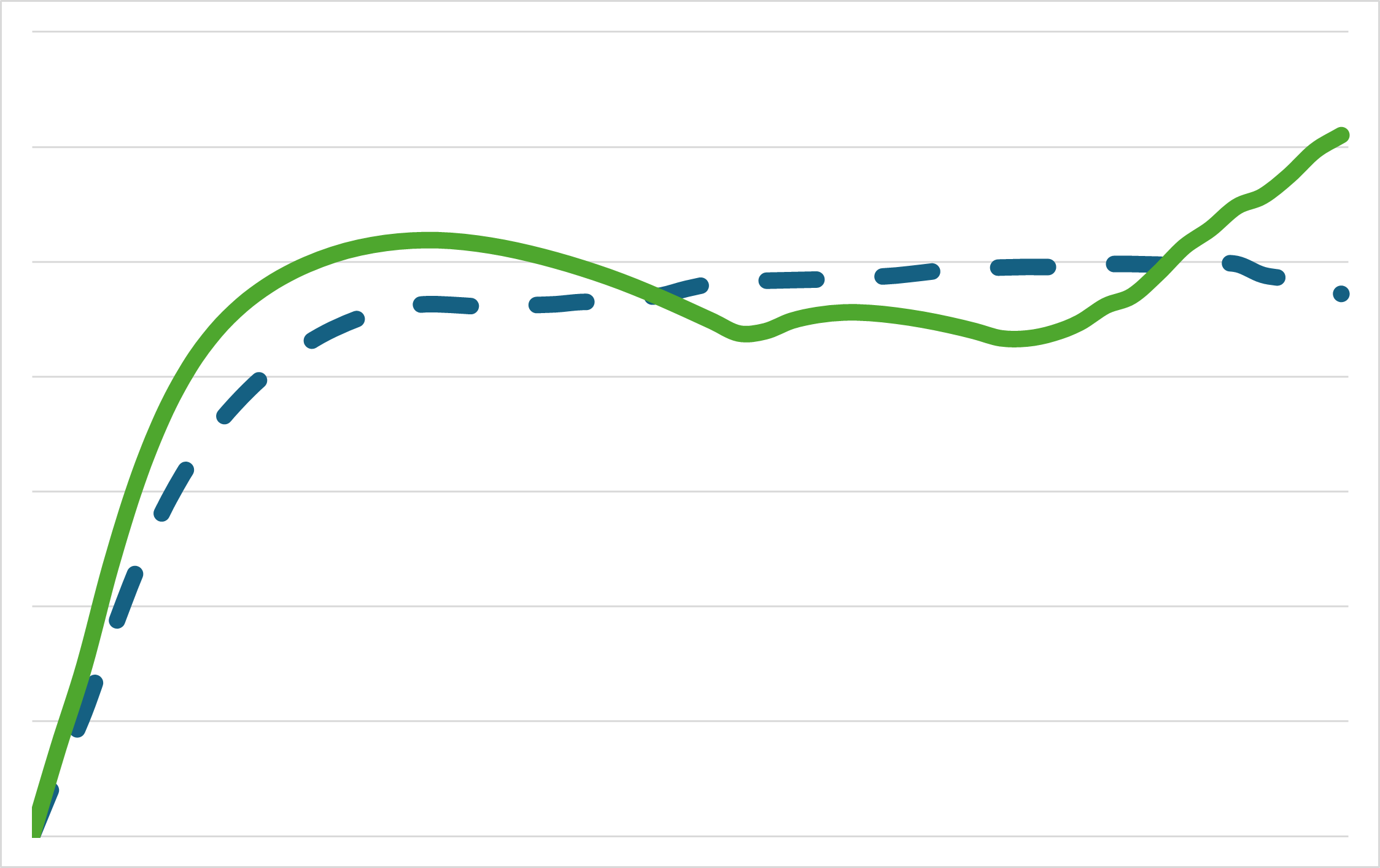}} &
    \raisebox{3.5\height}{\footnotesize{\sl {1.2~~}}} &  
    \fbox{\includegraphics[width=.112\linewidth]{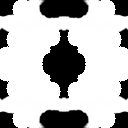}} &   \fbox{\includegraphics[width=.18\linewidth]{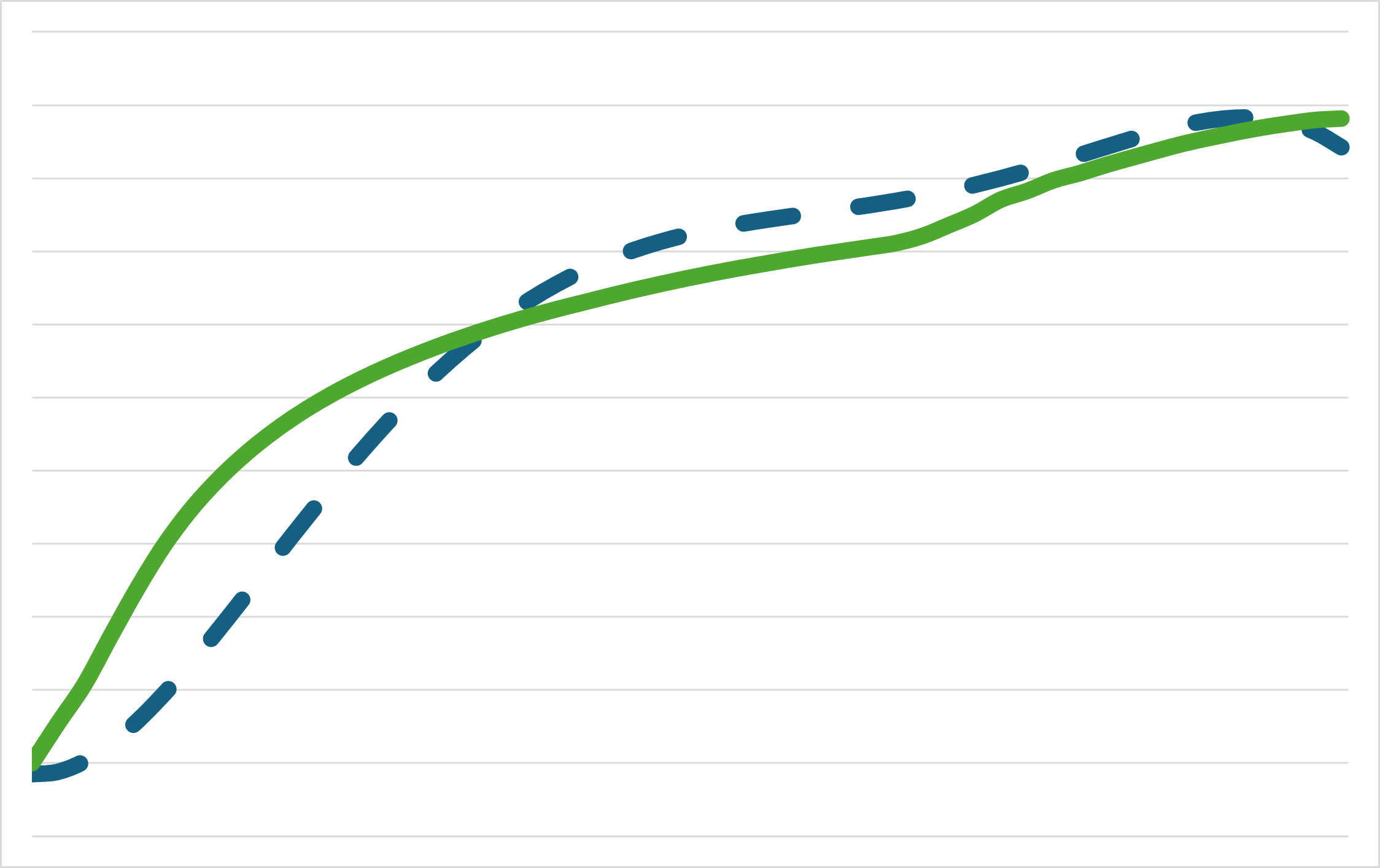}} &
    \raisebox{3.5\height}{\footnotesize{\sl {78.3~~}}} \\
    
    \bottomrule
\end{tabular}

\caption{Performance of different models in interpolation and in extrapolation.}
\label{fig:metamaterial-extrap-ex}
\end{minipage}
\end{figure}

Figure~\ref{fig:metamaterial-extrap-ex} illustrates the performance of different models in interpolation (i.e., when the target curve falls within the stress range covered by the training set) and in extrapolation (i.e., when the target is outside this range). In addition to the proposed model, a Conditional Stable Diffusion model and a Conditional Video Diffusion model \cite{bastek2023inverse} are shown. The proposed model allows for arbitrarily small tolerance settings and outperforms the baselines in both tests.

Practically, one can select an error tolerance and compute budget for tailored for the specific application. Each iteration of the DPO necessitates approximately 30 seconds of computational time. Given our prescribed error tolerance, convergence is achieved within five iterations, culminating in a total computational duration of approximately 2.5 minutes per optimization run. 
Additionally, note that $\phi$ has not been optimized for runtime, operating exclusively on CPU cores.

\begin{figure}
\begin{minipage}{\linewidth}
\ra{0.25}
\setlength{\tabcolsep}{1pt}
\noindent\fboxsep=0pt
\noindent\fboxrule=0.26pt
\centering
\begin{tabular}{lcccc}
    \toprule
    &\footnotesize{Original} 
    &\footnotesize{Step 0}
    &\footnotesize{Step 2} 
    &\footnotesize{Step 4} \\
    
    \midrule
    & \fbox{\includegraphics[width=0.16\columnwidth]{images/Metamaterials/original.png}} &
    \fbox{\includegraphics[width=.16\columnwidth]{images/Metamaterials/0.png}} &
    \fbox{\includegraphics[width=.16\columnwidth]{images/Metamaterials/2.png}} &
    \fbox{\includegraphics[width=.16\columnwidth]{images/Metamaterials/4.png}} \\
    
    \midrule
    \multicolumn{5}{c}
    {\bf \footnotesize{Structural analysis}} \\
    
    \multirow{4}{*}[1.5em]{\rotatebox{90}{\textbf{\footnotesize $\leftarrow$ increasing stress $\leftarrow$}}} & 
    \includegraphics[width=0.18\columnwidth]{images/Metamaterials/orig_0.png} &
    \includegraphics[width=.18\columnwidth]{images/Metamaterials/step0_0.png} &
    \includegraphics[width=.18\columnwidth]{images/Metamaterials/step2_0.png} &
    \includegraphics[width=.18\columnwidth]{images/Metamaterials/step4_0.png} \\

    & \includegraphics[width=0.18\columnwidth]{images/Metamaterials/orig_1.png} &
    \includegraphics[width=.18\columnwidth]{images/Metamaterials/step0_1.png} &
    \includegraphics[width=.18\columnwidth]{images/Metamaterials/step2_1.png} &
    \includegraphics[width=.18\columnwidth]{images/Metamaterials/step4_1.png} \\
    
    & \includegraphics[width=0.18\columnwidth]{images/Metamaterials/orig_5.png} &
    \includegraphics[width=.18\columnwidth]{images/Metamaterials/step0_5.png} &
    \includegraphics[width=.18\columnwidth]{images/Metamaterials/step2_5.png} &
    \includegraphics[width=.18\columnwidth]{images/Metamaterials/step4_5.png} \\
    
    & \includegraphics[width=0.18\columnwidth]{images/Metamaterials/orig_10.png} &
    \includegraphics[width=.18\columnwidth]{images/Metamaterials/step0_10.png} &
    \includegraphics[width=.18\columnwidth]{images/Metamaterials/step2_10.png} &
    \includegraphics[width=.18\columnwidth]{images/Metamaterials/step4_10.png} \\
    
    \midrule      
    \multicolumn{5}{c}{\bf \footnotesize{Stress-strain curves}} \\[2pt]
    & \includegraphics[width=0.22\columnwidth]{images/Metamaterials/orig_curve.png} &
    \includegraphics[width=.22\columnwidth]{images/Metamaterials/step_0_curve.png} &
    \includegraphics[width=.22\columnwidth]{images/Metamaterials/step_2_curve.png} &
    \includegraphics[width=.22\columnwidth]{images/Metamaterials/step_4_curve.png} \\ 
    \multicolumn{4}{c}{\includegraphics[width=.6\columnwidth]{images/Metamaterials/curves_legend.png}} \\[2pt]
    \midrule
    \multicolumn{5}{c}
    {\bf \footnotesize{MSE} $[\downarrow]$} \\[2pt] & \footnotesize{179.5} & \footnotesize{175.6} & \footnotesize{12.5} & \footnotesize{1.2}\\ 
    \bottomrule
\end{tabular}
\caption{Enlarged version of Figure \ref{fig:metamaterial-images}. Successive steps of DPO are shown. At each stage, $M$ perturbed shapes are generated, each undergoing structural analysis with $\phi$, which provides the corresponding stress-strain. 
The perturbation that produces the curve closest to the target is then selected, and a new perturbation-structural analysis-selection cycle begins, continuing until convergence is achieved. The convergence tolerance can be tightened as desired, provided it is compatible with the available computational cost.
}
\label{fig:metamaterial-images-enlarged}
\end{minipage}
\end{figure}

\subsection{Copyright-Safe Generation}

As opposed to other approaches, the projection schedule is tuned differently for this setting. Particularly, it is only necessary to project during the initial stages of the generation. After this correction, the denoising process is allowed to evolve naturally without further intervention. This method ensures that the generated images are guided away from resembling copyrighted material while still allowing the model to produce high-quality outputs. By selectively modifying the generated content during the initial stages of denoising, we can effectively prevent the model from producing images that infringe on copyrights without significantly affecting the overall image quality.

\textbf{Surrogate implementation.}
We begin by fine-tuning a classifier capable of predicting membership to one of two classes: `Mickey Mouse' or `Jerry'. 
The architecture of the classifier consists of a ResNet50 backbone, which is followed by two fully connected layers. These layers serve to progressively reduce the dimensionality of the feature map, first from 2048 to 512 and then from 512 to a single scalar feature, which represents the output of the classifier. A Sigmoid activation function is then applied to this final feature to estimate the probability that the input sample belongs to either the `Mickey Mouse' or `Jerry' class. This process ensures that the model outputs a value between 0 and 1, indicating the likelihood of each class membership. The classifier was evaluated on a held-out test set and demonstrated a strong performance, achieving an accuracy greater than 87\%, which showcases its effectiveness in distinguishing between the two classes.

\section{Runtime}
\label{appendix:runtime}
Note that the simulator employed in Section \ref{subsec:metamaterials} has been specifically optimized for CPU execution; consequently, runtimes observed for these experiments are inherently longer compared to GPU-optimized procedures (refer to Table~\ref{tab:runtimes}). Importantly, optimizing computational runtime was not the primary focus of this study. Rather, the emphasis was placed on scientific discovery, prioritizing the fidelity and accuracy of the generated content. In the scientific domains explored, achieving exact adherence to physical constraints and established principles is crucial, as deviations would render the models unreliable and impractical for use.

\begin{table}[ht]
  \centering
  \label{tab:runtimes}
  \begin{tabular}{@{}llll@{}}
    \toprule
    Experiment       & Sampling Type         & Time & Hardware                    \\
    \midrule
    Microstructure   & Conditional Sampling  & 10\,s & Nvidia A100-SXM4-80GB       \\
    Microstructure   & Constrained Sampling   & 50\,s & Nvidia A100-SXM4-80GB       \\
    Metamaterials    & Conditional Sampling  & 5\,s  & Nvidia A100-SXM4-80GB       \\
    Metamaterials    & Single Simulation      & 30\,s & Intel Core i7-8550U CPU      \\
    Copyright        & Conditional Sampling  & 10\,s & Nvidia A100-SXM4-80GB       \\
    Copyright        & Constrained Sampling   & 65\,s & Nvidia A100-SXM4-80GB       \\
    \bottomrule
  \end{tabular}
  \caption{Experiment runtimes by sampling type and hardware.}
  
\end{table}

A significant bottleneck frequently encountered in these experimental domains is the iterative cycle from experimental design to validation, which typically necessitates multiple trial-and-error iterations. Consequently, the capability to generate realistic synthetic materials exhibiting precisely controlled morphological or structural characteristics, as demonstrated at the fidelity levels reported herein, holds substantial potential for accelerating scientific workflows—from initial experimentation to full-scale production.

\section{Feasibility Guarantees}
\label{appendix:theory}

First, note that when a projection (or approximation thereof) can be constructed in the image space, strict guarantees can be provided on the feasibility of the {\it final outputs} of the stable diffusion model. 
A final projection can be applied after decoding $\mathbf{z}_0$, and, as this operator is applied directly in the image space, constraint satisfaction is ensured if the projection is onto a convex set.
As detailed in \cite{boyd2004convex}(\S 8.1), a projection of any vector 
\(\bm{x} \in \mathbb{R}^n\) onto a non-empty closed convex set \(\mathbf{C}\) exists and is unique.

\begin{theorem}
    \textbf{Convex Constraint Guarantees:} The proposed method provides feasibility guarantees for convex constraint.
\end{theorem}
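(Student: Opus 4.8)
The plan is to reduce the feasibility claim to the classical projection theorem for convex sets, applied in the image space where the constraint set $\mathbf{C}$ is meaningfully defined. First I would fix notation: let $\mathbf{C}\subseteq\mathbb{R}^n$ be the non-empty, closed, convex constraint set, and take the penalty $\mathbf{g}$ to be its indicator, so that the proximal operator of Eq.~\eqref{eq:proximal-operator} collapses to the Euclidean projection $\mathcal{P}_{\mathbf{C}}(\bm{x})=\argmin_{\bm{y}\in\mathbf{C}}\|\bm{y}-\bm{x}\|_2^2$. This is exactly the specialization already noted in Section~\ref{sec:space_correction}.

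The key step invokes the standard existence--uniqueness result cited in \cite{boyd2004convex}, \S 8.1: for any $\bm{x}\in\mathbb{R}^n$ and any non-empty closed convex $\mathbf{C}$, the minimizer defining $\mathcal{P}_{\mathbf{C}}(\bm{x})$ exists and is unique. Existence follows from coercivity of the squared distance together with closedness of $\mathbf{C}$ (Weierstrass applied on a bounded sublevel set in finite dimension), and uniqueness from strict convexity of $\bm{y}\mapsto\|\bm{y}-\bm{x}\|_2^2$ over the convex domain $\mathbf{C}$. The defining membership $\mathcal{P}_{\mathbf{C}}(\bm{x})\in\mathbf{C}$ then yields feasibility by construction, with no appeal to the asymptotics of Theorem~\ref{thm:cadm-convergence}.

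To complete the argument I would apply this projection once more after the terminal decoding step: run the sampler to obtain the final latent $\mathbf{z}_0$, decode to $\bm{x}_0=\mathcal{D}(\mathbf{z}_0)$, and output $\bm{x}_0^\star=\mathcal{P}_{\mathbf{C}}(\bm{x}_0)$. Because this projection acts directly in the image space, no decoder-induced approximation enters, and $\bm{x}_0^\star\in\mathbf{C}$ delivers a hard, \emph{per-sample} certificate $\mathbf{g}(\bm{x}_0^\star)=0$. For the concrete setting of Section~\ref{subsec:microstruct} I would additionally verify that the stated top-$k$ program solves the convex projection exactly, so the general result specializes to that experiment and justifies the certificate claimed there.

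The main obstacle is not the mathematics, which is routine, but pinning down precisely \emph{what} ``the proposed method'' guarantees: the inner proximal loop only approximately restores feasibility at intermediate noise levels, so the exact guarantee must be attributed to the single terminal image-space projection rather than to the latent iterates $\mathbf{z}_t$. The subtle point to argue carefully is that projecting in image space is the legitimate operation here --- the object being constrained and certified is the decoded sample $\mathcal{D}(\mathbf{z}_0)$, and feasibility of this decoded image is exactly what the task requires --- so composing the frozen generator with one terminal convex projection is sufficient to yield the stated feasibility guarantee.
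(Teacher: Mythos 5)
Your proposal is correct and follows essentially the same route as the paper: the paper likewise attributes the guarantee to a final Euclidean projection applied in the image space after decoding $\mathbf{z}_0$, invoking the existence and uniqueness of projections onto non-empty closed convex sets from \cite{boyd2004convex} (\S 8.1), with the microstructure projection of Section~\ref{subsec:microstruct} as the concrete instance. Your added care in locating the guarantee at the terminal image-space projection rather than the intermediate latent iterates matches the paper's framing of feasibility of the \emph{final outputs}.
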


As we can derive a closed-form projection for convex sets in the ambient space after decoding our sample (e.g., in Section \ref{subsec:microstruct} where our proximal operator used is a projection), the proposed method thus provides guarantees for convex constraints.

\section{Supplementary Proofs}
\label{appendix:proof}

\subsection{Proof of Theorem \ref{thm:cadm-convergence} (Part 1): Non-Asymptotic Feasibility in the Image Space}
\begin{proof}
We begin by proving this bound holds for projected diffusion methods operating in the image space:
\begin{align}
    \operatorname{dist}\bigl( \bm{x}_t', \bm{C} \bigr)^{2}
  \;&\le\;(1-2\beta \gamma_{t+1})\,
  \operatorname{dist}\bigl(\bm{x}_{t+1}',\bm{C} \bigr)^{2} + 
  \gamma_{t+1}^{2} G^{2},
\end{align}

For clarity and simplicity of this proof, we omit superscripts in subsequent iterations, using only subscripts to denote both inner and outer iterations of the Langevin sampler. For instance, the sequence $\bm{x}_{t}^{0}, \dots, \bm{x}_{t}^{M}, \bm{x}_{t-1}^0$ is now represented as $\bm{x}_{t}, \dots, \bm{x}_{t -M}, \bm{x}_{t -(M+1)}$. Consequently, as annealing occurs at the outer loop level (every $M$ iterations), the step size $\gamma_t$ is no longer strictly decreasing with each iteration $t$, and instead satisfies $\gamma_t \geq \gamma_{t-1}$.

Consider that at each iteration of the denoising process, projected diffusion methods can be split into two steps:
\begin{enumerate}
    \item \textbf{Gradient Step:} \(\bm{x}_t' = \bm{x}_t + \gamma_t \underbrace{\nabla_{\bm{x}_t} \log q(\bm{x}_t)}_{s_t}\)
    \item \textbf{Projection Step:} \(\bm{x}_{t-1} = \mathcal{P}_\mathbf{C}(\bm{x}_t')\)
\end{enumerate}

These steps are sequentially applied in the reverse process to sample from a constrained subdistribution. 
\[
    \bm{x}_t \rightarrow \overbrace{\bm{x}_t + \gamma_t s_t}^{\bm{x}_t'} \rightarrow \mathcal{P}_\mathbf{C}(\bm{x}_t') = \bm{x}_{t-1} \rightarrow \overbrace{\bm{x}_{t-1} + \gamma_{t-1} s_{t-1}}^{\bm{x}_{t-1}'} \rightarrow \mathcal{P}_\mathbf{C}(\bm{x}_{t-1}') = \bm{x}_{t-2} \ldots
\]

By construction, \(\bm{x}_{t-1} = \mathcal{P}_\mathbf{C}(\bm{x}_t') \in \mathbf{C}\). Next, let us define the projection distance to \(\mathbf{C}\) as:
\[
    f(\bm{x}) = \text{dist}(\bm{x}, \mathbf{C})^2 = \|\bm{x} - \mathcal{P}_\mathbf{C}(\bm{x})\|^2
\]

Since \(\mathbf{C}\) is \(\beta\text{-prox}\) regular, by definition the following hold:
\begin{itemize}
    \item \(f\) is differentiable outside \(\mathbf{C}\) (in a neighborhood)
    \item \(\nabla f(\bm{x}) = 2(\bm{x}- \mathcal{P}_\mathbf{C}(\bm{x}))\)
    \item \(\nabla f\) is \(L\)-Lipshitz with \(L = \frac{2}{\beta}\)
\end{itemize}

The ``descent lemma'' for \(L\)-smooth functions applies:
\begin{lemma}
\(\forall \bm{x},\bm{y}\) in the neighborhood of \(\mathbf{C}\):
\[
    f(\bm{y}) \leq f(\bm{x}) + \langle\nabla f(\bm{x}), \bm{y}-\bm{x}\rangle + \frac{L}{2}\|\bm{y} - \bm{x}\|^2
    = \boxed{f(\bm{x}) + 2\langle\bm{x}- \mathcal{P}_\mathbf{C}(\bm{x}), \bm{y}-\bm{x}\rangle + \frac{1}{\beta}\|\bm{y} - \bm{x}\|^2}
\]
\end{lemma}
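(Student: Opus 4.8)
The plan is to recognize the asserted inequality as nothing more than the standard \emph{descent lemma} (quadratic upper bound) for an $L$-smooth function, applied to $f(\bm{x}) = \operatorname{dist}(\bm{x},\mathbf{C})^2$, followed by a substitution of the explicit gradient and Lipschitz constant. First I would collect the three structural facts listed immediately above the lemma statement: on a tubular neighborhood of $\mathbf{C}$ the squared-distance function is continuously differentiable, its gradient is $\nabla f(\bm{x}) = 2(\bm{x} - \mathcal{P}_\mathbf{C}(\bm{x}))$, and this gradient is $L$-Lipschitz with $L = 2/\beta$. These are precisely the regularity properties that $\beta$-prox-regularity (Def.~13.27 of \cite{rockafellar2009variational}) guarantees once single-valuedness of the projection on the tube is invoked, so they can be quoted directly rather than re-derived.

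With those facts in hand, the descent inequality follows from the fundamental theorem of calculus along the segment joining $\bm{x}$ and $\bm{y}$. Setting $\bm{d} = \bm{y} - \bm{x}$ and differentiating $t \mapsto f(\bm{x}+t\bm{d})$ gives
\[
    f(\bm{y}) - f(\bm{x}) - \langle \nabla f(\bm{x}), \bm{d}\rangle = \int_0^1 \bigl\langle \nabla f(\bm{x}+t\bm{d}) - \nabla f(\bm{x}),\, \bm{d}\bigr\rangle \, dt.
\]
Bounding the integrand by Cauchy--Schwarz together with the Lipschitz estimate $\|\nabla f(\bm{x}+t\bm{d}) - \nabla f(\bm{x})\| \le L t \|\bm{d}\|$ yields $\int_0^1 L t \|\bm{d}\|^2\, dt = \tfrac{L}{2}\|\bm{d}\|^2$, which is exactly the first inequality. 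The boxed equality is then pure substitution: replacing $\nabla f(\bm{x})$ with $2(\bm{x}-\mathcal{P}_\mathbf{C}(\bm{x}))$ converts the inner product to $2\langle \bm{x}-\mathcal{P}_\mathbf{C}(\bm{x}),\, \bm{y}-\bm{x}\rangle$, and inserting $L = 2/\beta$ turns $\tfrac{L}{2}$ into $\tfrac{1}{\beta}$.

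The main obstacle I anticipate is \emph{domain validity} rather than the algebra. The integral identity requires $f$ to be $C^{1,1}$ along the \emph{entire} segment $[\bm{x},\bm{y}]$, yet $\beta$-prox-regularity only supplies this smoothness on the open tube $\{\bm{z} : \operatorname{dist}(\bm{z},\mathbf{C}) < \beta\}$, where the projection is single-valued and the gradient is Lipschitz. I would therefore restrict the statement to pairs $\bm{x},\bm{y}$ whose connecting segment remains inside this tube, which is precisely the role of the ``in the neighborhood of $\mathbf{C}$'' qualifier, and argue that the step-size restriction $\gamma_t \le \beta/(2G^2)$ combined with the bounded score $\|s_t\| \le G$ limits how far a single gradient step can push an iterate, so the segments arising in the chain stay within the smooth region. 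If the tube fails to be convex, I would either shrink the neighborhood so the relevant segments remain feasible, or appeal only to the local $C^{1,1}$ estimate along the specific iterate segments generated by the sampler, which is all the subsequent convergence argument in Theorem~\ref{thm:cadm-convergence} actually invokes.
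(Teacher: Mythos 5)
Your proof is correct and matches the paper's treatment: the paper simply invokes the standard descent lemma for $L$-smooth functions after listing the gradient formula $\nabla f(\bm{x}) = 2(\bm{x}-\mathcal{P}_\mathbf{C}(\bm{x}))$ and the Lipschitz constant $L = 2/\beta$, and your integral-plus-Cauchy--Schwarz derivation is exactly the standard proof of that lemma, with the boxed equality following by substitution. Your additional care about the segment $[\bm{x},\bm{y}]$ remaining inside the tube where $f$ is $C^{1,1}$ is a legitimate refinement that the paper glosses over, and your resolution via the step-size bound is the right way to justify it.
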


Applying this lemma, let us use \(\bm{x} = \bm{x}_{t-1}'\) and \(\bm{y} = \bm{x}_{t}'\). Noting that \(\mathcal{P}_\mathbf{C}(\bm{x}_t') = \bm{x}_{t-1}\), we get:
\begin{align*}    
    \text{dist}(\bm{x}_t', \mathbf{C})^2 \leq \underbrace{\text{dist}(\bm{x}_{t-1}', \mathbf{C})^2}_{\text{Term A}} + \underbrace{2 \:\langle\bm{x}_{t-1}'- \bm{x}_{t-2}, \bm{x}_{t}'-\bm{x}_{t-1}'\rangle}_{\text{Term B}} + \underbrace{\frac{1}{\beta}\|\bm{x}_{t}' - \bm{x}_{t-1}'\|^2}_{\text{Term C}}
    \tag{\(\star\)}
\end{align*}

\paragraph{Decomposing Term B.}
First, consider that since the step size is decreasing \(\gamma_t \geq \gamma_{t-1}\):
\begin{align*}
    \bm{x}_{t-1}' - \bm{x}_{t-2} &\leq (\bm{x}_{t-1} - \bm{x}_{t-2}) + \gamma_{t-1}s_{t-1} \\
    &\leq (\bm{x}_{t-1} - \bm{x}_{t-2}) + \gamma_{t}s_{t-1}
\end{align*}
By the same rationale,
\begin{align*}
    \bm{x}_{t}' - \bm{x}_{t-1}' \leq (\bm{x}_{t} - \bm{x}_{t-1}) + \gamma_{t}(s_{t} - s_{t-1}). \tag{Definition B.1}
\end{align*}

\begin{proof}[Proof of non-expansiveness of the projection operator]

Next, we prove the non-expansiveness of the projection operator:
\begin{align}
     \|\bm{x}_t - \bm{x}_{t-1}\| \leq 2\:\gamma_{t+1} G^2 \tag{\(\mathcal{L}^+\)}
\end{align}

Given \(\bm{x}_t = \mathcal{P}_\mathbf{C}(\bm{x}_{t+1}')\) and \(\bm{x}_{t-1} = \mathcal{P}_\mathbf{C}(\bm{x}_{t}')\),
\[
    \|\bm{x}_t - \bm{x}_{t-1}\| = \|\mathcal{P}_\mathbf{C}(\bm{x}_{t+1}') - \mathcal{P}_\mathbf{C}(\bm{x}_{t}') \| \leq \|\bm{x}_{t+1} - \bm{x}_{t}\|
\]
since projections onto closed prox-regular sets are \(L\text{-Lipshitz}\).

Now:
\begin{align*}
        \bm{x}_{t+1}' &= \bm{x}_{t+1} + \gamma_{t+1} s_{t+1}; \\
        \bm{x}_{t}'\;\;\:\: &= \bm{x}_{t} + \gamma_{t} s_{t}; \\
    \bm{x}_{t+1}' - \bm{x}_t' &= (\bm{x}_{t+1} - \bm{x}_t) + (\gamma_{t+1}s_{t+1} - \gamma_ts_t). \tag{Definition B.2}
\end{align*}
\[
\]
Making the projection residual,
\[
    \bm{x}_{t+1} - \bm{x}_t = \bm{x}_{t+1} - \mathcal{P}_\mathbf{C}(\bm{x}_{t+1}')
\]
orthogonal to the target space at \(\bm{x}_t\) (and any vector of the form \(s_{t+1} - s_t\)). Thus, since \(\|s_t\| \leq G \;\; \forall t\):
\[
    \|\bm{x}_{t+1}' - \bm{x}_t' \|^2 = \|\gamma_{t+1}s_{t+1}\|^2 + \|\gamma_ts_t\| \leq (\gamma_{t+1}^2 + \gamma_t^2 ) G^2
\]
Taking the square root:
\[
    \|\bm{x}_{t+1}' - \bm{x}_t' \|  \leq \sqrt{\gamma_{t+1}^2 + \gamma_t^2} G
\]
Since \(\gamma_{t+1} \geq \gamma_t\):
\begin{align*}
    \|\bm{x}_{t+1}' - \bm{x}_t' \| &\leq \sqrt{2} \gamma_{t+1} G \\
     &< 2 \gamma_{t+1} G
\end{align*}

Finally, by applying Definition (B.1), \(\|\bm{x}_{t+1} - \bm{x}_t \| \leq \|\bm{x}_{t+1}' - \bm{x}_t' \|\), and thus:
\begin{align*}
    \boxed{\|\bm{x}_{t+1} - \bm{x}_t \| \leq 2 \gamma_{t+1} G}
\end{align*}

\end{proof}

Now, prox-regularity gives:
\begin{align*}
    \langle \bm{x}_{t-1}' - \bm{x}_{t-2}, \bm{x}_t' - \bm{x}_{t-1}' \rangle &\leq \beta \|\bm{x}_t - \bm{x}_{t-1}\|^2 \\
    &\leq 4 \beta\gamma_{t+1}^2G^2
    \tag{Bound B.1}
\end{align*}
where the Bound B.1 is derived by applying (\(\mathcal{L}^+\)).

Since \(\mathbf{C}\) in \(\beta\text{-prox}\) regular, for any point \(u\) near \(\mathbf{C}\) and \(v \in \mathbf{C}\):
\[
    \langle u - \mathcal{P}_\mathbf{C}(u), v - \mathcal{P}_\mathbf{C}(u)  \rangle \leq \beta \| v - \mathcal{P}_\mathbf{C}(u)\|^2
\]
Above, we substitute:
\begin{align*}
    u &= \bm{x}_t' = \bm{x}_t + \gamma_ts_t \\
    v &= \bm{x}_t \\
    \mathcal{P}_\mathbf{C}(u) &= \bm{x}_{t-1}
\end{align*}

Now, expanding the inner product:

\begin{align*}
    \langle \bm{x}_{t-1}' - \bm{x}_{t-2}, \bm{x}_t' - \bm{x}_{t-1}' \rangle &= \langle \bm{x}_{t-1}' - \bm{x}_{t-2}, (\bm{x}_t + \gamma_ts_t) - (\bm{x}_{t-1} + \gamma_{t-1}s_{t-1}) \rangle \\
    &\leq \langle \bm{x}_{t-1}' - \bm{x}_{t-2}, (\bm{x}_t -  \bm{x}_{t-1}) + \gamma_t (s_t - s_{t-1})  \rangle\\
    &\leq \langle \bm{x}_{t-1}' - \bm{x}_{t-2}, (\bm{x}_t -  \bm{x}_{t-1}) \rangle + \langle \bm{x}_{t-1}' - \bm{x}_{t-2}, \gamma_t (s_t - s_{t-1}) \rangle
\end{align*}

and since \(\|s_t\| \leq G \;\;\;\forall t\): \(\langle s_{t+1}, s_t\rangle \leq \|s_{t+1}\|\|s_t\| \leq G^2\) so \(\langle s_{t-1}, s_t \rangle - \|s_{t+1} \|^2 \leq G^2\), and:
\begin{align}
    \langle \bm{x}_{t-1}' - \bm{x}_{t-2}, \gamma_t (s_t - s_{t-1}) \rangle \leq \gamma_t^2 G^2
    \tag{Bound B.2}
\end{align}

By applying Definition (B.2):
\begin{align*}
    \langle \bm{x}_{t-1}' - \bm{x}_{t-2}, \bm{x}_t' - \bm{x}_{t-1}'\rangle & = \langle \bm{x}_{t-1}' - \bm{x}_{t-2}, (\bm{x}_{t} - \bm{x}_t) + (\gamma_{t}s_{t} - \gamma_{t-1}s_{t-1}) \rangle \\
    &\leq \langle \bm{x}_{t-1}' - \bm{x}_{t-2}, (\bm{x}_t -  \bm{x}_{t-1}) \rangle 
\end{align*}

Therefore, by applying Bound (B.1) to the previous inequality and Bound (B.2) directly, Term B is upper bounded by:
\begin{align}
    \boxed{2 \:\langle\bm{x}_{t-1}'- \bm{x}_{t-2}, \bm{x}_{t}'-\bm{x}_{t-1}'\rangle \leq 8\beta\gamma_{t+1}^2G^2 + 2\gamma_t^2G^2}
    \tag{Bound B.3}
\end{align}

\paragraph{Decomposing Term C.}
Next, we derive a bound on Term C in Eq. (\(\star\)). As already shown,
\begin{align*}
    \|\bm{x}_t' - \bm{x}_{t-1}'\| &\leq 4 \gamma_t G,
\end{align*}
given:
\begin{align*}
    \bm{x}_t' - \bm{x}_{t-1}' &\leq \underbrace{(\bm{x}_t - \bm{x}_{t-1})}_{\leq 2\gamma_{t+1} G} +  \underbrace{\gamma_t(s_t - s_{t-1})}_{\leq 2 G}  \\
    &\leq 4 \gamma_{t+1} G
\end{align*}
Thus,
\begin{align*}
    \boxed{\frac{1}{\beta}\|\bm{x}_t' - \bm{x}_{t-1}'\|^2 \leq \frac{16}{\beta} \gamma_{t+1}^2 G^2}
    \tag{Bound C.1}    
\end{align*}

Combining bounds (B.3) and (C.1) into (\(\star\)), and recalling that  \(\gamma_{t+1} \geq \gamma_t\):
\[
    \text{dist}(\bm{x}_t', \mathbf{C})^2 \leq \underbrace{\text{dist}(\bm{x}_{t-1}', \mathbf{C})^2}_{d} + \underbrace{(8\beta + 2 + \frac{16}{\beta})}_{K}\gamma_{t+1}^2G^2
\]

Now, we rewrite Term A, which for ease of notation we will refer to as \(d\):
\[
    d^2 = (1 - 2\beta \gamma_{t+1})d^2 + 2 \beta \gamma_{t+1} d^2
\]
Thus:
\begin{align*}
    \text{dist}(\bm{x}_t', \mathbf{C})^2 &\leq d^2 - 2\beta\gamma_{t+1}d^2 + 2\beta\gamma_{t+1}d^2 + K\gamma_{t+1}^2G^2 \\
    &= (1 - 2\beta\gamma_{t+1})d^2 + \left[2\beta\gamma_{t+1}d^2 + K\gamma_{t+1}^2G^2 \right]
\end{align*}

Next, through Young's inequality, we simplify this expression further.

\begin{theorem}{\bf (Young's Inequality)}
\(\forall u,v \geq 0, \epsilon > 0\):
\[
    uv \leq \frac{u^2}{2\epsilon} + \frac{\epsilon v^2}{2}
\]
\end{theorem}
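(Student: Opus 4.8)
The plan is to obtain this weighted form of Young's inequality from the single elementary fact that every real square is non-negative, after absorbing the parameter $\epsilon$ by rescaling the two variables. The key observation is that the right-hand side $\tfrac{u^2}{2\epsilon} + \tfrac{\epsilon v^2}{2}$ is exactly the arithmetic mean of $\tfrac{u^2}{\epsilon}$ and $\epsilon v^2$, whose geometric mean is $\sqrt{u^2 v^2} = uv$ precisely because $u,v \ge 0$. Thus the statement is nothing more than the AM–GM inequality applied to the pair $\bigl(\tfrac{u^2}{\epsilon},\, \epsilon v^2\bigr)$, repackaged to match the notation used later in the feasibility proof.

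Concretely, first I would record the base inequality $(a-b)^2 \ge 0$, which rearranges to $ab \le \tfrac12\bigl(a^2 + b^2\bigr)$ for all real $a,b$. Second, I would introduce the substitution $a = u/\sqrt{\epsilon}$ and $b = \sqrt{\epsilon}\,v$, which is well defined since $\epsilon > 0$ makes $\sqrt{\epsilon}$ real and positive. Third, I would compute $ab = uv$, where the hypotheses $u,v \ge 0$ ensure the product is exactly $uv$ with no sign ambiguity, together with $a^2 = u^2/\epsilon$ and $b^2 = \epsilon v^2$. Substituting these back into $ab \le \tfrac12(a^2 + b^2)$ yields $uv \le \tfrac{u^2}{2\epsilon} + \tfrac{\epsilon v^2}{2}$, which is the claim.

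There is essentially no obstacle here; the only point requiring any care is selecting the scaling in the substitution so that the asymmetric $\epsilon$-weights on the two terms emerge correctly, and the mild use of the non-negativity hypotheses to guarantee $ab = uv$ rather than $\pm uv$. Equivalently, one may phrase the whole argument by completing the square: the difference $\tfrac{u^2}{2\epsilon} + \tfrac{\epsilon v^2}{2} - uv = \tfrac12\bigl(u/\sqrt{\epsilon} - \sqrt{\epsilon}\,v\bigr)^2 \ge 0$ is manifestly a scaled perfect square, which delivers the bound directly and additionally pins down the equality case $u = \epsilon v$. I would favor this last form, since it makes the non-negativity transparent in a single displayed line.
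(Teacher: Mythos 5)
Your proof is correct: the completing-the-square identity $\tfrac{u^2}{2\epsilon}+\tfrac{\epsilon v^2}{2}-uv=\tfrac12\bigl(u/\sqrt{\epsilon}-\sqrt{\epsilon}\,v\bigr)^2\ge 0$ immediately gives the claim, and your identification of the equality case $u=\epsilon v$ is also right. The paper states this inequality as a standard fact and supplies no proof of its own, so there is nothing to compare against; your argument is the canonical one (and, as you half-observe, the hypothesis $u,v\ge 0$ is not actually needed, since $ab=uv$ holds identically under your substitution and $ab\le\tfrac12(a^2+b^2)$ holds for all real $a,b$).
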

If we choose \(u = \sqrt{2\beta\gamma_{t+1}}d\), \(v = \sqrt{K}\gamma_{t+1}G\), and \(\epsilon = \frac{2\beta d^2}{k\gamma_{t+1}G^2}\), then
\begin{align*}
    uv  &= \sqrt{2\beta\gamma_{t+1}}d \times \sqrt{K}\gamma_{t+1}G \\
    &= \sqrt{2K}\gamma_{t+1}^\frac{3}{4}Gd
\end{align*}
Applying Young's Inequality:
\begin{align*}
    uv &\leq \frac{u^2}{2\epsilon} + \frac{\epsilon v^2}{2} \\
    &= \frac{2\beta\gamma_{t+1} d^2}{2(\frac{2\beta d^2}{K \gamma_{t+1}G^2})} + \frac{\epsilon v^2}{2} \\
    &= \frac{K\gamma_{t+1}^2 G^2}{2} + \frac{\epsilon v^2}{2} \\
    &= \frac{K\gamma_{t+1}^2 G^2}{2} + \left(\frac{1}{2} \times\frac{2\beta d^2}{K\gamma_{t+1} G^2} \times K\gamma_{t+1}^2 G^2\right) \\
    &= \frac{K\gamma_{t+1}^2 G^2}{2} + \beta\gamma_{t+1}d^2    
\end{align*}
Thus, 
\[
    \sqrt{2K}\gamma_{t+1}^\frac{3}{4} G d \leq \frac{K\gamma_{t+1}^2 G^2}{2} + \beta\gamma_{t+1}d^2  
\]

Finally, taken altogether:
\begin{align*}
    2\beta\gamma_{t+1} d^2 + K \gamma_{t+1}^2 G^2 &\leq \beta \gamma_{t+1} d^2 + \left(\frac{K\gamma_{t+1}^2G^2}{2} + \beta\gamma_{t+1} d^2 \right) \\
    &= 2\beta \gamma_{t+1} d^2 + \frac{K}{2}\gamma_{t+1}^2G^2 
\end{align*}
Since \(\gamma_{t+1} \leq \frac{\beta}{2G^2}\), then
\begin{align*}
    \frac{K}{2}\gamma_{t+1}^2G^2 &\leq \frac{1}{2}\left(8\beta + 2 + \frac{16}{\beta}\right)\frac{\beta^2}{4G^2} =\mathcal{O}(\beta^3)
\end{align*}
which is bounded by \(\gamma_{t+1}^2G^2\) for all \(\beta \geq 0\).

Thus,
\[
    2\beta\gamma_{t+1} d^2 + K \gamma_{t+1}^2 G^2 \leq 2\beta\gamma_{t+1} d^2 + \gamma_{t+1}^2G^2.
\]

Finally, by substitution:
\[
    \boxed{\text{dist}(\bm{x}_t', \mathbf{C})^2 \leq (1-2\beta\gamma_{t+1})\text{dist}(\bm{x}_{t-1}', \mathbf{C})^2 + \underbrace{\gamma_{t+1}^2G^2}_{\mathcal{O}(\beta^3)}}
\]

\end{proof}

\subsection{Proof of Theorem \ref{thm:cadm-convergence} (Part 2): Non-Asymptotic Feasibility in the Latent Space}

\begin{proof}

The previous proof can be naturally extended to latent space models under smoothness assumptions:
\begin{assumption}
\label{ass:latent}
\leavevmode
\begin{enumerate}[label=(\arabic*)]
    \item \textbf{(Decoder regularity)} $\mathcal D:\mathbb{R}^{d_z}\!\to\!\mathbb{R}^{d_x}$ is $\ell$-Lipschitz and $\|\nabla\!\mathcal D(\mathbf z)\|_{\mathrm{op}}\le\ell$ for all $\mathbf z$.
    \item \textbf{(Constraint regularity)} $g:\mathbb{R}^{d_x}\!\to\!\mathbb{R}^{m}$ is $L$-smooth and $\nabla g(\mathbf x)$ has full row rank on $\mathbf C\!=\!\{\mathbf x:g(\mathbf x)=\mathbf 0\}$, hence $\mathbf C$ is prox-regular.
\end{enumerate}
\end{assumption}

Let
\(
    \mathbf x_t' \;=\;\mathcal D(\mathbf z_t')
\)
be the decoded pre-projection iterate.
Prox-regularity implies
\begin{align*}
    \bigl\|\mathbf x_{t+1}-\mathcal{P}_{\mathbf C}(\mathbf x_t')\bigr\|
    \;\le\;
    (1-\beta)\,
    \bigl\|\mathbf x_t'-\mathcal{P}_{\mathbf C}(\mathbf x_t')\bigr\|
    \;=\;
    (1-\beta)\,\text{dist}(\mathbf x_t',\mathbf C),
\end{align*}
where
\(
    \mathbf x_{t+1}
    =\mathcal{P}_{\mathbf C}(\mathbf x_t')
\)
is the data-space projection produced by the algorithm.

Because $\mathcal D$ is $\ell$-Lipschitz we have
\[
    \text{dist}(\mathbf x_t',\mathbf C)
    =\text{dist}\bigl(\mathcal D(\mathbf z_t'),\mathbf C\bigr)
    \le
    \ell\,
    \text{dist}\bigl(\mathbf z_t',\mathcal D^{-1}(\mathbf C)\bigr),
\]
where the inverse image is
\(
    \mathcal D^{-1}(\mathbf C)
    =\bigl\{\mathbf z : g(\mathcal D(\mathbf z))=\mathbf 0\bigr\}.
\)
Using the same prox-regularity argument (now applied to the level set of $g\!\circ\!\mathcal D$),
Thus, by the chain-rule and prox-regularity:
\[
     \|\nabla(g \circ \mathcal{D})(\mathbf{z}) - \nabla(g \circ \mathcal{D})(\mathbf{z'})\| \leq \ell L\|\mathbf{z} - \mathbf{z}'\|,
\]
which shows that for a projection in latent space, the contraction factor \((1 -\beta)\) is replaced with
\[
    1-\beta':=1-\beta/(\ell L).
\]
Therefore, denoting by \(\mathbf z_{t+1}'\) the latent iterate after one full update but \emph{before} decoding, we obtain the fundamental recursion
\begin{equation}
\label{eq:latent-recursion}
    \boxed{\text{dist}\!\bigl(\mathcal D(\mathbf z_t'),\mathbf C\bigr)^{2}
    \;\le\;
    (1-2\beta'\gamma_{t+1})\,
    \text{dist}\!\bigl(\mathcal D(\mathbf z_{t+1}'),\mathbf C\bigr)^{2}
    +\gamma_{t+1}^{2}G^{2},
    \tag{non-asymptotic feasibility}}
\end{equation}
where \( G\) is the same (finite) uniform bound used in Part 1.
\end{proof}

\subsection{Proof of Theorem \ref{thm:cadm-convergence-fid}: Fidelity}
\begin{proof}

Finally, we prove the fidelity bound:
\begin{align*}
\boxed{
      \mathrm{KL}\bigl(q(\mathbf{z}_{t-1})\,\|\,p_{\mathrm{data}}\bigr)
  \le
  \mathrm{KL}\bigl(q(\mathbf{z}_{t})\|p_{\mathrm{data}}\bigr)
            +\gamma_{t} G^{2}}
\end{align*}

We begin by assuming feasibility of the training set \(p_\mathrm{data} \subseteq \mathbf{C}\). Provided this holds, but the data-processing inequality,
\begin{align*}    
    \mathrm{KL}\bigl(q(\mathbf{z}_{t-1})\,\|\,p_{\mathrm{data}}\bigr) &= \mathrm{KL}\bigl(\mathcal{P}_{\mathbf{C}}(q(\mathbf{z}_{t}'))\,\|\,\mathcal{P}_{\mathbf{C}}(p_{\mathrm{data}})\bigr) \\
    &= \mathrm{KL}\bigl(\mathcal{P}_{\mathbf{C}}(q(\mathbf{z}_{t}'))\,\|\,p_{\mathrm{data}}\bigr) \\
    &\leq \mathrm{KL}\bigl(q(\mathbf{z}_{t}')\,\|\,p_{\mathrm{data}}\bigr)
\end{align*}
This result shows that since the projection operator is a deterministic, measurable mapping, the \(\mathrm{KL}\) will not be increased by this operation \emph{iff} the training set is a subset of the constraint set (all samples in the distribution fall in \(\mathbf{C}\)). Hence, subsequently we will ignore the presence of this operator in obtaining our bound.

Next, consider that while in continuous time the application of the true score results in a strictly monotonically decreasing \(\mathrm{KL}\), Euler discretizations introduce a bounded error.

Consider that \(\mathbf{z}_t' = \mathbf{z}_t + \gamma_t s_t\). Each denoising step shifts the vector \(\mathbf{z}_t\) by a vector norm of at most \(\gamma_t G\), provided that \(\|s_t\| \leq G\).

\begin{lemma}
    \label{lemma:fidelity}
    Suppose \(\|s_t\| \leq G\) everywhere, Then,
    \[
        \mathrm{KL}\bigl(q(\mathbf{z}_{t}')\,\|\,p_{\mathrm{data}}\bigr) - \mathrm{KL}\bigl(q(\mathbf{z}_{t})\,\|\,p_{\mathrm{data}}\bigr) \leq \gamma_t G^2.
    \]
\end{lemma}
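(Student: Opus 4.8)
The plan is to interpolate between $q(\mathbf{z}_t)$ and $q(\mathbf{z}_t')$ along the Euler step and to control the rate of change of the divergence. Writing $s_t=\nabla\log q(\mathbf{z}_t)$ for the (now frozen) score field of the current law $q_0:=q(\mathbf{z}_t)$, I define the curve of measures $q_\tau := (T_\tau)_{\#}q_0$ with $T_\tau(\mathbf{z})=\mathbf{z}+\tau s_t(\mathbf{z})$ for $\tau\in[0,\gamma_t]$, so that $q_0=q(\mathbf{z}_t)$ and $q_{\gamma_t}=q(\mathbf{z}_t')$. Since the displacement satisfies $\|T_\tau(\mathbf{z})-\mathbf{z}\|=\tau\|s_t\|\le\gamma_t G$ by hypothesis, every sample is moved by at most $\gamma_t G$, and this is the quantitative handle the bound must exploit.

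The first step is to differentiate $\mathrm{KL}(q_\tau\|p_{\mathrm{data}})$ in $\tau$. Using the continuity equation $\partial_\tau q_\tau=-\nabla\!\cdot(q_\tau s_t)$ induced by the transport $T_\tau$ and integrating by parts, I obtain
\[
  \frac{d}{d\tau}\,\mathrm{KL}(q_\tau\|p_{\mathrm{data}})=\mathbb{E}_{q_\tau}\!\bigl[\,s_t\cdot\nabla\log\tfrac{q_\tau}{p_{\mathrm{data}}}\,\bigr].
\]
Equivalently, via the change-of-variables identity $\log q_\tau(T_\tau(\mathbf{z}))=\log q_0(\mathbf{z})-\log|\det\nabla T_\tau(\mathbf{z})|$, the entropy part of the divergence contributes the log-Jacobian $\mathbb{E}_{q_0}[\log|\det(I+\tau\nabla s_t)|]$ and the cross-entropy part contributes a Taylor term in $s_t\cdot\nabla\log p_{\mathrm{data}}$; both routes isolate the same first-order object, so either can serve as the backbone of the argument.

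Next I would decompose this derivative into the dissipation of the exact score flow --- the contribution the text invokes as ``strictly monotonically decreasing'' and which is non-positive --- plus the remainder that arises precisely because the discrete step uses the frozen field $s_t=\nabla\log q_0$ rather than the instantaneous $\nabla\log q_\tau$. The monotone part can only lower the divergence, so it may be dropped from an upper bound, and the remainder is exactly what one pays for the Euler discretization. Bounding this remainder by $\|s_t\|^2\le G^2$ uniformly in $\tau$ and integrating over $\int_0^{\gamma_t}d\tau$ then yields the claimed increase of at most $\gamma_t G^2$. I would also note that the projection step has already been removed from the analysis by the data-processing inequality argument preceding the lemma, so only this single transported step must be accounted for.

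The hard part will be making that decomposition rigorous rather than heuristic: the raw derivative $\mathbb{E}_{q_\tau}[s_t\cdot\nabla\log(q_\tau/p_{\mathrm{data}})]$ is not sign-definite, and a naive Cauchy--Schwarz estimate gives $G\sqrt{I(q_\tau\|p_{\mathrm{data}})}$ with a relative Fisher information that need not be finite. Obtaining the clean $\gamma_t G^2$ therefore hinges on the indefinite cross-term being absorbed by the continuous-time dissipation, leaving a remainder governed solely by the drift magnitude $G$; establishing this cancellation --- or, failing that, supplying an additional regularity assumption on $\nabla\log p_{\mathrm{data}}$ to tame the cross-entropy term directly --- is the crux of the proof and the step I would scrutinize most carefully.
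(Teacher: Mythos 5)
Your proposal does not close the proof: you set up the transport interpolation and the derivative $\frac{d}{d\tau}\mathrm{KL}(q_\tau\|p_{\mathrm{data}})=\mathbb{E}_{q_\tau}[s_t\cdot\nabla\log(q_\tau/p_{\mathrm{data}})]$ correctly, but you then explicitly leave open the crux --- showing that the indefinite cross-term is absorbed by the continuous-time dissipation so that only an $O(G^2)$ remainder survives. That cancellation is not established, and your own diagnosis is right that a naive Cauchy--Schwarz estimate produces $G\sqrt{I(q_\tau\|p_{\mathrm{data}})}$, which is not controlled by $G^2$ without further hypotheses. As written, the argument is a program, not a proof.

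The paper does not attempt this decomposition at all; it takes exactly the fallback you name in your last sentence. It additionally assumes that the target score is bounded, $\|\nabla\log p_{\mathrm{data}}\|\le G$ (stated in the proof as ``we assume that the true score is bounded by a comparable norm''), and then applies a one-step Taylor/mean-value bound pointwise along the displacement: since $\|\gamma_t s_t\|\le\gamma_t G$, one gets $\log p_{\mathrm{data}}(\mathbf{z}_t)-\log p_{\mathrm{data}}(\mathbf{z}_t+\gamma_t s_t)\le\|\gamma_t s_t\|\,\|\nabla\log p_{\mathrm{data}}(\xi)\|\le\gamma_t G^2$ for some $\xi$ on the segment, and integrating against $q(\mathbf{z}_t)$ gives the lemma. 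No dissipation term, no Fisher information, and no sign argument appear. Two remarks on the comparison: first, the extra assumption on $\nabla\log p_{\mathrm{data}}$ is doing essential work in the paper and is not part of the lemma's stated hypotheses, so your instinct that something beyond $\|s_t\|\le G$ is needed is correct. Second, your change-of-variables bookkeeping surfaces the log-Jacobian term $\mathbb{E}_{q_0}[\log|\det(I+\tau\nabla s_t)|]$ from the entropy part of the divergence; the paper's displayed identity for the KL difference silently drops this contribution, so your setup is in that respect more careful than the proof it is being compared against, even though neither argument as given fully accounts for it.
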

\begin{proof}[Proof of Lemma~\ref{lemma:fidelity}]
    
\[
    \mathrm{KL}\bigl(q(\mathbf{z}_{t})\,\|\,p_{\mathrm{data}}\bigr)  = \int q(\mathbf{z}_{t}) \log \frac{q(\mathbf{z}_{t})}{p_{\mathrm{data}}} dx,
\]
and
\[
    \mathrm{KL}\bigl(q(\mathbf{z}_{t}')\,\|\,p_{\mathrm{data}}\bigr)  = \int q(\mathbf{z}_{t}') \log \frac{q(\mathbf{z}_{t}')}{p_{\mathrm{data}}} dx,
\]
Therefore,
\[
    \mathrm{KL}\bigl(q(\mathbf{z}_{t}')\,\|\,p_{\mathrm{data}}\bigr) - \mathrm{KL}\bigl(q(\mathbf{z}_{t})\,\|\,p_{\mathrm{data}}\bigr)  = \int q(\mathbf{z}_{t}) \left[ \log q(\mathbf{z}_{t}) - \log q(\mathbf{z}_{t} + \gamma_t s_t) \right] dx,
\]

Since \(\|s_t\| \leq G\), we assume that the true score is bounded by a comparable norm \(\|\nabla \log p(\mathbf{z}_t)\| \leq G\). A one-step Taylor bound gives:
\[
    \log p (\mathbf{z}_t) - \log p(\mathbf{z}_{t} + \gamma_t s_t) \leq \|\gamma_ts_t\| \cdot\|\nabla \log p(\xi)\| \leq \gamma_t G^2
\]
for some \(\xi\) on the line segment between \(\mathbf{z}_t\) and \(\mathbf{z}_t'\). Integrating over \(q(\mathbf{z_t)}\) gives
\[
    \boxed{\mathrm{KL}\bigl(q(\mathbf{z}_{t}')\,\|\,p_{\mathrm{data}}\bigr) - \mathrm{KL}\bigl(q(\mathbf{z}_{t})\,\|\,p_{\mathrm{data}}\bigr) \leq \gamma_t G^2}
\]

\end{proof}

Lemma~\ref{lemma:fidelity} can then be directly applied to derive our bound:
\[
\boxed{
      \mathrm{KL}\bigl(q(\mathbf{z}_{t-1})\,\|\,p_{\mathrm{data}}\bigr)
  \le
  \mathrm{KL}\bigl(q(\mathbf{z}_{t})\|p_{\mathrm{data}}\bigr)
            +\gamma_{t} G^{2}}
\]

\end{proof}

\end{document}